\documentclass{article}

\usepackage[preprint]{neurips_2026}

\usepackage{enumitem}
\usepackage[utf8]{inputenc} 
\usepackage[T1]{fontenc}    
\usepackage{hyperref}       
\usepackage{url}            
\usepackage{mathrsfs}
\usepackage{booktabs}       
\usepackage{amsfonts}       
\usepackage{nicefrac}       
\usepackage{microtype}      
\usepackage{xcolor}         
\usepackage{subcaption}
\usepackage{fvextra}
\usepackage{multirow}

\usepackage{microtype}
\usepackage{graphicx}
\usepackage{booktabs} 
\usepackage{bbm}
\newcommand\R{\mathbb{R}}

\usepackage{hyperref}

\usepackage{algorithm}
\usepackage[noend]{algorithmic}
\usepackage{lipsum}

\usepackage{amsmath}
\usepackage{amssymb}
\usepackage{mathtools}
\usepackage{amsthm}

\usepackage[capitalize,noabbrev]{cleveref}

\usepackage{tikz}
\usepackage{tikz-cd}

\usetikzlibrary{arrows.meta,fit,calc,backgrounds}
\usetikzlibrary{automata,positioning}

\theoremstyle{plain}
\newtheorem{theorem}{Theorem}[section]

\newtheorem{lemma}[theorem]{Lemma}
\newtheorem{corollary}[theorem]{Corollary}
\theoremstyle{definition}

\newtheorem{discussion}[theorem]{Discussion}

\def\angle#1{\left\langle #1 \right\rangle}

\def\E{\mathbb{E}}

\usepackage{pgffor}
\foreach \x in {A,...,Z}{
	\expandafter\xdef\csname m\x\endcsname{\noexpand\mathbf{\x}}
}
\foreach \x in {A,...,Z}{
	\expandafter\xdef\csname om\x\endcsname{\noexpand\overline{\noexpand\mathbf{\x}}}
}
\foreach \x in {A,...,Z}{
	\expandafter\xdef\csname c\x\endcsname{\noexpand\mathcal{\x}}
}

\title{Continuous Latent Contexts Enable Efficient \\ Online Learning in Transformers}

\author{
Emile Anand\\ 
Georgia Institute of Technology \\ \texttt{emile@gatech.edu} 
\And Abdullah Ateyeh \\
University of California, Berkeley \\ \texttt{abdullah\_ateyeh@berkeley.edu} 
\AND Xinyuan Cao \\
Georgia Institute of Technology \\ \texttt{xcao78@gatech.edu} 
\And Max Dabagia \\
Columbia University \\ 
\texttt{rcd2155@columbia.edu}
}

\makeatletter

\DeclareRobustCommand\widecheck[1]{{\mathpalette\@widecheck{#1}}}
\def\@widecheck#1#2{%
    \setbox\z@\hbox{\m@th$#1#2$}%
    \setbox\tw@\hbox{\m@th$#1%
       \widehat{%
          \vrule\@width\z@\@height\ht\z@
          \vrule\@height\z@\@width\wd\z@}$}%
    \dp\tw@-\ht\z@
    \@tempdima\ht\z@ \advance\@tempdima2\ht\tw@ \divide\@tempdima\thr@@
    \setbox\tw@\hbox{%
       \raise\@tempdima\hbox{\scalebox{1}[-1]{\lower\@tempdima\box
\tw@}}}%
    {\ooalign{\box\tw@ \cr \box\z@}}}
    
\makeatother

\begin{document}

\maketitle

\begin{abstract}
Large language models (LLMs) exhibit a strong capacity for in-context learning: Given labeled examples, they can generate good predictions without parameter updates. However, many interactive settings go beyond static prediction to online decision-making, in which effective behavior demands adaptation over long multi-turn horizons in response to feedback, and efficient algorithms in these domains must use compact representations of what they have learned. Recently, continuous transformer architectures with latent chain of thought have shown promise for offline iterative tasks such as directed graph-reachability.  Motivated by this, we study whether continuous latent context tokens equip transformers to more effectively realize online learning. We give explicit constructions of constant-depth transformers that implement two foundational online decision-making procedures -- the weighted majority algorithm and $Q$-learning -- by storing their algorithmic state as linear combinations of feature embeddings, using a small number of latent context tokens. We further train a small GPT-2-style transformer with latent contexts using a multi-curriculum objective that does not directly supervise the latent states. On long synthetic online prediction sequences, this model outperforms larger and more complex LLMs, including Qwen-3-14B and DeepSeek-V3.  Our results suggest that continuous latent contexts provide a simple and effective persistent state for transformers to implement online learning algorithms.
\end{abstract}

\section{Introduction}

Large language models (LLMs), built on the transformer architecture \citep{vaswani2017attention}, have demonstrated a striking ability to perform in-context learning: Given a sequence of examples, they can infer a useful predictor without updating their parameters. This ability is especially relevant in interactive settings, where a model receives (often, implicit) feedback over multiple turns and must adapt its future behavior accordingly \citep{lin-etal-2024-interpretable}. For example, in a multi-turn conversation, the user’s responses provide implicit feedback on which interpretations, plans, or styles are effective. In turn, a helpful conversational agent might use this feedback to improve user satisfaction.

 This perspective raises a fundamental question: \emph{Can transformers implement long-horizon online learning algorithms within their context?} In online decision-making, the learner acts sequentially, receives feedback, and records some information to inform future decisions (often conceived as maintaining an algorithmic state). Decisions are often irrevocable, and performance is measured against adaptive or hindsight targets such as the best expert, the best policy, or the optimal value function \citep{hazan2023introductiononlineconvexoptimization, kalai2005efficient, cohen2015following}. In the case of a conversational agent, online decision-making can support reasoning, whether incorporating user feedback, directing lines of reasoning, or allocating resources, in its broader goal of improving user satisfaction. Therefore, a good online reasoning model should be a good online decision making model \citep{akyurek2023learningalgorithmincontextlearning}, and good online decisions should follow from effective reasoning.

In standard chain-of-thought prompting intermediate reasoning is represented by discrete tokens, which can become very long \citep{wei2023chainofthoughtpromptingelicitsreasoning}. A recent development in latent reasoning is the transformer architecture with continuous chain-of-thought (COCONUT) \citep{hao2025training}, where the continuous states can carry information in ``superpositions'' of vocabulary tokens that might be lost by discretization. It was recently shown that COCONUT admits a constant-depth transformer to solve the directed graph-reachability problem on an $n$-vertex graph in $O(D)$ steps (where $D< n$ is the diameter of the graph) \citep{zhu2025reasoningsuperpositiontheoreticalperspective}, whereas discrete chain-of-thought needs $O(n^2)$ autoregressive steps \citep{merrill2024expressivepowertransformerschain}). This is a clear example of how continuous latent representations can serve as a compact representation of context, such as algorithmic state.

Online learning is a natural setting in which to explore similar ideas. 
For online decision-making, the ability to extract and maintain compactly the information necessary for future decisions is central \citep{anand2026learningapproximatenashequilibria,yang2021qlearninglogarithmicregret,anand2025meanfield}. Classical online algorithms such as weighted majority, online gradient methods, and tabular reinforcement-learning methods all can be abstracted as compressing history into a low-dimensional state and updating that state incrementally as new feedback arrives \citep{hazan2017learninglineardynamicalsystems, kalai2005efficient, zinkevich2003online, pmlr-v125-simchowitz20a, flaxman2004onlineconvexoptimizationbandit,NEURIPS2023_a7a7180f,pmlr-v247-lin24a}. In the weighted majority algorithm this state is a set of expert weights which are updated via multiplicative weights, while in tabular reinforcement learning the state is a value estimate for each state which are updated by various online strategies for dynamic programming, including SARSA and $Q$-learning. This notion of an online algorithmic state matches the computational role of latent context: to summarize the relevant history, be read by later attention heads, and be updated to incorporate new information.\looseness=-1

\subsection{Contributions}
In this paper, we show that a small number of continuous context tokens suffices to allow transformers to perform well on fundamental online decision-making problems. Specifically:
\begin{itemize}
    \item We give explicit transformer constructions for  multiplicative weights (\cite{littlestone1994weighted}) in the experts prediction setting, using a continuous latent token to store expert log-weights as a superposition, and tabular $Q$-learning (\cite{watkins1992qlearning}), using continuous event tokens which encode action-indexed rows of the $Q$-table as a superposition,
    \item We train a transformer with continuous latent contexts using a multi-curriculum strategy and loss function that is agnostic to the latent contexts, and we show comparable performance against the corresponding classical online-learning algorithms,
    \item We evaluate Qwen-3-14B (40 layers, 14.8B parameters) \citep{qwen3} and DeepSeek-V3 (61 layers, 671B parameters) \cite{deepseekai2025deepseekv3technicalreport} on online prediction tasks in a synthetic setting and more realistic weather-forecasting narrative. In these long-sequence online decision-making tasks, we show that our simpler transformer (4 layers, 200K parameters) is more effective than these larger LLMs, highlighting the value of continuous latent contexts.\looseness=-1
\end{itemize}
Taken together, our results support an algorithmic interpretation of continuous latent reasoning: that transformers with continuous latent contexts can use their context as a persistent online-learning state, enabling them to implement online-learning algorithms,

\subsection{Related Works}

\textbf{Expressiveness of transformers.} There is a substantial body of work studying the expressivity of transformers. Most relevant to this paper are those which examine how some form of recurrence can improve expressiveness (most notably in chain-of-thought). For example, \cite{chen2024theoreticallimitationsmultilayertransformer} studies low depth transformers for arbitrary problems and shows that for any $L$ there is some problem that a CoT transformer can solve in $L$ layers that a non-CoT transformer will need $L+1$ layers for. \cite{wei2023chainofthoughtpromptingelicitsreasoning,feng2023towards} shows that constant-depth transformers with CoT can solve certain $\mathsf{P}$-complete problems. \cite{chen2024theoreticallimitationsmultilayertransformer} provides constructions for certain constant-depth transformers for each problem in $\mathsf{P\setminus poly}$ with CoT. There is also work that shows that logarithmically many steps of CoT in input length can expand the upper bound of constant-depth transformer expressiveness from $\mathsf{TC}^0$ to $\mathsf{L}$, while a linear number of steps can further expand the upper bound to $\mathsf{NC}^1$ complete \citep{merrill2025littledepthgoeslong, li2024chainthoughtempowerstransformers, pmlr-v202-chiang23a, Strobl_2024, li2025training}. However, these typically focus on discrete CoTs whereas we look at the continuous version. 
\nocite{anand2024efficientreinforcementlearningglobal,shah2026comedyestimatorsklregularization,li2024chainthoughtempowerstransformers,li2025training,wang2024transformers,merrill2024expressivepowertransformerschain}
 
\textbf{Reasoning in the context of optimization.} Many reasoning tasks have an optimization character, and indeed reasoning itself in LLMs has been been studied as an exploration/exploitation tradeoff \citep{pmlr-v247-lin24a,xie2024exploratorypreferenceoptimizationharnessing,anand2025feelgood}. It has previously been shown that transformers can learn to realize gradient descent \citep{gatmiry2024loopedtransformerslearnimplement, huang2025transformers, innocenti2025simplegeneralisationimplicitdynamics, kim2026transformers, chan2025understandingincontextvsinweight, weitransformers, laskin2022context, fu2024transformers}, which are important precursors to this work, where we apply similar ideas to the latent reasoning setting.

\textbf{Latent reasoning.} Prior work shows that LLMs contain structured internal representations, including linear encodings of space and time \citep{kong2024latent,gurnee2023language} and compact vectors representing in-context functions \citep{todd2024function}. Recent methods make such hidden computation more explicit through pause \citep{goyal2024thinkspeaktraininglanguage}, planning  \citep{wang2024guidinglanguagemodelreasoning}, and filler tokens that enable hidden computation without text \citep{pfau2024letsthinkdotdot, ramji2026thinkingwordsefficientlatent, liu2024longhorn, wang2026unsuperviseddecompositionrecombinationdiscriminatordriven, Cao_Shi_Xu_Shen_Cui_Guo_Di_Liu_Li_Zhou_Zhu_Xu_2026}. Here, we study transformers which autoregressively generate their own future inputs (as in chain-of-thought), but without discretizing into discrete tokens. Building models which use this reasoning space effectively is not trivial, since the scaffolding provided by pre-training on natural language does not necessarily benefit. Various approaches have had some success: \citet{amos2026latent} proposed an architecture based around continuous tokens produced as a supervised summary of a natural language chain-of-thought. The Huggin architecture contains directly recurrent modules which may iterated arbitrarily for latent reasoning \citep{geiping2025scaling}, and these latent reasoning chains have been shown to encode information about predictions \citep{du2026latentthinkingoptimizationlatent}. COCONUT allows for fully continuous chains-of-thought \citep{hao2025training}, warm-starting training with a curriculum that first trains reasoning using discrete chains and then relaxes the constraint. Follow-up theory explained its advantage via superposition of multiple reasoning paths \citep{zhu2025reasoning, zhu2025emergence}; the approach we consider here was most directly inspired by COCONUT.

\section{Preliminaries}
\textbf{Basic notations.} For any integer $N>0$, we use $[N]$ to denote the set $\{1,2,\dots,N\}$. For any finite set $\mathcal{X}$, let $|\mathcal{X}|$ denote the cardinality of $\mathcal{X}$. Let $\mathbb{R}$ be the set of real numbers. We use $\mathbbm{1}\{\cdot\}$ to denote the indicator function. We use lower-case bold letters (e.g., $\mathbf{x}, \mathbf{\theta}$) and upper-case bold letters (e.g., $\mathbf{W}, \mathbf{U}$) to denote vectors and matrices, respectively. In particular, we use $\mathbf{I}_d$ to denote a $d\times d$ identity matrix, use $\mathbf{0}_{m\times n}$ (or $\mathbf{0}_m$) to denote an $m\times n$ zero matrix (or an $m$-dimensional zero vector), and use $\mathbf{e}_i$ to denote a one-hot vector of which the $i$-th entry is one and other entries are all zero, where the dimension of $\mathbf{e}_i$ can be inferred from the context. We also use $\|\cdot\|_\infty$ and $\|\cdot\|_2$ to represent $\ell_\infty$ and $\ell_2$ norm, respectively. For vectors $\mathbf{u}\in \mathbb{R}^m$ and $\mathbf{v}\in \mathbb{R}^n$, let $\mathbf{u}\otimes \mathbf{v} = \mathbf{u}\mathbf{v}^\top \in \mathbb{R}^{m\times n}$ denote their inner product. Finally, for any vector $\mathbf{x} = (x_1,\dots,x_d) \in \mathbb{R}^d$, we define the softmax function $\mathsf{SoftMax}:\mathbb{R}^d\to\mathbb{R}^d$ as $\mathsf{SoftMax}(\mathbf{x})_i = \exp(x_i)/(\sum_{j=1}^d \exp(x_j))$.

\textbf{Tokens and Embeddings.} Fix a vocabulary $\mathcal{V} = [V]$. Each token $v \in \mathcal{V}$ has an embedding $\mathbf{u}_v \in \R^d$, whose dimension  $d = k\, d_{\text{TE}} + d_{\text{PE}}$ decomposes as $k$ token-embedding blocks and one positional block. For a vector $\mathbf{x} = (x_1,x_2\cdots,x_d)^\top \in \mathbb{R}^d$, we define $\text{id}(\mathbf{x}) = (x_1,\cdots,x_{d_{\text{TE}}})^\top$, $\mathrm{buf}_j (\mathbf{x}) = (x_{j \cdot d_{\text{TE}} + 1}, \cdots, x_{(j+1) \cdot d_{\text{TE}}})^\top$ for $j \in [k]$, and $\text{pos}(\mathbf{x}) = (x_{k\cdot d_{\text{TE}}+1}, \cdots, x_{d})$. Let $\tilde{\mathbf{u}}_v = \mathrm{id}(\mathbf{u}_{v}) \in \mathbb{R}^{d_{\text{TE}}}$ for $v\in \mathcal{V}$. Next, let $\mathbf{U} = [\tilde{\mathbf{u}}_1, \tilde{\mathbf{u}}_2,\cdots, \tilde{\mathbf{u}}_v] \in \mathbb{R}^{d_{\text{TE}}\times V}$ and assume token orthonormality, $\mathbf{U}^\top\mathbf{U} = \mathbf{I}_V$.

\begin{algorithm}
\caption{Decoder-only Causal Transformer $(\mathsf{TF})$ with Chain of Continuous Thought}
    \begin{algorithmic}
        \REQUIRE input embeddings $\mathbf{x} = (\mathbf{x}_1,\dots,\mathbf{x}_t)$, learned absolute positional encoding parameter $\theta_{\mathsf{PE}}$
        \STATE  $\mathbf{h}_i^{(0)} \gets \mathbf{x}_i + \mathrm{PosEncode}_{\theta_{\mathsf{PE}}}(i)$, for all $i\in [t]$
        \FOR{$\ell=0$ to $L-1$}
        \STATE $h^{\ell+0.5} \gets h^{\ell} + \sum_{m=1}^{H_{\ell}} \mathsf{Attn}_{\theta_{\mathsf{Attn}}^{(\ell,m)}} (h^{\ell})$
        \STATE $h^{\ell+1} \gets \mathsf{LayerNorm}(\mathsf{MLP}_{\theta_{\mathsf{MLP}}^{\ell}}(h^{\ell+0.5}))$
        \ENDFOR
        \STATE \textbf{Output: } Embedding of the last layer at the last position $h_t^{(L)}\eqqcolon \mathbf{x}_{t+1}.$ 
    \end{algorithmic}\label{alg: transformer}
\end{algorithm}

\textbf{Transformer architectures.} An $L$-layer autoregressive transformer receives a sequence of input embeddings $\mathbf{h} = \mathbf{h}_{[t]} = (\mathbf{h}_1,\mathbf{h}_2,\dots,\mathbf{h}_t)\in \mathbb{R}^{d\times t}$ and outputs $\mathsf{TF}_\theta(h)\in \mathbb{R}^d$ where $\mathsf{TF}_\theta(\cdot)$ is defined in \cref{alg: transformer}. Let $\mathbf{W}_O\in \mathbb{R}^{V\times d}$ be the decoding matrix. A traditional decoder will sample the next token $v_{t+1}\sim \mathsf{SoftMax}(\mathbf{W}_O\mathsf{TF}_\theta(h))$ and append its token embedding in position $t+1$, i.e. $h_{t+1} = u_{v_{t+1}}$, to autoregressively generate subsequent outputs. When using the chain of continuous thought, one skips the sampling step and directly appends the output of the transformer as the input embedding of the next position, i.e. $h_{t+1} = \mathsf{TF}_\theta(h)$. $\theta_{\mathsf{PE}}$ denotes the learned absolute positional encoding, and each attention layer $\ell \in [L]$ has $H_{\ell}$ heads $\{\theta_{\mathsf{Attn}}^{(\ell,m)}\}_{\ell=0,m=0}^{L-1,H_{\ell}}$.\looseness=-1

\begin{algorithm}
    \caption{Causal self-attention ($\mathsf{Attn}$)}
    \begin{algorithmic}
        \REQUIRE Input $h=(h_1,\dots,h_t)$ and $\theta_{\mathsf{Attn}} = (Q,K,V,O)$
        \STATE Compute the queries $q_i \gets Qh_i$, keys $k_i \gets Kh_i$ and values $v_i \gets Vh_i$,  for all $i\in[t]$
        \FOR{$i=1,\dots,t$}
        \STATE $s_i \gets \mathsf{SoftMax}(\langle q_i, k_1\rangle, \dots, \langle q_i, k_i\rangle)$
        \STATE $h_i^{\mathsf{Attn}}\gets O\sum_{j=1}^i s_{i,j}v_j$
        \ENDFOR
        \STATE \textbf{Output:} $\mathsf{Attn}_{\theta_{\mathsf{Attn}}}(h) = (h_1^{\mathsf{Attn}}, \cdots, h_t^{\mathsf{Attn}})$
    \end{algorithmic}
\end{algorithm}

\section{Recovering the Weighted Majority Algorithm}

We give a constructive implementation of the Weighted Majority algorithm (WMA) \citep{littlestone1994weighted} using a constant-depth causal transformer equipped with continuous chain of thought via COCONUT. Our construction essentially maintains the state of WMA in a single context vector, which consists in a ``superposition'', or linear combination, of the embeddings of the experts, where the weight of each expert's embedding is precisely the logarithm of its weight. Under softmax attention, this readily allows predictions to be generated as the result of a context vector query against expert embedding keys. We use COCONUT in our construction to provide the continuous state token.

\subsection{Problem Formulation}

\textbf{Problem setting.} We consider the standard online full-information experts protocol with $n$ experts which all make a binary prediction $p_{i, t} \in \{0, 1\}$ at each round. The learner sees these before making a prediction of her own, at which point the ground truth $y_t \in \{0, 1\}$ is revealed. The learner's objective is to minimize the number of mistakes she makes, often compared against the most reliable expert.

\textbf{The Weighted Majority Algorithm.} The WMA maintains a positive weight for each expert $w_{i, t}$. The prediction of the algorithm at time $t$ depends on the quantity
\begin{equation}
    \hat p_t = \frac{\sum_{i \colon p_{i, t} = 1} w_{i, t}}{\sum_i w_{i, t}}
\end{equation}
In the randomized version of WMA, the algorithm predicts outcome $1$ with probability $\hat p_t$; in the deterministic version, by binarizing at a suitable threshold (typically $1/2$). The multiplicative weights update method incorporates feedback: all experts who predicted correctly on that round have their weight increased by a factor of $\gamma > 1$. One may observe that $\log w_{i, t}$ is proportional to the number of times that expert $i$ was correct up to time $t$; this will be important to our implementation.

\textbf{Token representations.} In order to present it to the transformer, the algorithm's input must be encoded into a sequence of tokens. We use a $4$-block token embedding with positional extension, $\smash{d = 4 d_{\mathrm{TE}} + d_{\mathrm{PE}}}$ with $\smash{\mathbf{u}_{v} =\bigl[\mathrm{id}(v)\big|\mathrm{buf}_1(v)\big|\mathrm{buf}_2(v)\big|\mathrm{buf}_3(v)\big|\mathrm{pos}(v)\bigr]}^\top$, which contain the identity, buffer, and positional subspaces. The experts and binary possibilities are embedded via (arbitrary) orthonormal vectors in the identity subspace with empty buffer spaces. We write $\angle{v}$ to denote a token with identity embedding $\tilde{\mathbf{u}}_v$, where position is implicit. At each round, the input is the sequence of tokens\looseness=-1
\begin{equation} \underbrace{\angle{w} \angle{z_t}}_{\text{state slot}} \underbrace{\angle{e_1} \angle{p_{1, t}} \dots \angle{e_n} \angle{p_{n, t}}}_{\text{advice dictionary}} \underbrace{\angle{\mathsf{p}?}}_{\text{query}}  \underbrace{\angle{y_t} \angle {w?}}_{\text{after prediction}},\end{equation}
where $\angle{y_t}$ encodes the ground truth, and $\angle{w}, \angle{p?}, \angle{w?}$, are special tokens: $\angle w$ precedes the previous state, while $\angle{p?}$ and $\angle{w?}$ prompt the model for its prediction and updated state, respectively. For the position embedding, we assume that there exists a matrix $\mathbf P$ on the positional embedding subspace such that $\mathbf P$ maps a positional embedding $\text{pos}(x)$ to the next positional embedding in sequence.

\subsection{Transformer Construction for the Weighted Majority Algorithm}
\label{subsection: recovering hedge}
There are two tasks that the algorithm needs to perform: Given current expert weights and predictions, generate a prediction; then, once the ground truth is revealed, update the expert weights. The weights will be encoded in a superposition $\tilde{\mathbf{u}}_{z_t} = \sum_{i=1}^n \lambda_{i, t} \tilde{\mathbf{u}}_{e_i}$, where $e^{\lambda_{i, t}} = w_{i, t}$. Since the expert embeddings are orthogonal, $\tilde{\mathbf{u}}_{z_t}\cdot\tilde{\mathbf{u}}_{e_i} = \log w_{i, t}$. To generate a prediction, consider a soft attention head with keys $\tilde{\mathbf{u}}_{e_1}, \ldots, \tilde{\mathbf{u}}_{e_n}$, values $\tilde{\mathbf{u}}_{p_{1,t}}, \ldots, \tilde{\mathbf{u}}_{p_{n,t}} \in \{q_0, q_1\}$, and query $\tilde{\mathbf{u}}_{z_t}$. The attention head's output is\looseness=-1
\[\hat y_t \!=\! \frac{\sum_{i=1}^n e^{\tilde{\mathbf{u}}_{z_t} \cdot \tilde{\mathbf{u}}_{e_i}} \tilde{\mathbf{u}}_{p_{i,t}}}{\sum_{i=1}^n e^{\tilde{\mathbf{u}}_{z_t} \cdot \tilde{\mathbf{u}}_{e_i}}} \!=\! \frac{\sum_{i=1}^n w_{i, t} \tilde{\mathbf{u}}_{p_{i,t}}}{\sum_{i=1}^n w_{i, t}} \!\Rightarrow\! \hat y_t \cdot q_1 = \frac{\sum_{i=1}^n w_{i, t} \mathbbm{1} \{\tilde{\mathbf{u}}_{p_{i,t}} = q_1\}}{\sum_{i=1}^n w_{i, t}} \!=\! \frac{\sum_{i\colon p_{i,t} = 1}^n w_{i, t}}{\sum_{i=1}^n w_{i, t}},\]hao
which is precisely the quantity upon which WMA's prediction is indexed. To update the weights, we have
$\lambda_{i, t+1} = \lambda_{i, t} + \mathbbm{1}\{p_{i, t} = y_t\} \cdot \log \gamma$, which is a linear operation on each component of the superposition. Therefore, we may rewrite
\[\tilde{\mathbf{u}}_{z_{t+1}} = \sum_{i=1}^n (\lambda_{i, t} + \eta \mathbbm{1} \{y_t = p_{i, t}\}) \tilde{\mathbf{u}}_{e_i}
= \sum_{i=1} \begin{bmatrix}
    \tilde{\mathbf{u}}_{z_t}\\
    \log \gamma \cdot q_{p_{i, t}}
\end{bmatrix} \cdot 
\begin{bmatrix}
    \tilde{\mathbf{u}}_{z_t}\\
    q_{y_t}
\end{bmatrix}
\, \tilde{\mathbf{u}}_{e_i},
\]
which is linear attention over appropriate keys and queries. In what follows, we will provide an implementation of the bookkeeping by which a transformer can extract the necessary information from the input sequence to perform these two operations, across three layers and five attention heads. 

\begin{theorem}
    [Exponential weights construction] For $n$ experts, horizon $T$, and learning rate $\eta$, there exists a causal transformer of $L=O(1)$ depth, $H=O(1)$ heads, and embedding dimension $d$, that outputs one continuous latent token, so that for every full-information expert sequence of length $T$, the model's prediction recovers exponential weights.
\end{theorem}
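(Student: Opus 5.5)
The plan is to build the transformer explicitly, layer by layer, following the two operations already isolated in \cref{subsection: recovering hedge}. The state is the superposition $\tilde{\mathbf{u}}_{z_t}=\sum_i \lambda_{i,t}\tilde{\mathbf{u}}_{e_i}$ with $\lambda_{i,t}=\log w_{i,t}$. The construction has two jobs. At the query token $\angle{\mathsf{p}?}$ it must output a vector whose inner product with $q_1$ equals $\hat p_t$. At the token $\angle{w?}$ it must output the embedding of $z_{t+1}$, which is then fed back as the next round's state token via continuous thought. I would argue by induction on $t$: if the state slot holds $\tilde{\mathbf{u}}_{z_t}$ exactly, then the prediction is exact and the emitted latent token equals $\tilde{\mathbf{u}}_{z_{t+1}}$. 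The base case is $\lambda_{i,1}=0$, i.e.\ the zero superposition.

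\textbf{Layer 1 (bookkeeping via position).} Using the shift matrix $\mathbf P$ on the positional subspace, one head per copy makes each token attend sharply to its predecessor. This uses a large query-key scale, so the softmax is near one-hot; alternatively I would assume a hard-attention idealization or let the scale grow with the precision required. The head copies the predecessor's identity block into a buffer. After this layer:
\begin{itemize}
\item each $\angle{p_{i,t}}$ carries $\tilde{\mathbf{u}}_{e_i}$ in $\mathrm{buf}_1$, giving a (key, value) pair of expert and prediction;
\item $\angle{z_t}$ has recognized itself as the state;
\item $\angle{w?}$ carries $\tilde{\mathbf{u}}_{y_t}$ in a buffer.
\end{itemize}
A second head, keyed on the special token $\angle{w}$ (or a fixed relative offset), copies $\tilde{\mathbf{u}}_{z_t}$ into a buffer of both $\angle{\mathsf{p}?}$ and $\angle{w?}$.

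\textbf{Layer 2 (prediction).} At $\angle{\mathsf{p}?}$, one head uses the query $\mathrm{buf}(\tilde{\mathbf{u}}_{z_t})$, keys $\mathrm{buf}_1$ of the prediction tokens, and values $\mathrm{id}$ of the prediction tokens. This is exactly the displayed softmax computation, and it yields $\sum_i w_{i,t}\tilde{\mathbf{u}}_{p_{i,t}}/\sum_i w_{i,t}$.

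Other tokens also contribute logits: $\angle{w}$, $\angle{z_t}$, the expert tokens $\angle{e_i}$, and the query token itself. I would suppress them with a large negative bias coming from a gating coordinate that only prediction tokens carry in a buffer (written by layer 1). Concretely, add a constant $-B$ to the keys of all other tokens, and bound the residual error by $O(te^{-B})$.

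\textbf{Layer 3 (update).} At $\angle{w?}$, the update $\lambda_{i,t+1}=\lambda_{i,t}+\eta\,\mathbbm{1}\{p_{i,t}=y_t\}$ is handled in two parts. The term $\tilde{\mathbf{u}}_{z_t}$ is already in the residual stream. The correction $\eta\sum_i \mathbbm{1}\{p_{i,t}=y_t\}\tilde{\mathbf{u}}_{e_i}$ comes from a linear-like head: the query is $\tilde{\mathbf{u}}_{y_t}$, the keys are $\tilde{\mathbf{u}}_{p_{i,t}}$ on the prediction tokens, and the values are $\eta\,\mathrm{buf}_1=\eta\tilde{\mathbf{u}}_{e_i}$.

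Softmax attention normalizes, so the head actually outputs an average rather than a sum. I would fix this in either of two ways:
\begin{itemize}
\item Compute the count of matching experts $c_t$ and the normalizer, and undo the normalization in the MLP. This is awkward, since it requires division.
\item Preferably, use a scale $B$ so that the matching tokens get near-equal weight while nonmatching ones are suppressed, and add a dummy ``sink'' key (the $\angle{w}$ token) with logit $\log$ of a large constant $M$. Then the head outputs approximately $\frac{1}{M}\sum_{\text{match}}$ value, and the values are scaled by $M\eta$.
\end{itemize}
The MLP then moves this sum into the $\mathrm{id}$ block and clears the buffers so the output is a clean state embedding. LayerNorm is handled by reserving a constant coordinate that keeps the norm controlled, or by folding the scale into the next round's readout.

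\textbf{Main obstacle.} I expect the hardest part to be reconciling softmax normalization and LayerNorm with the exact linear update. Every approximation compounds over $T$ rounds through the recurrent latent token. The plan is to give per-step error $\epsilon$, controlled by $B$ and $M$, and show that it accumulates at most additively to $T\epsilon$ in $\lambda$. Since the softmax readout is $1$-Lipschitz in the logits, a $T\epsilon$ error in $\lambda$ perturbs $\hat p_t$ by at most $O(T\epsilon)$. Choosing $B,M=O(\log(nT/\delta))$ then recovers exponential weights to any precision $\delta$, with depth $3$, five heads, and $d=4d_{\mathrm{TE}}+d_{\mathrm{PE}}$.
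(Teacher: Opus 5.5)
Your plan is sound in outline, but it takes a genuinely different route from the paper.

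\textbf{How the paper does it.} The paper aims for \emph{exact} recovery by stepping outside softmax attention. Hard-attention heads route $\tilde{\mathbf u}_{p_{i,t}}$ into $\mathrm{buf}_1(\angle{e_i})$ (Layer 1), and route $\tilde{\mathbf u}_{z_t},\tilde{\mathbf u}_{y_t}$ into the buffers of $\angle{\mathsf{p}?}$ and $\angle{w?}$ (Layer 2). In Layer 3, one softmax head reads out $\hat p_t$. A second, \emph{unnormalized linear} head uses query $[\tilde{\mathbf u}_{z_t};\log\gamma\,\tilde{\mathbf u}_{y_t}]$ against keys $[\mathrm{id};\mathrm{buf}_1]$. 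It emits the whole new superposition $\sum_i(\lambda_{i,t}+\log\gamma\,\mathbbm{1}\{p_{i,t}=y_t\})\tilde{\mathbf u}_{e_i}$ in one shot, with the old $\lambda_{i,t}$ arising as $\tilde{\mathbf u}_{z_t}\cdot\tilde{\mathbf u}_{e_i}$. No error analysis is needed.

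\textbf{How yours differs.} You stay within softmax attention and carry $\tilde{\mathbf u}_{z_t}$ in the residual, so attention only supplies the correction. You undo normalization with a sink, and you pay with an approximation argument: additive drift in $\lambda$ plus the $\ell_\infty$-Lipschitz readout.

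\textbf{What each buys.} Your version handles points the paper's sketch skips:
\begin{itemize}
\item Your predecessor copy respects the causal mask. The paper's Layer 1 has $\angle{e_i}$ attend to the \emph{later} token $\angle{p_{i,t}}$, which the mask forbids.
\item Your gating confronts the fact that attention ranges over all tokens. In the paper's Layer 3 heads, $\angle{z_t}$ itself has score $\|\tilde{\mathbf u}_{z_t}\|^2$, a term the displayed formulas omit.
\item You at least address the MLP and LayerNorm.
\end{itemize}
The cost is exactness. To literally recover thresholded WMA, note that $\lambda_{i,t}\in\{0,\eta,\dots,T\eta\}$ gives finitely many reachable values of $\hat p_t$. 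Taking $\delta$ below the smallest nonzero $|\hat p_t-\tfrac12|$ then reproduces every decision up to tie-breaking.

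\textbf{Corrections needed.} Several quantitative claims are off.
\begin{itemize}
\item $B=O(\log(nT/\delta))$ is too small. Your predecessor copy sets $\mathrm{buf}_1(\angle{e_1})=\tilde{\mathbf u}_{z_t}$, so in your Layer 2 head $\angle{e_1}$ has logit $\|\tilde{\mathbf u}_{z_t}\|^2$, which can reach $n\eta^2T^2$. Either $B$ must dominate this, or the copy must be gated to fire only at prediction tokens.
\item With the sink, each matched expert gets weight about $1/(M+c)$, not $1/M$. That is an under-update of roughly $\eta c/M\le \eta n/M$ per round, so you need $M\gtrsim n\eta T/\delta$. Only the sink's logit $\log M$ is logarithmic.
\item The Layer 1 head ``keyed on $\angle{w}$'' cannot use the tag written by the other Layer 1 head, because heads in a layer act in parallel. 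Also, $\angle{w}$'s own content is not $\tilde{\mathbf u}_{z_t}$. Use the fixed position of the state slot instead, as you suggest.
\item For LayerNorm, only your first option works. A large fixed-direction component dominating the variance turns LayerNorm into a nearly constant rescaling that the next layer undoes. Folding the scale into the readout fails, because $\|\tilde{\mathbf u}_{z_t}\|$ changes with $t$.
\end{itemize}
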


\subsubsection{Transformer Implementation}
Each layer consists of one or more attention heads and a residual stream. The residual stream will propagate the layer input forward, while the output of each attention head is concatenated onto the forwarded input. In the interest of implementing Weighted Majority exactly, we freely intermix hard, softmax, and linear attention as necessary, but one could also use only the standard softmax attention and obtain a construction that implements an approximate version of the algorithm.

\textbf{Layer 1, Attention Head 1:} This head fetches the identity block of the next token in position, in particular, $\mathrm{buf_1}(e_i)$ becomes $\tilde{\mathbf{u}}_{p_{i, t}}$. This can be accomplished with hard attention and query, key, and value matrices
$\mathbf{Q_{1, 1}} = \begin{bmatrix}
    \mathbf{P} & \mathbf 0
\end{bmatrix},  
\mathbf{K_{1, 1}} = \begin{bmatrix}
    \mathbf{I} & \mathbf 0
\end{bmatrix}$, and
$\mathbf{V_{1, 1}} = \begin{bmatrix}
    \mathbf 0 & \mathbf I
\end{bmatrix}$.

\textbf{Layer 2, Attention Head 1:} When queried with identity blocks $\tilde{\mathbf{u}}_{p?}$ or $\tilde{\mathbf{u}}_{w?}$, this head outputs the superposition $\tilde{\mathbf{u}}_{z_t}$. This can be accomplished with hard attention and query, key, and value matrices
\[\mathbf{Q_{2, 1}} = \begin{bmatrix}
    \mathbf{I} & \mathbf 0 & \mathbf 0
\end{bmatrix} \quad
\mathbf{K_{2, 1}} = \begin{bmatrix}
     (\tilde{\mathbf{u}}_{p?} \otimes \tilde{\mathbf{u}}_{w}) + ( \tilde{\mathbf{u}}_{w?}\otimes \tilde{\mathbf{u}}_{p?}) & \mathbf 0 & \mathbf 0
\end{bmatrix} \quad
\mathbf{V_{2, 1}} = \begin{bmatrix}
    \mathbf 0 & \mathbf I & \mathbf 0
\end{bmatrix}.\]

\textbf{Layer 2, Attention Head 2:} When queried with identity block $\tilde{\mathbf{u}}_{w?}$, this head outputs the ground truth $\tilde{\mathbf{u}}_{y_t}$. The construction is similar to the above.

\textbf{Layer 3, Attention Head 1:} For the input $[\tilde{\mathbf{u}}_{p?} | \cdot | \tilde{\mathbf{u}}_{z_t} | \cdot | \text{pos}(p?)]$, arising from the previous layer, this head computes the weighted combination of expert predictions according to their current weights. This can be accomplished with soft attention and query, key, and value matrices
\[ \mathbf{Q_{3, 1}} = \begin{bmatrix}
    \mathbf 0 & \mathbf 0 & \mathbf I & \mathbf 0
\end{bmatrix} \quad
\mathbf{K_{3, 1}} = \begin{bmatrix}
    \mathbf I & \mathbf 0 & \mathbf 0 & \mathbf 0
\end{bmatrix} \quad
\mathbf{V_{3, 1}} = \begin{bmatrix}
    \mathbf 0 & \mathbf I & \mathbf 0 &  \mathbf 0
\end{bmatrix}.
\]

\textbf{Layer 3, Attention Head 2:} For the input $[\tilde{\mathbf{u}}_{w?} | \cdot | \tilde{\mathbf{u}}_{z_t} | \tilde{\mathbf{u}}_{y_t} | \text{pos}(w?)]$, this head computes the updated superposition. This can be accomplished with linear attention and query, key, and value matrices
\[ \mathbf{Q_{3, 2}} = \begin{bmatrix}
    \mathbf 0 & \mathbf 0 & \mathbf I & \mathbf 0 & \mathbf 0\\
    \mathbf 0 & \mathbf 0 & \mathbf 0 & \mathbf \log \gamma \cdot \mathbf I & \mathbf 0
\end{bmatrix} \quad
\mathbf{K_{3, 2}} = \begin{bmatrix}
    \mathbf I & \mathbf 0 & \mathbf 0 & \mathbf 0 & \mathbf 0\\
    \mathbf 0 & \mathbf I & \mathbf 0 & \mathbf 0 & \mathbf 0
\end{bmatrix} \quad
\mathbf{V_{3, 2}} = \begin{bmatrix}
    \mathbf I & \mathbf 0 & \mathbf 0 & \mathbf 0 & \mathbf 0
\end{bmatrix}.
\]

\textbf{Layer 4:} A final un-embedding layer maps the prediction vector to a token prediction. This is a linear feed-forward layer with one neuron per token, where each neuron's weights matches the identity embedding of its designated token, and the layer outputs  a vector of token probabilities. Whether the transformer's output is random or thresholded (and whether it realizes randomized or hard weighted majority) depends on how these token probabilities are used to generate outputs.

\section{Recovering Online Tabular $Q$-learning}

In this section, we prove that a constant-depth transformer equipped with latent tokens can execute online Tabular $Q$-learning, a foundational off-policy reinforcement learning algorithm for solving Markov Decision Processes (MDPs) \citep{watkins1992qlearning}.

\textbf{Problem setting.} We model the environment as an infinite-horizon discounted MDP $\mathcal{M} = \langle \mathcal{S}, \mathcal{A}, \mathcal{P}, r, \gamma \rangle$, where $\mathcal{S}$/$\mathcal{A}$ are finite state/action spaces, $\mathcal{P}(s'|s, a)$ is the transition kernel, and $r(s, a) \in [0, 1]$ is a deterministic bounded reward. Let $Q : \mathcal{S} \times \mathcal{A} \to \mathbb{R}_+$ denote the action-value function, with discount factor $\gamma \in (0, 1)$ and learning rate $\alpha \in (0, 1]$. The classical online tabular $Q$-learning algorithm maintains a tabular estimate $Q_t$ and applies, at each step, the Bellman update
\begin{equation}\label{eq:qupdate}
    Q_{t+1}(s_t, a_t) \gets Q_t(s_t, a_t) + \alpha \left( r_t + \gamma \max_{a \in \mathcal{A}} Q_t(s_{t+1}, a) - Q_t(s_t, a_t) \right).
\end{equation}

\textbf{Tokens and embeddings.} We specialize the general embedding framework above to the vocabulary
$\smash{\mathcal{V}=\mathcal{S} \cup \mathcal{A} \cup \{\angle{\mathsf{BOS}}, \angle{r}, \angle{\mathsf{Select}}, \angle{Q_{\mathsf{curr}}}, \angle{Q_{\mathrm{next}}}, \angle{\mathrm{Update}}\}},
$ with orthonormal identity embeddings $\smash{\{\tilde{\mathbf{u}}_v\}_{v \in \mathcal{V}} \subset \mathbb{R}^{d_{\mathrm{TE}}}}$. We use a $3$-block token embedding with positional extension, $\smash{d = 3 d_{\mathrm{TE}} + d_{\mathrm{PE}}}$ with $\smash{\mathbf{u}_{v} =\bigl[\mathrm{id}(v)\big|\mathrm{buf}_1(v)\big|\mathrm{buf}_2(v)\big|\mathrm{pos}(v)\bigr]}^\top$, which contain the identity, buffer, and positional subspaces.\looseness=-1

\textbf{Input structure.}
Each $Q$-learning step is encoded as a length-$(3|\mathcal{A}| + 9)$ sequence consisting of a context segment (encoding the current $Q$-table) followed by evaluation and update phases:
\[\angle{\mathsf{BOS}} \;c_1, \dots, c_{|\mathcal{A}|}
\Vert \angle{Q_{\mathrm{curr}}} \angle{s_t} \angle{a_t} \angle{r_t} \angle{Q_{\mathrm{next}}} \angle{s_{t+1}} (\angle{s_{t+1}} \angle{a_i})_{i=1}^{|\mathcal{A}|} \angle{\mathsf{Select}}
\Vert \angle{a^*} \angle{\mathsf{Update}}
\]

Each context token $c_a$ encodes the $a$-th column of $Q_t$ via $\mathrm{id}(c_a) = \tilde{\mathbf{u}}_a + \tilde{\mathbf{u}}_{\mathrm{Update}},\quad \mathrm{buf}_1(c_a) = \sum_{s \in \mathcal{S}} Q_t(s,a)\,\tilde{\mathbf{u}}_s$. The reward token satisfies $\mathrm{id}(r_t)=\tilde{\mathbf{u}}_r$ and $\mathrm{buf}_1(r_t)=r(s_t,a_t)\tilde{\mathbf{u}}_r$.

\subsection{Transformer Construction for the Tabular $Q$-learning Algorithm}
At each step, the model must perform two tasks: evaluate the current Q-values to select an action, and update the Q-table using the observed transition. 
Since all quantities are stored along orthogonal directions, the Bellman update (\ref{eq:qupdate}) is implemented by writing each term into disjoint subspaces and summing them linearly in the residual stream. Here, we show how a small number of attention heads route and assemble these quantities to realize the update within a fixed-depth transformer.

\begin{theorem}[Tabular $Q$-learning]\label{thm:qlearn}
For any finite MDP with state space $\mathcal{S}$, action space $\mathcal{A}$, horizon $T$, learning rate $\alpha \in (0, 1]$, and discount $\gamma \in (0, 1)$, there exists a four-layer causal transformer $\mathrm{TF}_\theta$ with $O(1)$ attention heads per layer, identity subspace dimension $d_{\mathrm{TE}} = O(|\mathcal{S}| + |\mathcal{A}|)$, and a fixed parameter $\theta$ independent of $T$, such that running $\mathrm{TF}_\theta$ autoregressively on the input structure above produces continuous thought tokens that implement the $Q$-learning update~\eqref{eq:qupdate} at each step $t$.
\end{theorem}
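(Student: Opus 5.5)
We prove \cref{thm:qlearn} by an explicit construction in the style of the Weighted Majority construction of \cref{subsection: recovering hedge}. The $Q$-table is stored column-wise as superpositions $\mathrm{buf}_1(c_a)=\sum_{s}Q_t(s,a)\tilde{\mathbf{u}}_s$. Because the identity embeddings are orthonormal, any scalar $Q_t(s,a)$ can be read off by an inner product of a state embedding with the buffer of $c_a$. Each term of the Bellman update \eqref{eq:qupdate} is then produced in a separate subspace, and the terms are summed linearly in the residual stream. We freely mix hard attention (for routing), softmax attention with a large inverse temperature (for the max), and linear attention (for the update). The weights are independent of $t$ and $T$, because every head is content- or relative-position-addressed through the shift matrix $\mathbf P$. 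Correctness for all $t$ therefore follows by induction on $t$: if the context tokens at step $t$ encode $Q_t$, then the emitted continuous tokens encode $Q_{t+1}$ in the same format and become the context of step $t+1$.

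\textbf{Layer 1 (routing).} As in Layer 1 of the WMA construction, one hard-attention head with $\mathbf Q=[\mathbf P\ \mathbf 0]$ copies the identity of the next token into $\mathrm{buf}_2$. This pairs $\angle{Q_{\mathrm{curr}}}$ with $s_t$, each pair token $\angle{s_{t+1}}$ with its following action $\angle{a_i}$, and so on. A second head, keyed on $\tilde{\mathbf u}_{\mathrm{Update}}$, lets each action token $\angle{a_i}$ retrieve the column $\mathrm{buf}_1(c_{a_i})$. The query $\tilde{\mathbf u}_{a_i}$ matches only $c_{a_i}$, since $\mathrm{id}(c_a)=\tilde{\mathbf u}_a+\tilde{\mathbf u}_{\mathrm{Update}}$.

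\textbf{Layer 2 (evaluation).} For each pair $(s_{t+1},a_i)$, a linear head takes the inner product of $\tilde{\mathbf u}_{s_{t+1}}$ with the retrieved column and writes the scalar $Q_t(s_{t+1},a_i)$ along a fixed direction. A second head does the same at the $\angle{Q_{\mathrm{curr}}}$ position for $(s_t,a_t)$, giving $Q_t(s_t,a_t)$. A third head copies $r_t$ from $\mathrm{buf}_1(r_t)$.

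\textbf{Layer 3 (max and action selection).} At $\angle{\mathsf{Select}}$, a softmax head scores the pair positions by their value $Q_t(s_{t+1},a_i)$. In the limit of large temperature (or with hard attention, as in the WMA construction) it returns both $\tilde{\mathbf u}_{a^*}$ and the scalar $\max_a Q_t(s_{t+1},a)$. This is the emitted token $\angle{a^*}$.

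\textbf{Layer 4 (update).} At $\angle{\mathsf{Update}}$, heads gather $r_t$, $\max_aQ_t(s_{t+1},a)$, $Q_t(s_t,a_t)$, $\tilde{\mathbf u}_{s_t}$ and $\tilde{\mathbf u}_{a_t}$. A linear attention head whose value is the old column $c_{a_t}$ is combined with a head that writes $\alpha(r_t+\gamma\max-Q_t(s_t,a_t))\tilde{\mathbf u}_{s_t}$ into $\mathrm{buf}_1$. The scalar-times-vector products have the same bilinear form already used in Layer 3, Head 2 of the WMA construction. The coefficients $\alpha,\gamma$ are absorbed into the value matrices. The output is the new column $c_{a_t}$, and unchanged columns are re-emitted by copy heads, so the next step's context encodes $Q_{t+1}$.

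\textbf{Main obstacle.} The hard step is the update. It requires multiplying data-dependent scalars by data-dependent directions, and it requires rewriting only column $a_t$ while leaving the others intact. I would handle the product via linear attention with a query carrying the scalar and a key carrying $\tilde{\mathbf u}_{s_t}$, exactly as in the WMA update. I would handle the column selection by emitting one continuous token per column, gated by whether its action matches $\tilde{\mathbf u}_{a_t}$, using an inner-product indicator that is exact under orthonormality. A second concern is that the exact max needs hard attention; with pure softmax one obtains an approximation whose error decays exponentially in the inverse temperature. I would state this caveat explicitly, in the same way the paper does for WMA.
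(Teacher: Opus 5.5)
Your four-stage skeleton (fetch columns, evaluate $Q$-values by inner products against a routed state, take a softmax max at $\angle{\mathsf{Select}}$, assemble the TD update at $\angle{\mathrm{Update}}$ by linear attention) matches the paper's. The gap is that the routing you specify cannot be realized in a \emph{causal} transformer, which the theorem requires. Your Layer~1 head with $\mathbf Q=[\mathbf P\ \mathbf 0]$ has position $i$ read position $i+1$, which the causal mask forbids. So $\angle{Q_{\mathrm{curr}}}$ cannot acquire $s_t$, and $\angle{s_{t+1}}$ cannot acquire the following $a_i$; the next-token head you borrow from the WMA layer is unavailable here. The same problem recurs in your Layer~2, where $Q_t(s_t,a_t)$ is computed ``at the $\angle{Q_{\mathrm{curr}}}$ position'', a token that precedes both $\angle{s_t}$ and $\angle{a_t}$.

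Information must flow backward. The paper uses the fixed-offset heads $\mathrm{FO}(\mathcal T,-\ell)$ of Lemma~\ref{lemma:generalized_chooser}. Every action token copies the state immediately before it via $\mathrm{FO}(\mathcal A,-1)$. Only $a_t$ sees $\angle{Q_{\mathrm{curr}}}$ at offset $-2$, so it alone is tagged by $\mathrm{FO}(\mathcal A,-2)$, and the candidates are tagged with $\tilde{\mathbf u}_{Q_{\mathrm{next}}}$ by content matching. Each action token then holds its state and its fetched column. The inner product is taken at the action token, writing $-Q_t(s_t,a_t)$ and $Q_t(s_{t+1},a)$ on disjoint phase axes of $\mathrm{buf}_2$.

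The second gap is in Layer~4. You first gather the scalars at $\angle{\mathrm{Update}}$ and then multiply them into $\tilde{\mathbf u}_{s_t}$. That is two attention hops, but all heads in one layer read that layer's input. In particular, $\max_a Q_t(s_{t+1},a)$ exists only at $\angle{\mathsf{Select}}$ after Layer~3, so no Layer-4 query at $\angle{\mathrm{Update}}$ can carry it, and your scheme needs a fifth layer. Your ``query carries the scalar, key carries $\tilde{\mathbf u}_{s_t}$'' also misplaces the direction. A linear head outputs $\sum_k\langle q,k_k\rangle v_k$, so the data-dependent direction must sit in the \emph{value}, at the same source position as the scalar.

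The paper arranges exactly this by pre-routing $\tilde{\mathbf u}_{s_t}$ into $a_t$ (Head~1.1), into $r_t$ (Head~1.3) and into $\mathrm{buf}_1(\mathsf{Select})$ (Head~2.1). Each of Heads~4.1--4.3 then uses a fixed selector query at $\angle{\mathrm{Update}}$, a key equal to the relevant scalar ($-Q_t(s_t,a_t)$, $r_t$, or the max), and value $\alpha\tilde{\mathbf u}_{s_t}$ (resp.\ $\alpha\gamma\tilde{\mathbf u}_{s_t}$). Each product is thus formed in a single hop.

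Your per-column re-emission with a multiplicative gate is also unnecessary; it would need extra output positions and another bilinear hop. Head~3.3 copies $\tilde{\mathbf u}_{a_t}$ and the old column into $\angle{\mathrm{Update}}$. The single emitted token therefore has identity $\tilde{\mathbf u}_{\mathrm{Update}}+\tilde{\mathbf u}_{a_t}$, exactly the format of $c_{a_t}$, and simply replaces it while the other context tokens carry over.
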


\textbf{Proof sketch.} We use a generalized fixed-offset attention mechanism inspired by \cite{zhu2025reasoningsuperpositiontheoreticalperspective}: $\mathrm{FO}(\mathcal{T}, -\ell)$  routes any token in a set $\mathcal{T}$ to a fixed position $\ell$ steps earlier. Under orthonormal embeddings, this acts as an exact indicator-based router and will be used to deterministically to focus attention from token-to-token. We provide a sketch below and defer the full proof to \cref{app:qlearn}.

\textbf{Layer 1: Routing and context lookup.} This layer stages all operands. State tokens are copied to the relevant positions, phase tags distinguish $a_t$ from candidate actions, and each action retrieves its Q-table column.
Head 1.1 copies the state tag to the action tokens by applying $\mathrm{FO}(\mathcal{A}, -1)$ to copy the appropriate $\tilde{\mathbf{u}}_s$ into the first buffer of each action. Head 1.2 copies the state tag $\tilde{\mathbf{u}}_{s_{t+1}}$ to $\mathrm{buf_1}(\mathsf{Select})$ token's buffer by applying $\mathrm{FO}(\{\mathsf{Select}\}, -2)$. Head 1.3 copies $\tilde{\mathbf{u}}_{s_t}$ into $\mathrm{buf_1}(r_t)$. Head 1.4 uses a filtered fixed-offset selector to write $\tilde{\mathbf{u}}_{Q_{\mathrm{curr}}}$ into $\mathrm{id}(a_t)$. Head 1.5 uses causal content matching so that for each candidate, $\mathrm{id}(a_i)$ ($i \ge 1$) receives $\tilde{\mathbf{u}}_{Q_{\mathrm{next}}}$. Finally, in Head 1.6, each action token attends to its matching context token $c_a$ and retrieves $\sum_s Q_t(s,a)\tilde{\mathbf{u}}_s$ into its buffer space.

\textbf{Layer 2: Q-value evaluation}. This layer computes scalar Q-values via inner products. In
Head 2.1, $\angle{\mathsf{Select}}$ attends to $\angle{a_t}$ and copies $\tilde{\mathbf{u}}_{s_t}$ into a buffer. In Head 2.2, a linear attention head computes $Q_t(s,a) =\tilde{\mathbf{u}}_s \cdot \sum_{s'} Q_t(s',a)\tilde{\mathbf{u}}_{s'} $ for each action token, and places the result into into each action's second buffer with $-Q_t(s_t,a_t)\tilde{\mathbf{u}}_{Q_{\text{curr}}} \text{ (current)},  +Q_t(s_{t+1},a)\tilde{\mathbf{u}}_{Q_{\text{next}}} \text{ (candidates)}.$

\textbf{{Layer 3: Selection and maximization.}} This layer implements the $\max$ operator and selects actions. Head 3.1 uses softmax to treat $Q_t(s_{t+1},a)$ as logits $\pi(a \mid s_{t+1}) \propto \exp(\beta Q_t(s_{t+1},a))$. In the limit $\beta \to \infty$, $\tilde{\mathbf{u}}_{a^*}$ adds into $\mathrm{id}(\mathsf{Select})$. Head 3.2 aggregates values $\sum_a \pi(a) Q_t(s_{t+1},a) \to \max_a Q_t(s_{t+1},a)$ within $\mathrm{buf_2}(\mathsf{Select})$. In Head 3.3, $\angle{\mathsf{Update}}$ attends to $\angle{a_t}$ and retrieves $\tilde{\mathbf{u}}_{a_t}$ and $\sum_s Q_t(s,a_t)\tilde{\mathbf{u}}_s$.\looseness=-1

\textbf{{Layer 4: Bellman update.}} This layer assembles the temporal-difference update in $\mathrm{buf_1}(\mathsf{Update})$. Head 4.1 subtracts the current value by extracting $-Q_t(s_t,a_t)$ and writing $-\alpha Q_t(s_t,a_t)\tilde{\mathbf{u}}_{s_t}$. 
{Head 4.2} adds a reward by extracting $r_t$ and writing $\alpha r_t \tilde{\mathbf{u}}_{s_t}$. Finally, 
{Head 4.3} adds a discounted max by extracting $\max_a Q_t(s_{t+1},a)$ and writing $\alpha \gamma \max_a Q_t(s_{t+1},a)\tilde{\mathbf{u}}_{s_t}$.  

By linearity of the residual stream, $\angle{\mathrm{Update}}$ outputs \ref{eq:qupdate},
which is exactly $Q_{t+1}(s_t,a_t)$. The updated value is written back into the context representation, and autoregressively repeats the process. \qedhere

\begin{discussion}[Behavior policy] 
Using hard softmax ($\beta\to\infty)$ in Head 3.1 recovers the greedy policy. Softening this recovers standard exploration policies: Boltzmann exploration sets an inverse temperature $\beta = 1/\tau$ for temperature $\tau$ and $\epsilon$-greedy mixes the hard-softmax with uniform samples.
\end{discussion}

\section{Experiments}

Our theoretical results show that continuous latent contexts allow constant-depth transformers to implement online learning algorithms by storing and updating superposition states. We complement the theory with experiments to test whether such online algorithms can be learned by small transformers, whether the learned latent contexts encode the intended superposition states, and whether frontier LLMs are capable of making online decisions with or without a state-carrying reasoning context.

We conduct three sets of experiments\footnote{We provide our code in \url{https://github.com/emiletimothy/transformer_decisionmaker}.}.  \cref{sec:exp_mw}  studies expert prediction and shows that training a small transformer with latent contexts yields a small model that learns to make online decisions with similar performance as multiplicative weights, while its latent contexts recover expert-weight information. 
Section ~\ref{subsection: learning tabular Q-learning} moves beyond the expert setting to online tabular $Q$-learning, where the latent contexts need to store and update a richer algorithmic state, namely a $Q$-table. Finally,  Section~\ref{sec:exp_llm} evaluates frontier LLMs on interactive expert-prediction tasks under different prompt framings, and shows that allowing the model to output a short note carrying state information substantially improves its predictions.

\subsection{Unsupervised Learning From Expert Data}
\label{sec:exp_mw}
We train a GPT-2 style decoder \citep{radford2019language} 
Our dataset comprises of $3000$ sequences, where each sequence length comprises of steps of length $100$. A single continuous latent context vector is prepended to each step and updated autoregressively during the process. Next, the expert qualities are initiated uniformly at random between $[0.3, 0.9]$. At each step, there is a random binary true label, and each expert predicts the correct label with its probability (which is the quality), and gets a loss of $0$ is correct and $1$ if wrong. We train via a curriculum strategy where the losses are computed only at the masked positions, with loss
    $\cL\coloneqq 
    \mathrm{BCE}(\text{predicted decision}, \text{true label})$.

    The curriculum strategy is that the model learns to reason over progressively larger MW sequences, one stage at a time. Specifically, for $1\leq i\leq 10$, it trains on sequences truncated to $5i$ steps, and for $11\leq i\leq 13$, it trains on sequences It trains on sequences truncated to $50+15(i-10)$ steps. Every sequence is cut to \textsc{stage} steps so the model doesn't see longer reasoning chains than what the current stage allows. Also, with probability $0.1$, shorter sequences from previous stages are mixed into the current stage's dataset, preventing catastrophic forgetting.

\textbf{Attention behavior.} \cref{fig:q_learning_combined_row}a shows that the model's performance on unsupervised learning from expert data is comparable to the optimal multiplicative weights algorithm with an inductive bias over the expert distributions. In \cref{fig:q_learning_combined_row}b, we see that that the latent contexts $M_t$ are significantly attended to by the other tokens, namely by the prediction tokens in the following step.

\begin{figure}[h]
    \centering
\includegraphics[width=0.4\linewidth]{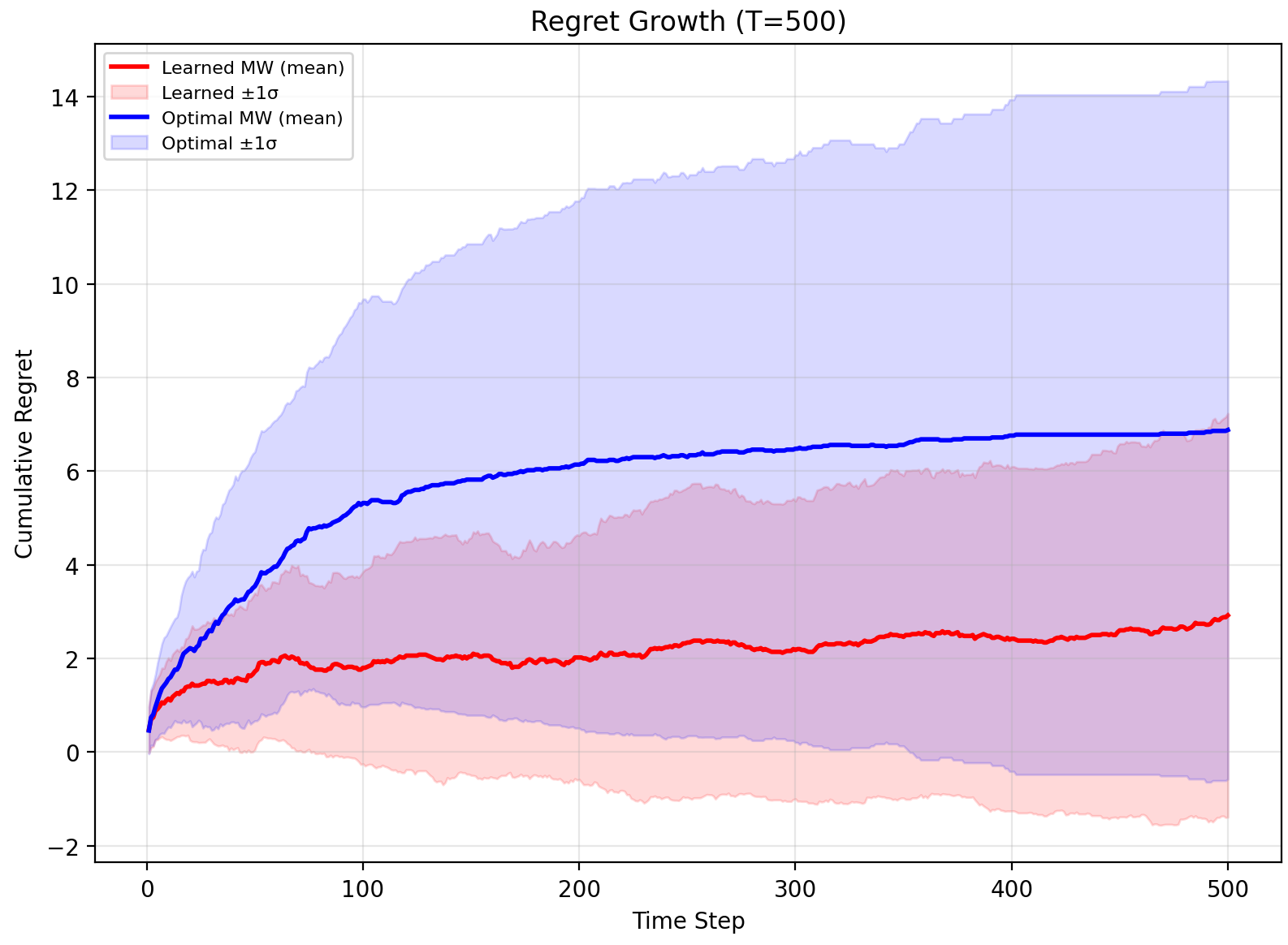}\quad         \includegraphics[width=0.3\linewidth]{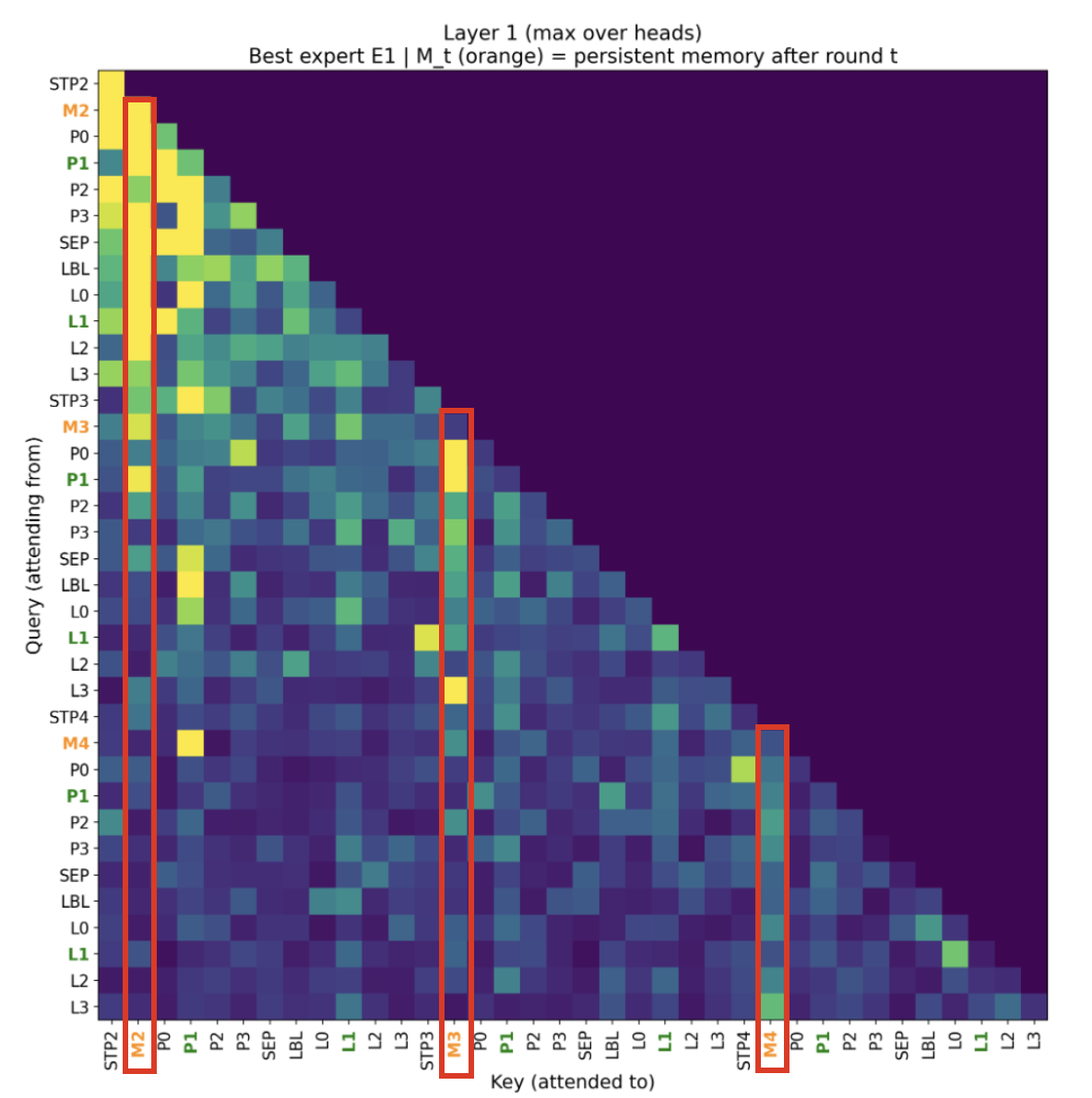}
        \caption{(Left) Cumulative regret plot, (Right) Causal attention heatmap on an MWU sequence}
        \label{fig: mwu heatmap}
\end{figure}

\subsection{Imitation Learning of Tabular $Q$-learning}
\label{subsection: learning tabular Q-learning}
We train a pre-norm GPT-2 style decoder \citep{radford2019language} ($4$ layers, $8$ heads each, $d_{\text{model}}=256$) with a recurrent continuous-context interface: a set of $|\mathcal{A}|$ learned context vectors is prepended to each step and updated autoregressively via the $\mathrm{Update}$ token, providing the only channel for cross-step memory. Training data consists of $50{,}000$ tabular $Q$-learning trajectories generated from randomly sampled finite MDPs, varying in state/action size, transition structure, reward distribution, and exploration rate. Each transition is labeled with the tabular greedy action $a^\star = \arg\max_{a'} Q(s',a')$.

The $\mathrm{Select}$ token emits an action prediction $\hat{a}_t \sim \hat{p}_t$ at each step, and we minimize the per-step cross-entropy against the hindsight $Q$-learning greedy action $\mathcal{L} = \frac{1}{T}\sum_{t=1}^{T} \mathrm{CE}(\hat{p}_t,\ a^\star_t)$ with invalid actions masked when $|\mathcal{A}|$ varies. 
We employ a curriculum over action set size, gradually increasing $|\mathcal{A}|$ during training while grouping batches by action count to maintain a consistent context width.
We evaluate the learned model along two axes: (i) performance relative to tabular $Q$-learning and (ii) evidence that the model internally implements the $Q$-learning update.

\textbf{Behavioral equivalence.}
Figure~\ref{fig:q_learning_combined_row} compares cumulative rewards of autonomous rollouts against tabular baselines. Across episodes, the transformer closely matches tabular $Q$-learning, indicating that it learns a policy of comparable quality under the same dynamics.

\textbf{Evidence of $Q$-learning dynamics.}
We examine whether this behavioral similarity arises from an internal implementation of the $Q$-learning update. Figure~\ref{fig:q_learning_combined_row} shows high per-step agreement with the tabular greedy action across timesteps, MDPs, and reward distributions, including under distribution shift, and that the Bellman update can be linearly decoded from the per-step context change, indicating that reward, current value, and bootstrapped value are explicitly represented in the residual stream.

\begin{figure}[h!]
    \centering    \includegraphics[width=0.99\linewidth]{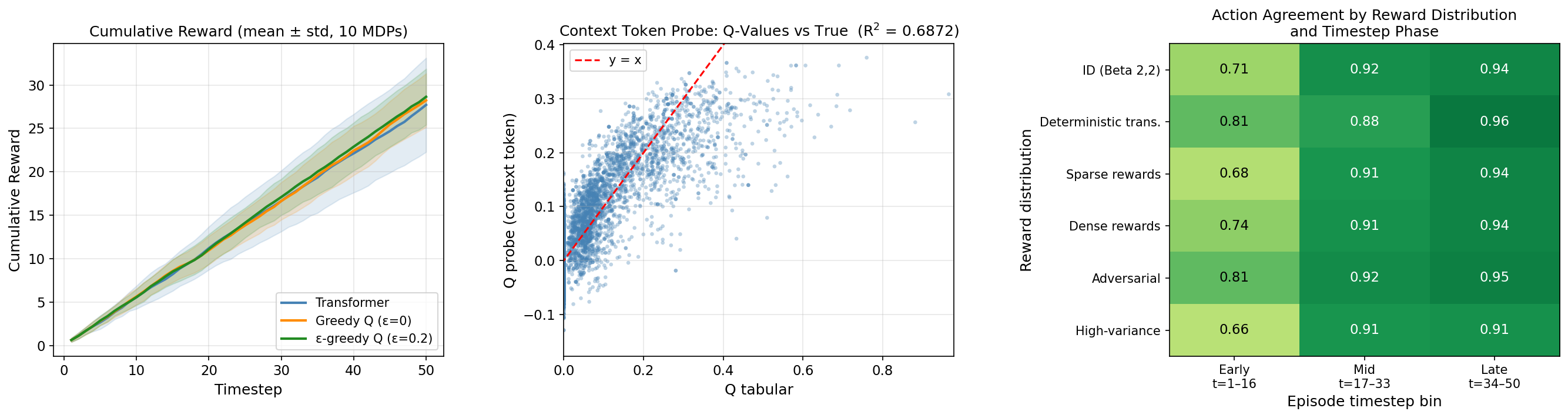}
    \caption{\textbf{$Q$-learning behavior and internal mechanism.}
    (Left) Learned transformer's cumulative reward compared to tabular baselines. (Middle) A linear probe can recover the Bellman update signal from the context update, indicating explicit encoding of TD components. (Right) Per-step greedy-action agreement with tabular $Q$-learning across timesteps and reward distributions.}
    \label{fig:q_learning_combined_row}
\end{figure}

\vspace{-0.3cm}
\subsection{Inference on LLMs}
\label{sec:exp_llm}

Frontier LLMs are pretrained on a broad mixture of natural language, code, mathematical formulas, and algorithmic explanations. It is therefore natural to ask whether they have already internalized some form of online-learning behavior. In particular, when placed in a sequential prediction environment with feedback after each round, can such models improve their future predictions without any parameter updates? If so, what kind of online decision rule do they appear to use, and does their performance depend on having an explicit state-carrying context?

We study this question through an interactive expert-prediction inference task. The task is deliberately simple: there are four fixed experts, Expert A, Expert B, Expert C, and Expert D. At each round, each expert makes a binary prediction, the LLM makes its own prediction based on these four experts, and then the true outcome is revealed to the LLM. The model is not told to run multiplicative weights or any other online learning algorithm explicitly. Its objective is to be as accurate as possible over time.

\textbf{Prompt framing.}
As shown in Figure~\ref{fig:llm_exp_prompt}, we evaluate four prompting schemes obtained by crossing two task framings with two state-carrying protocols. The \textit{online} framing presents the task as binary expert prediction with labels $0$ and $1$, while the \textit{weather} framing presents the same sequences as weather forecasts with labels ``sunny'' and ``rainy'', using the mapping $1 \mapsto$ ``sunny'' and $0 \mapsto$ ``rainy''. The two framings differ only in surface language.

For each framing, we compare a \textit{no-note} protocol and a \textit{note} protocol. In the no-note protocol, the model outputs a prediction, the ground truth is appended to the interaction history as feedback, and the next round begins. In the note protocol, the model is allowed to output a short note after observing the feedback. This note is passed back to the model in the next round and can contain any information the model considers useful, such as which experts appear reliable. The note is unsupervised and is not forced to follow any particular format, serving as a controlled, visible state channel. 

\textbf{Evaluation.}
We evaluate each protocol by cumulative regret against the best fixed expert on each prefix of the sequence. Figure~\ref{fig:llm_exp_regret_strat} reports the cumulative regret for DeepSeek-V3. The plot shows that DeepSeek-V3 exhibits some online adaptation from feedback, but its performance depends strongly on the availability of an explicit note. Without a note, the model's curves are close to \textsc{Majority Vote} and \textsc{Follow Previous Winner} baselines, suggesting that it often relies on short-term feedback rather than maintaining a stable estimate of expert reliability. Allowing the model to write a short note carrying expert-reliability information substantially improves prediction performance and clearly outperforms this short-memory baseline, suggesting that a visible state-carrying context helps LLMs use feedback over long interactions. However, the most successful natural language notes have size linear in the number of experts, in contrast to our construction, which uses a constant number of continuous latent context tokens. Detailed setup, full prompts, implementation details, Qwen-3-14B results, additional ablations, and note analyses are provided in Section~\ref{sec:llm_exp_full}.

\begin{figure}[t]
    \centering
    \begin{minipage}[t]{0.58\textwidth}
        \centering
        \begin{minipage}[c]{\linewidth}
            \centering
            \includegraphics[width=\linewidth]{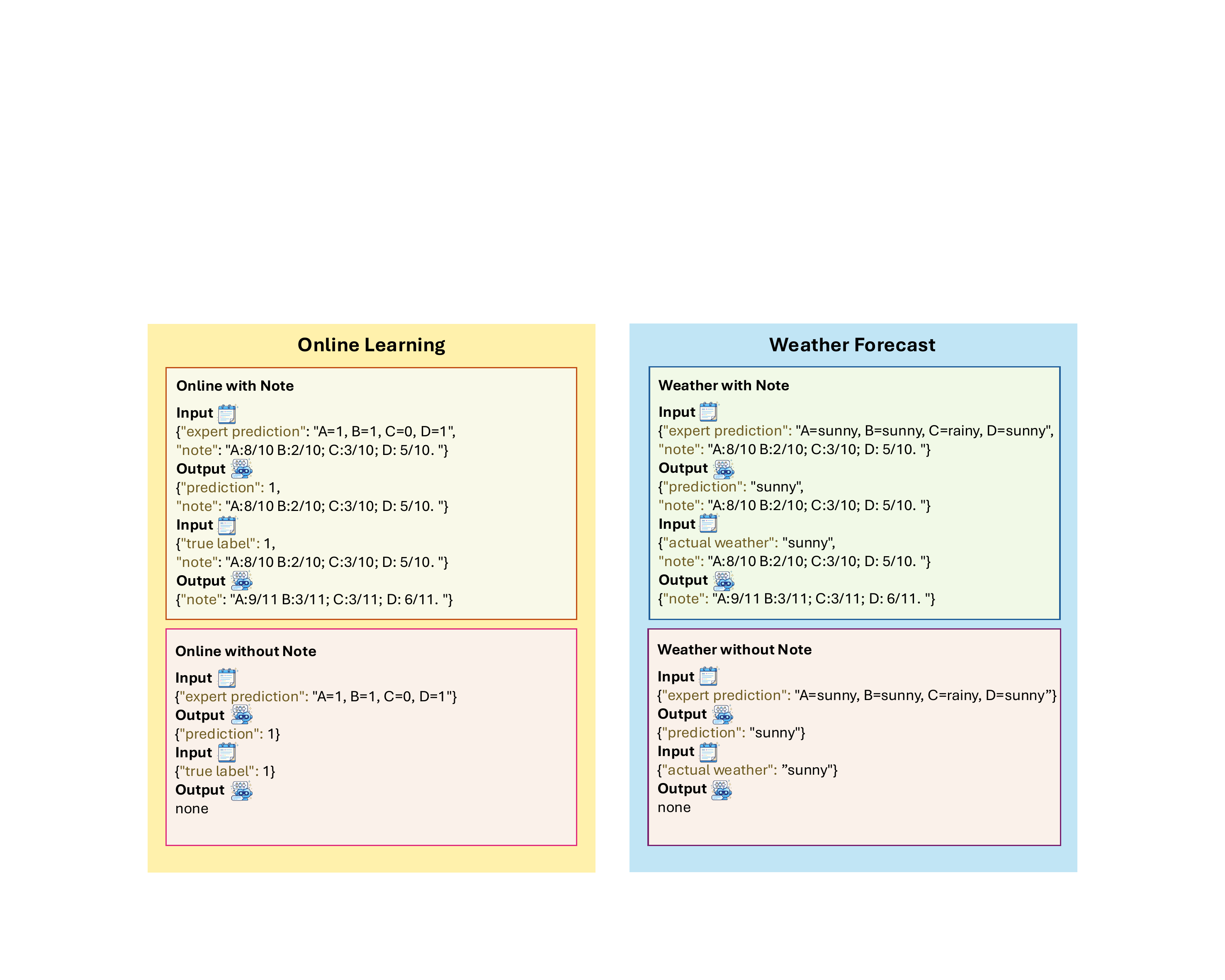}
        \end{minipage}
        \captionof{figure}{Prompt framing for LLM inference experiments. The note shown is an illustrative example that shows up in the experiments. The note is unsupervised.}
        \label{fig:llm_exp_prompt}
    \end{minipage}
    \hfill
    \begin{minipage}[t]{0.39\textwidth}
        \centering
        \begin{minipage}[c]{\linewidth}
            \centering
            \includegraphics[width=0.815\linewidth]{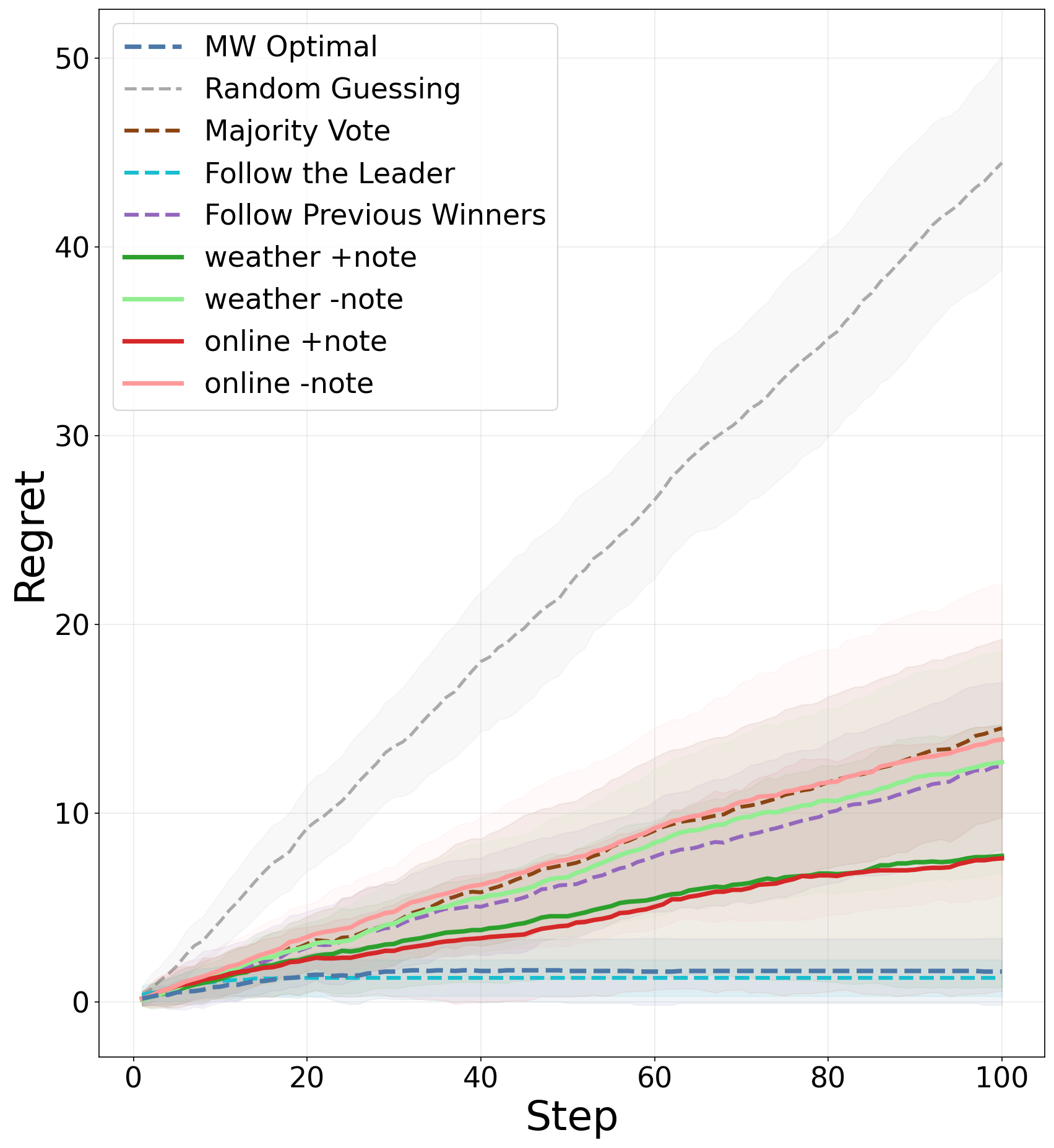}
        \end{minipage}
        \captionof{figure}{Cumulative regret of LLMs on interactive expert prediction against the best fixed expert on the same prefix.}
        \label{fig:llm_exp_regret_strat}
    \end{minipage}
\end{figure}

\section{Conclusion}
We explore how transformers can effectively make online decisions, using small continuous latent contexts to maintain a summary of history and encode an algorithmic state. We study two problems -- learning from experts, and tabular RL -- and give theoretical constructions to implement classical algorithms for these problems (Weighted Majority and $Q$-learning) using the latent context to maintain the algorithmic state. We show that gradient-based training on a prediction-based (experts) and imitation-based (RL) loss learns to use the context effectively, hinting that the gradient-based solution may be similar to the target algorithms. Finally, we study the performance of two open-source LLMs on the experts problem in various settings, ablating on the ability to maintain a contextual ``note'', and show that they fall short of the Weighted Majority baseline, indicating that small continuous latent contexts may provide practical advantages for online decision-making in LLMs.

\section*{Acknowledgements}
We thank Sarah Liaw, Mirabel Reid, and Jacob Abernethy for insightful discussions. EA's research is supported by NSF
Grant CCF 2338816, XC's research is supported by NSF award CCF-2106444 and a JPMC AI PhD fellowship, and MD's research is supported by an NSF ARNI postdoctoral research fellowship.\looseness=-1

 \section*{Limitations and Future Work}
We identify several directions of future work. Future theoretical work should determine whether these implementations can emerge from gradient-based training. Since those algorithms are worst-case, optimal distributional information may allow modified algorithms to perform substantially better, so this question is likely to be deep. A related question with both theoretical and empirical angles is the circumstances under which various RL approaches are optimal, with well-known statistical/computational tradeoffs for model-based versus model-free approaches. Here we only considered imitation learning of an observed $Q$-learner; a substantial extension of this work would be to identify the emergence of $Q$-learning when training to maximize reward. Finally, we restricted our study to simplified decision-making settings, and it is unclear to what extent our results may generalize to real-world reasoning. \looseness=-1

\section*{Impact Statement} \label{app:impact}
There are substantial risks involved in creating decision-making models with reasoning steps which are not human interpretable. A great deal of the confidence we have in the alignment of large agentic systems is because we can directly monitor their reasoning, so the profusion of frontier models with more opaque reasoning could present a substantial challenge to the already difficult fields of interpretability and safety. On the positive side, our results contribute to a body of work suggesting that continuous latent reasoning may offer significant efficiency gains, with the potential to mitigate AI's consumption of scarce resources.\looseness=-1

\bibliography{references}
\bibliographystyle{plainnat}

\newpage
\appendix

\newpage
\onecolumn
\appendix

\textbf{Outline of the Appendix}
\begin{itemize}
    \item \cref{app:math}: Table of notation for our theoretical results.
    \item \cref{app:MWUopt}: Theoretical results showing the optimality of the Weighted Majority Algorithm under a particular loss.
    \item \cref{app:RLproofs}: Full Tabular $Q$-learning construction: Generalized Attention Chooser and main proof
    \item \cref{app:handwire}: Empirical validation of our theoretical constructions by implementations.
    \item \cref{app:exp}: Details to our experiments.
\end{itemize}

\section{Mathematical Background} \label{app:math}

\textbf{Basic notations.} For any integer $N>0$, we use $[N]$ to denote the set $\{1,2,\dots,N\}$. For any finite set $\mathcal{X}$, let $|\mathcal{X}|$ denote the cardinality of $\mathcal{X}$. Let $\mathbb{R}$ be the set of real numbers. We use $\mathbbm{1}\{\cdot\}$ to denote the indicator function. We use lower-case bold letters (e.g., $\mathbf{x}, \mathbf{\theta}$) and upper-case bold letters (e.g., $\mathbf{W}, \mathbf{U}$) to denote vectors and matrices, respectively. In particular, we use $\mathbf{I}_d$ to denote a $d\times d$ identity matrix, use $\mathbf{0}_{m\times n}$ (or $\mathbf{0}_m$) to denote an $m\times n$ zero matrix (or an $m$-dimensional zero vector), and use $\mathbf{e}_i$ to denote a one-hot vector of which the $i$-th entry is one and other entries are all zero, where the dimension of $\mathbf{e}_i$ can be inferred from the context. We also use $\|\cdot\|_\infty$ and $\|\cdot\|_2$ to represent $\ell_\infty$ and $\ell_2$ norm, respectively. For vectors $\mathbf{u}\in \mathbb{R}^m$ and $\mathbf{v}\in \mathbb{R}^n$, let $\mathbf{u}\otimes \mathbf{v} = \mathbf{u}\mathbf{v}^\top \in \mathbb{R}^{m\times n}$ denote their inner product. Finally, for any vector $\mathbf{x} = (x_1,\dots,x_d) \in \mathbb{R}^d$, we define the softmax function $\mathsf{SoftMax}:\mathbb{R}^d\to\mathbb{R}^d$ as $\mathsf{SoftMax}(\mathbf{x})_i = \exp(x_i)/(\sum_{j=1}^d \exp(x_j))$.

\textbf{Tokens and Embeddings.} Fix a vocabulary $\mathcal{V} = [V]$. Each token $v \in \mathcal{V}$ has an embedding $\mathbf{u}_v \in \R^d$, whose dimension  $d = k\, d_{\text{TE}} + d_{\text{PE}}$ decomposes as $k$ token-embedding blocks and one positional block. For a vector $\mathbf{x} = (x_1,x_2\cdots,x_d)^\top \in \mathbb{R}^d$, we define $\text{id}(\mathbf{x}) = (x_1,\cdots,x_{d_{\text{TE}}})^\top$, $\mathrm{buf}_j (\mathbf{x}) = (x_{j \cdot d_{\text{TE}} + 1}, \cdots, x_{(j+1) \cdot d_{\text{TE}}})^\top$ for $j \in [k]$, and $\text{pos}(\mathbf{x}) = (x_{k\cdot d_{\text{TE}}+1}, \cdots, x_{d})$. Let $\tilde{\mathbf{u}}_v = \mathrm{id}(\mathbf{u}_{v}) \in \mathbb{R}^{d_{\text{TE}}}$ for $v\in \mathcal{V}$. Next, let $\mathbf{U} = [\tilde{\mathbf{u}}_1, \tilde{\mathbf{u}}_2,\cdots, \tilde{\mathbf{u}}_v] \in \mathbb{R}^{d_{\text{TE}}\times V}$ and assume token orthonormality, $\mathbf{U}^\top\mathbf{U} = \mathbf{I}_V$.
\subsection{Token Vocabulary}
        
\subsubsection{Multiplicative Weights Setting}
A shared discrete vocabulary of size $n + 5$ is used, where $n$ is the number of experts:
\begin{center}
\begin{tabular}{@{}ll@{}}
\toprule
\textbf{Token(s)} & \textbf{Meaning} \\
\midrule
$\angle{w}$ & state slot marker (precedes weight superposition) \\
$\angle{z_t}$ & weight superposition $\tilde{\mathbf{u}}_{z_t} = \sum_{i=1}^n \lambda_{i,t}\, \tilde{\mathbf{u}}_{e_i}$ \\
$\angle{e_1}, \dots, \angle{e_n}$ & expert tokens \\
$\angle{p_{1,t}}, \dots, \angle{p_{n,t}}$ & expert prediction tokens, valued in $\{q_0, q_1\}$ \\
$\angle{q_0}, \angle{q_1}$ & binary outcome tokens \\
$\angle{y_t}$ & ground-truth label token \\
$\angle{\mathsf{p}?}$ & prediction query token \\
$\angle{\mathsf{w}?}$ & weight-update query token \\
\bottomrule
\end{tabular}
\end{center}

\subsubsection{Tabular $Q$-learning Setting}
A shared discrete vocabulary of size $2 + S_{\max} + A_{\max} + 5$ is used across all episodes:
\begin{center}
\begin{tabular}{@{}ll@{}}
\toprule
\textbf{Token(s)} & \textbf{Meaning} \\
\midrule
$\angle{\mathsf{BOS}}$ & sequence start \\
$\angle{s_1}, \dots, \angle{s_{S_{\max}}}$ & state tokens \\
$\angle{a_1}, \dots, \angle{a_{A_{\max}}}$ & action tokens \\
$\angle{r}$ & reward marker \\
$\angle{\mathsf{Select}}$ & greedy-action query token \\
$\angle{Q_{\mathrm{curr}}},\; \angle{Q_{\mathrm{next}}},\; \angle{\mathsf{Update}}$ & $Q$-learning phase markers \\
\bottomrule
\end{tabular}
\end{center}

State and action tokens are pre-allocated up to $S_{\max}$ and $A_{\max}$ so that MDPs of varying sizes share a unified vocabulary.                                      

\section{Population log-loss targets the MWU predictor} \label{app:MWUopt}

In this section, we show that under a latent-expert data-generating model, the population log-loss minimizer is the Bayesian posterior predictive mixture, whose weights follow multiplicative weights.

\subsection{Proof that under Binary Cross-Entropy, dynamics converges to MWU}
\label{sec:hedge_proof}

The setup is that $\cH_t$ denotes the history available at round $t$, which is the current expert advice and the past outcomes. Let expert $i$ emit a predictive distribution $p_{i,t}(\cdot) \in \Delta_K$. Let the latent state token encode logits $\lambda_t \in \R^n$, and the normalized expert weights are $w_{t,i} = \mathrm{softmax}(\lambda_t)_i$, and the model is trained by sequence log-loss \begin{equation}\cL(\theta) = \E\left[\sum_{t=1}^T -\log q_{\theta,t}(Y_t|\cH_t)\right].\end{equation} We assume that the data is generated by a latent expert $I^* \sim \pi$ and $Y_t|(I^* = i, \cH_t) \sim p_{i,t}(\cdot)$.

\begin{lemma}
    [Log-loss is uniquely minimized the true conditional] \label{lemma: Log-loss is uniquely minimized the true conditional} For a true distribution $P$ and a predicted distribution $Q$, we have that  $\E_{Y\sim P}[-\log Q(Y)] = H(P) + \mathrm{KL}(P\|Q)$.
\end{lemma}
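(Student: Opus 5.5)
The plan is a direct computation: expand the cross-entropy pointwise, and identify the two pieces as the entropy of $P$ and the divergence from $P$ to $Q$. We work on a countable (here finite, $\Delta_K$) outcome space, write $P(y)$ and $Q(y)$ for the probability masses, and adopt the conventions $0\log 0 = 0$ and $-\log 0 = +\infty$.

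First, suppose $Q(y) > 0$ whenever $P(y) > 0$. For each such $y$ we use the identity
\[
-\log Q(y) = -\log P(y) + \log\frac{P(y)}{Q(y)} .
\]
Multiplying by $P(y)$ and summing over $y$ in the support of $P$, linearity of expectation gives
\[
\E_{Y\sim P}[-\log Q(Y)] = \E_{Y\sim P}[-\log P(Y)] + \E_{Y\sim P}\Bigl[\log\frac{P(Y)}{Q(Y)}\Bigr] .
\]
The first term on the right is $H(P)$ and the second is $\mathrm{KL}(P\|Q)$, by definition.

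Next, suppose instead that $Q(y)=0$ for some $y$ with $P(y)>0$. Then the left side is $+\infty$, and so is $\mathrm{KL}(P\|Q)$. Since $H(P)$ is finite on a finite alphabet, the identity still holds in the extended reals.

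The only delicate point is bookkeeping, which is why the two cases are separated: we must keep to the support of $P$ so that no $\infty-\infty$ expression arises. No earlier result is needed.

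For the use implied by the lemma's title, we also recall that $\mathrm{KL}(P\|Q)\ge 0$, with equality iff $Q=P$ on the support of $P$. This is Gibbs' inequality, which follows from Jensen's inequality applied to the strictly concave $\log$:
\[
-\mathrm{KL}(P\|Q) = \E_P\Bigl[\log \frac{Q}{P}\Bigr] \le \log \E_P\Bigl[\frac{Q}{P}\Bigr] \le \log 1 = 0 .
\]
Consequently $Q=P$ is the unique minimizer of the expected log-loss. This is what is subsequently applied conditionally on $\cH_t$ to identify the optimal $q_{\theta,t}$ with the posterior predictive.
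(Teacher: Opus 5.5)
Your proof is correct and is essentially the paper's argument. You expand $-\sum_y P(y)\log Q(y)$ via $-\log Q(y) = -\log P(y) + \log\frac{P(y)}{Q(y)}$, identify $H(P)$ and $\mathrm{KL}(P\|Q)$, and use $\mathrm{KL}(P\|Q)\ge 0$ with equality iff $Q=P$ for uniqueness. The only differences are additions on your side: you handle the case $Q(y)=0<P(y)$ explicitly and prove Gibbs' inequality via Jensen, whereas the paper simply cites the nonnegativity of the KL divergence.
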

\begin{proof}
    By expanding the LHS, we have \begin{align*}\E_{Y\sim P}[-\log Q(Y)] &= -\sum_y P(y) \log Q(y) \\
    &= -\sum_y P(y) \log P(y) + \sum_y P(y) \log \frac{P(y)}{Q(y)} \\
    &= H(P) + \mathrm{KL}(P\|Q).
    \end{align*} Since the KL divergence is non-negative and $0$ if and only if $P=Q$, the minimizer is unique.
\end{proof}

\begin{theorem}
    Define $q_t^*(\cdot|\cH_t) \coloneqq P(Y_t = \cdot|\cH_t)$. Let $q^*$ be the minimizer of $\cL$ and it has the form $q_t^*(\cdot|\cH_t) = \sum_{i=1}^n (\cdot|\cH_t) = \sum_{i=1}^n w_{t,i} p_{i,t}(\cdot)$ and $w_{t,i} = P(I^* = i|\cH_t)$. Then, the posterior weights satisfy $w_{t+1,i} = \frac{w_{t,i} p_{i,t}(Y_t)}{\sum_{j=1}^n w_{t,j} p_{j,t}(Y_t)}$. Moreover, with per-expert log-loss $\ell_{i,t} = -\log p_{i,t} (Y_t)$, this update recovers the multiplicative weights update.
\end{theorem}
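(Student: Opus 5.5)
The plan has three parts: identify the minimizer, derive the weight recursion, and rewrite it in multiplicative-weights form.

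\textbf{Step 1: the minimizer.} The sequence loss $\cL(\theta)=\E[\sum_t -\log q_{\theta,t}(Y_t\mid\cH_t)]$ decomposes by the tower property into a sum over rounds. Each term is $\E_{\cH_t}\bigl[\E_{Y_t\sim P(\cdot\mid\cH_t)}[-\log q_{\theta,t}(Y_t\mid\cH_t)]\bigr]$. By \cref{lemma: Log-loss is uniquely minimized the true conditional}, each inner term equals $H(P(\cdot\mid\cH_t))+\mathrm{KL}(P(\cdot\mid\cH_t)\,\|\,q_{\theta,t})$. This is minimized pointwise in $\cH_t$, and uniquely on histories of positive probability, by $q_t^*=P(Y_t=\cdot\mid\cH_t)$. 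The key point is that the pointwise choice is always available, since $q_t$ may depend arbitrarily on $\cH_t$. Hence the unconstrained minimizer is the true conditional.

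\textbf{Step 2: mixture form and the recursion.} Condition on the latent expert and apply the law of total probability:
\[P(Y_t=y\mid\cH_t)=\sum_{i=1}^n P(I^*=i\mid\cH_t)\,P(Y_t=y\mid I^*=i,\cH_t)=\sum_{i=1}^n w_{t,i}\,p_{i,t}(y),\]
where the second equality uses the generative assumption. For the recursion, note that $\cH_{t+1}$ consists of $\cH_t$, the observed $Y_t$, and the next round's advice. I will assume the advice at round $t+1$ carries no additional information about $I^*$ given the past outcomes; equivalently, its likelihood does not depend on $i$, which is implicit in the model. Bayes' rule then gives
\[w_{t+1,i}=\frac{P(I^*=i\mid\cH_t)\,P(Y_t\mid I^*=i,\cH_t)}{\sum_j P(I^*=j\mid\cH_t)\,P(Y_t\mid I^*=j,\cH_t)}=\frac{w_{t,i}\,p_{i,t}(Y_t)}{\sum_j w_{t,j}\,p_{j,t}(Y_t)}.\]

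\textbf{Step 3: multiplicative weights form.} Substituting $p_{i,t}(Y_t)=e^{-\ell_{i,t}}$ gives $w_{t+1,i}\propto w_{t,i}e^{-\ell_{i,t}}$, which is the Hedge/MWU update with learning rate $1$. Unrolling yields $w_{t,i}\propto \pi_i\exp(-\sum_{s<t}\ell_{i,s})$. Equivalently, the logits are $\lambda_{t,i}=\log\pi_i-\sum_{s<t}\ell_{i,s}$, which update additively, $\lambda_{t+1,i}=\lambda_{t,i}-\ell_{i,t}$. This matches the softmax-over-logits latent-state parametrization and the linear superposition update used in \cref{subsection: recovering hedge}.

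The main obstacle is Step 2's conditional-independence requirement: the advice revealed after round $t$ must not depend on $I^*$, otherwise extra likelihood factors enter the update. I will state this explicitly as part of the data-generating assumption. A secondary caveat is that uniqueness in Step 1 holds only on histories of positive probability.
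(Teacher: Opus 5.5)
Your proposal is correct and follows essentially the same route as the paper: the tower property plus the log-loss/KL lemma to identify the minimizer as the true conditional, the law of total probability over $I^*$ for the mixture form, Bayes' rule for the recursion, and the substitution $p_{i,t}(Y_t)=e^{-\ell_{i,t}}$. Your explicit assumption that the next round's advice carries no information about $I^*$ is exactly what the paper uses implicitly when it writes $w_{t+1,i}=\Pr[I^*=i\mid\cH_t,Y_t]$; stating it, along with the positive-probability caveat for uniqueness, only makes the argument more careful.
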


\begin{proof}
    By the tower property, we have that \[\cL(\theta) = \sum_{t=1}^T \E[\E[-\log q_{\theta,t}(Y_t|\cH_t)]\cH_t],\] where for any fixed history $h$, \cref{lemma: Log-loss is uniquely minimized the true conditional} shows that $\E[-\log q_{\theta,t}(Y_t|\cH_t)]$ is uniquely minimized by 
    \[q_{\theta,t}(\cdot|h) = \Pr[Y_t=\cdot | \cH_t = h],\]
    and hence the unique population minimizer is $q^*$. Now, we compute the conditional distribution under the latent expert model
    \[\Pr[Y_t = y|\cH_t] = \sum_{i=1}^m \Pr[I^* = i|\cH_t] \Pr[Y_t = y|I^* = i, \cH_t] = \sum_{i=1}^n w_{t,i} p_{i,t}(y),\]
    and hence $q_t^*$ is the expert mixture. Next, after observing $Y_t$, Bayes' rule gives that
    \[w_{t+1,i} = \Pr[I^* = i|\cH_t,Y_t] = \frac{\Pr[Y_t|I^* = i, \cH_t] \Pr[I^* = i|\cH_t]}{\sum_j \Pr[Y_t| I^* = j, \cH_t] \Pr[I^* = j|\cH_t]}.\]
Then, \[\Pr[Y_t | I^* = i, \cH_t] = p_{i,t}(Y_t) \implies w_{t+1,i} = \frac{w_{t,i} p_{i,t}(Y_t)}{\sum_{j=1}^n w_{t,j} p_{j,t}(Y_t)}.\]
Finally, writing $p_{i,t}(Y_t) = e^{-\ell_{i,t}}$ recovers the multiplicative weights update.\qedhere\\
\end{proof}

\begin{corollary}
    Suppose that the transformer with latent context contains the multiplicative weights update predictor. Then, any global minimizer of the population log-loss implements the MWU predictor exactly, and if gradient-based training returns $\hat{\theta}$ with excess population risk at most $\epsilon$ such that $\cL(\hat{\theta}) - \cL(\theta^*) \leq \epsilon$, then 
    \begin{equation}\sum_{t=1}^T \E[\mathrm{KL}(q_t^*(\cdot|\cH_t) \| q_{\hat{\theta},t}(\cdot|\cH_t))] \leq \epsilon.\end{equation}
\end{corollary}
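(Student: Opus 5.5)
The corollary follows from the preceding theorem together with the KL decomposition in \cref{lemma: Log-loss is uniquely minimized the true conditional}. Under the latent-expert model, the theorem identifies the population minimizer $q^*$ of $\cL$: at each round it is the posterior mixture $\sum_i w_{t,i}p_{i,t}(\cdot)$, with weights evolving by the multiplicative weights update. By assumption, the transformer class with latent context contains a parameter $\theta^*$ whose predictor is exactly this MWU predictor. For instance, the exponential-weights construction of \cref{subsection: recovering hedge} stores $\lambda_t=\log w_t$ in the latent token and reads out the mixture by softmax attention. So the infimum of $\cL$ over the class equals the unconstrained infimum and is attained at $\theta^*$, giving $q_{\theta^*,t}=q_t^*$.

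The first step is exactness. Apply the tower property as in the theorem's proof, and then the lemma conditionally on each history $h$:
\begin{equation*}
\cL(\theta)=\sum_{t=1}^T \E\bigl[H(q_t^*(\cdot\mid\cH_t))\bigr]+\sum_{t=1}^T \E\bigl[\mathrm{KL}(q_t^*(\cdot\mid\cH_t)\,\|\,q_{\theta,t}(\cdot\mid\cH_t))\bigr].
\end{equation*}
The entropy term does not depend on $\theta$. Hence $\cL(\theta)-\cL(\theta^*)$ equals the summed expected KL, because the KL term vanishes at $\theta^*$.

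Any global minimizer $\theta$ therefore has zero summed expected KL. Each term is nonnegative, so $q_{\theta,t}(\cdot\mid h)=q_t^*(\cdot\mid h)$ for almost every history $h$ reachable under the data distribution. That is, every global minimizer implements the MWU predictor exactly on the support. The second claim then follows directly: if $\cL(\hat\theta)-\cL(\theta^*)\le\epsilon$, the same identity gives $\sum_t\E[\mathrm{KL}(q_t^*\|q_{\hat\theta,t})]\le\epsilon$.

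The main obstacle is technical rather than conceptual, and it concerns the decomposition itself. It requires $H(q_t^*)$ to be finite and the expectations to be well defined. This is immediate for a finite label alphabet $\Delta_K$. We also need $q_{\theta,t}$ to be a conditional distribution measurable in $\cH_t$, which holds because the transformer output depends only on the history tokens and the latent state it generates. I would also state explicitly that "implements exactly" means equality almost surely with respect to the history distribution. The data give no information about histories outside the support, so no stronger claim is possible.
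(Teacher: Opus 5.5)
Your proposal is correct and follows essentially the same route as the paper. You apply the entropy-plus-KL decomposition lemma conditionally on $\cH_t$, so the conditional excess log-loss of $\hat\theta$ over $q_t^*$ is exactly $\mathrm{KL}(q_t^*(\cdot\mid\cH_t)\,\|\,q_{\hat\theta,t}(\cdot\mid\cH_t))$, and sum over $t$ with the tower property. Beyond the paper's proof, you also make explicit the realizability step $q_{\theta^*,t}=q_t^*$ and the almost-sure (on-support) sense in which every global minimizer implements the MWU predictor.
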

\begin{proof}
Since $q_t^*(\cdot|\cH_t) = \Pr[Y_t = \cdot|\cH_t]$, applying \cref{lemma: Log-loss is uniquely minimized the true conditional} gives that
\[-\E[\log q_{\hat{\theta},t}(Y_t|\cH_t) | \cH_t] - \E[-\log q_t^*(Y_t|\cH_t) | \cH_t] = \mathrm{KL}(q_t^*(\cdot|\cH_t) \| q_{\hat{\theta},t}(\cdot|\cH_t)).\]
Therefore, by the tower property,
\[\sum_{t=1}^T \E[ \mathrm{KL}(q_t^*(\cdot|\cH_t) \| q_{\hat{\theta},t}(\cdot|\cH_t))] = \cL(\hat{\theta}) - \cL(\theta^*) \leq \epsilon,\] which proves the corollary.\qedhere
\end{proof}

Moreover, the gradient field of the one-step log-loss is the displacement from the current expert mixture to the multiplicative weights posterior. We formalize this below.
\begin{lemma}
    Fix one round and define $r_{i,t}\coloneqq p_{i,t}(Y_t)$ and $L_t(\lambda) = -\log \sum_{i=1}^n \mathrm{softmax}(\lambda)_i r_{i,t}$. Let $w = \mathrm{softmax}(\lambda)$, and let $w_i^+ = \frac{w_i r_{i,t}}{\sum_j w_j r_{j,t}}$. Then, $w^+$ is the multiplicative weights update and \begin{equation}\nabla_\lambda L_t(\lambda) = w- w^+.\end{equation}
\end{lemma}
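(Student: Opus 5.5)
Both claims come from one direct computation of $\nabla_\lambda L_t$, using the standard Jacobian of the softmax. I will treat the two parts of the statement separately, because the first is definitional.

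\emph{Identifying $w^+$ as the multiplicative weights update.} Write $r_{i,t} = p_{i,t}(Y_t) = e^{-\ell_{i,t}}$ with the per-expert log-loss $\ell_{i,t}$ from the preceding theorem. Then
\[
w_i^+ \propto w_i e^{-\ell_{i,t}},
\]
which is exactly the multiplicative weights (Hedge) update with unit learning rate, normalized over experts. It also coincides with the Bayesian posterior update established in the previous theorem. Equivalently, in logit space we have $\lambda^+ = \lambda - \ell_t$ up to an additive constant. This is the same additive log-weight update that our Weighted Majority construction implements in the superposition $\tilde{\mathbf{u}}_{z_t}$.

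\emph{Computing the gradient.} Set $Z(\lambda) = \sum_j w_j r_{j,t}$, so that $L_t = -\log Z$ and, by the chain rule,
\[
\partial_{\lambda_k} L_t = -\frac{1}{Z}\sum_j r_{j,t}\,\partial_{\lambda_k} w_j .
\]
I will use the softmax Jacobian $\partial_{\lambda_k} w_j = w_j(\mathbbm{1}\{j=k\} - w_k)$. Substituting it in gives
\[
\sum_j r_{j,t}\, w_j(\mathbbm{1}\{j=k\}-w_k) = w_k r_{k,t} - w_k Z,
\]
and dividing by $-Z$ yields
\[
\partial_{\lambda_k} L_t = -\frac{w_k r_{k,t}}{Z} + w_k = w_k - w_k^+ .
\]
Stacking over $k$ gives $\nabla_\lambda L_t(\lambda) = w - w^+$.

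\emph{Remaining care.} There is no real obstacle here. The only points to check are that $Z>0$, so that $L_t$ is differentiable, and the bookkeeping of the $\mathbbm{1}\{j=k\}$ term. To guarantee $Z>0$ I will assume at least one $r_{i,t}>0$. This holds automatically because the softmax weights are strictly positive, provided some expert assigns positive probability to the realized outcome $Y_t$.

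Optionally, I will add the interpretation: a gradient step $\lambda \leftarrow \lambda - \eta(w - w^+)$ moves the current mixture toward the multiplicative weights posterior.
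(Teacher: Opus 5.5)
Your proposal is correct and follows the paper's proof essentially line for line: the chain rule through $-\log Z$, the softmax Jacobian $\partial_{\lambda_k} w_j = w_j(\mathbbm{1}\{j=k\} - w_k)$, and simplification to $w_k - w_k^+$. Your identification of $w^+$ via $r_{i,t} = e^{-\ell_{i,t}}$ and your check that $Z>0$ are small additions the paper leaves implicit; the paper makes the former identification only in the preceding theorem.
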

\begin{proof}
    Let $q(\lambda) = \sum_j w_j r_{j,t}$. For the softmax function, we have that
    \begin{align*}
        \frac{\partial L_t}{\partial \lambda_i} &= -\frac{1}{q} \frac{\partial q}{\partial \lambda_i} \\
        &= -\frac{1}{q} \sum_j r_{j,t} \frac{\partial w_j}{\partial \lambda_i} \\
        &= -\frac{1}{q} \sum_j r_{j,t} w_j (\mathbbm{1}\{i=j\} - w_i)\\
        &=  -\frac{1}{q} w_i (r_{i,t} - q) = w_i - \frac{w_i r_{i,t}}{q} = w_i - w_i^+,
    \end{align*}
which completes the proof.
\end{proof}

\section{Proofs for the Tabular $Q$-learning Construction} \label{app:RLproofs}
\subsection{Proof of the Generalized Attention Chooser}
\label{app:generalized_chooser}

In this section, we provide the formal construction and proof for the generalized attention chooser $\mathrm{FO}(\mathcal{T}, -\ell)$. 

\begin{lemma}[Generalized Attention Chooser]
\label{lemma:generalized_chooser}
Fix any token subset $\mathcal{T} \subseteq \mathcal{V}$, shift $\ell \ge 0$, and error tolerance $\varepsilon \in (0,1)$.
Under sinusoidal positional encoding and orthogonal token embeddings, there exists a construction of query and key matrices $Q, K \in \mathbb{R}^{(2d_{PE}) \times d}$ such that the resulting attention head $\mathrm{FO}(\mathcal{T}, -\ell)$ satisfies:

For any sequence $\mathbf{h}_{[T]}$ and any position $i \in [T]$:
\begin{itemize}
    \item If $\langle \tilde{\mathbf{h}}_i, \tilde{\mathbf{u}}_v \rangle = 1$ for some $v \in \mathcal{T}$ (i.e., the token is in $\mathcal{T}$), then the attention score to $i-\ell$ satisfies $s_{i,i-\ell} \ge 1 - \varepsilon$.
    \item Otherwise, the attention score to the BOS sink satisfies $s_{i,1} \ge 1 - \varepsilon$.
\end{itemize}
where $s_{i,j}$ is the softmax attention score from position $i$ to $j$.
\end{lemma}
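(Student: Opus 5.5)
We build the generalized chooser as a sum of two bilinear score components, following the fixed-offset construction of \cite{zhu2025reasoningsuperpositiontheoreticalperspective}. One component depends only on the query's token identity and positional shifts; it rewards the key at position $i-\ell$. The other component rewards the BOS sink. Throughout, write $\mathbf p_j \in \mathbb R^{d_{PE}}$ for the sinusoidal encoding at position $j$, made of pairs $(\cos(\omega_k j), \sin(\omega_k j))$.

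\textbf{Offset component.} Sinusoidal encodings admit a fixed block-rotation matrix $\mathbf R_\ell$ with $\mathbf R_\ell \mathbf p_i = \mathbf p_{i-\ell}$ for every $i$. We also have $\langle \mathbf p_i,\mathbf p_j\rangle = \sum_k \cos(\omega_k(i-j))$, which is maximized, with value $d_{PE}/2$, exactly when $j=i$. Over the finite horizon $[T]$, and with the frequencies chosen incommensurately or small enough that there is no aliasing, there is a gap $\delta_T>0$ with $\langle \mathbf p_{i-\ell},\mathbf p_j\rangle \le d_{PE}/2-\delta_T$ whenever $j\ne i-\ell$.

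\textbf{Gating by $\mathcal T$.} We need this positional preference to switch on only when the query token lies in $\mathcal T$. Define the indicator direction $\mathbf t = \sum_{v\in\mathcal T}\tilde{\mathbf u}_v$. By orthonormality, $\langle \mathrm{id}(\mathbf h_i),\mathbf t\rangle = 1$ when the token is in $\mathcal T$ and $0$ otherwise. A bilinear form cannot multiply a token indicator by a positional vector inside the query. So we split the query and key into two $d_{PE}$-dimensional halves, which is why the matrices are $(2d_{PE})\times d$.

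\begin{itemize}
\item The first half carries the positional match. The query side is $\lambda\mathbf R_\ell \mathbf p_i$ and the key side is $\mathbf p_j$.
\item The second half carries the gate and the sink bias. The query side places $\lambda\mu\,(1-\langle\mathrm{id}(\mathbf h_i),\mathbf t\rangle)$ on a fixed coordinate. The constant $1$ is obtained from a constant coordinate of $\mathbf p_i$, e.g.\ a zero-frequency pair. On the key side, that coordinate is $\mathbb 1\{j=1\}$, read off the BOS identity $\tilde{\mathbf u}_{\mathsf{BOS}}$.
\end{itemize}

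As written, the non-$\mathcal T$ query still has the positional term. There are two ways to remove it.
\begin{enumerate}
\item Take the sink bonus $\mu$ larger than the entire positional range $d_{PE}$ plus the required margin.
\item Alternatively, subtract the positional term when the token is not in $\mathcal T$. This again needs a product of an indicator and a vector, but $\mathrm{id}$ and $\mathrm{pos}$ occupy distinct coordinates, so it can be arranged with a $Q$ that reads both blocks.
\end{enumerate}
I would use the first option: it suffices because BOS then wins whenever the gate is on.

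The main obstacle is the case $h_i\in\mathcal T$, where the BOS sink must not compete. We have $i-\ell\ge 1$, and when $i-\ell=1$ there is no conflict. Otherwise the BOS key receives no gate bonus because the gate is off, and its positional score is at most $d_{PE}/2-\delta_T$. So every wrong key has logit at most $\lambda(d_{PE}/2-\delta_T)$, while the target has $\lambda d_{PE}/2$. Softmax then gives
\[
s_{i,i-\ell}\ \ge\ \frac{1}{1+T e^{-\lambda\delta_T}}\ \ge\ 1-\varepsilon
\quad\text{for}\quad \lambda \ge \delta_T^{-1}\log(T/\varepsilon).
\]
When $h_i\notin\mathcal T$, BOS leads every other key by at least $\lambda(\mu - d_{PE})\ge\lambda$, and the same bound with $\delta_T$ replaced by $1$ gives $s_{i,1}\ge1-\varepsilon$. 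The delicate point is certifying $\delta_T>0$ uniformly over $[T]$. I would handle it by choosing frequencies $\omega_k\le \pi/T$, so that the cosine is strictly decreasing in $|i-j|$ on the relevant range. This gives an explicit gap $\delta_T \gtrsim 1-\cos(\pi/T)$, which makes $\lambda$ polynomial in $T$ and logarithmic in $1/\varepsilon$.
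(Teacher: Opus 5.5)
Your proposal is correct and is essentially the paper's construction. The query and key are $2d_{PE}$-dimensional: the first half performs the shifted positional match through the sinusoidal rotation, and the second half adds a sink bonus gated by the query token's membership in $\mathcal{T}$, scaled to dominate the full positional range, with a softmax-concentration bound closing both cases. The differences are only in implementation. The paper gates linearly via $\langle\tilde{\mathbf{u}}_{\overline{\mathcal{T}}},\tilde{\mathbf{h}}_i\rangle$, so it needs no constant positional coordinate, and it rewards the sink through $\langle\bar{\mathbf{p}}_1,\bar{\mathbf{p}}_j\rangle$ rather than the BOS identity. It also simply asserts $\Delta_{pos}>0$ on the finite horizon, whereas you certify the gap explicitly by restricting the frequencies.
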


\begin{proof}
Following the notation of \cite{zhu2025reasoningsuperpositiontheoreticalperspective}, for each position $i$ in the token sequence, there is a corresponding positional encoding $\mathbf{p}_i = \text{PosEncode}_{\theta_{PE}}(i) \in \mathbb{R}^{d}$, where only the last $d_{PE}$ entries are non-zero. Let $\bar{\mathbf{p}} = \text{pos}(\mathbf{p}_i)$. Additionally, let 
\[
\tilde{\mathbf{u}}_{\overline{\mathcal{T}}} = \sum_{v \notin \mathcal{T}} \tilde{\mathbf{u}}_v
\]
denote the unnormalized superposition of all token embeddings outside $\mathcal{T}$.

We assume the representation is a direct sum $\mathcal{H} = \mathcal{H}_{token} \oplus \mathcal{H}_{pos}$ with $\mathcal{H}_{token} \perp \mathcal{H}_{pos}$. The vocabulary is mapped to a $3d_{TE}$ dimensional buffer and concatenated with the positional encodings of dimension $d_{PE}$, yielding $d = 3d_{TE} + d_{PE}$.

We construct the query and key projection matrices $Q, K \in \mathbb{R}^{(2d_{PE}) \times d}$ in block form. Let $\otimes$ denote the outer product:
\[
Q = 
\begin{bmatrix}
    0_{d_{PE} \times d_{TE}} & 0_{d_{PE} \times d_{TE}} &0_{d_{PE} \times d_{TE}} & I_{d_{PE}} \\
    \xi \bar{\mathbf{p}}_1 \otimes \tilde{\mathbf{u}}_{\overline{\mathcal{T}}} & 0_{d_{PE} \times d_{TE}}& 0_{d_{PE} \times d_{TE}}  & 0_{d_{PE} \times d_{TE}} 
\end{bmatrix},
\quad
K = 
\begin{bmatrix}
    0_{d_{PE} \times 3d_{TE}}                                                  & \eta R^{(\ell)}          \\
    0_{d_{PE} \times 3d_{TE}}                                                  & \eta I_{d_{PE}}          
\end{bmatrix},
\]
where $\xi, \eta > 0$ are scaling constants to be specified, and $R^{(\ell)}$ is the shift matrix for sinusoidal positional encodings (such that $R^{(\ell)}\bar{\mathbf{p}}_i = \bar{\mathbf{p}}_{i+\ell}$). 

For any position $i$ and $j$, evaluating $Q(\mathbf{h}_i + \mathbf{p}_i)$ and $K(\mathbf{h}_j + \mathbf{p}_j)$ yields the query and key vectors:
\[
\mathbf{q}_i = Q(\mathbf{h}_i + \mathbf{p}_i)
= \begin{bmatrix} 
    \bar{\mathbf{p}}_i \\ 
    \xi \langle \tilde{\mathbf{u}}_{\overline{\mathcal{T}}}, \tilde{\mathbf{h}}_i \rangle \bar{\mathbf{p}}_1 
  \end{bmatrix},
\quad
\mathbf{k}_j = K(\mathbf{h}_j + \mathbf{p}_j)
= \begin{bmatrix} 
    \eta \bar{\mathbf{p}}_{j+\ell} \\ 
    \eta \bar{\mathbf{p}}_j 
  \end{bmatrix}.
\]

The pre-softmax attention logit is given by their inner product:
\begin{equation}
\langle \mathbf{q}_i, \mathbf{k}_j \rangle 
= \eta (\langle \bar{\mathbf{p}}_i, \bar{\mathbf{p}}_{j+\ell} \rangle 
+ \xi \langle \tilde{\mathbf{u}}_{\overline{\mathcal{T}}}, \tilde{\mathbf{h}}_i \rangle \langle \bar{\mathbf{p}}_1, \bar{\mathbf{p}}_j \rangle).
\label{eq:gen_attention_logit}
\end{equation}

Define the strict positive margins for positional separation and the BOS token:
\begin{align*}
\Delta_{pos} &:= \min_{i, j \neq i-\ell} \Big( \langle \bar{\mathbf{p}}_i, \bar{\mathbf{p}}_i \rangle - \langle \bar{\mathbf{p}}_i, \bar{\mathbf{p}}_{j+\ell} \rangle \Big) > 0, \\
\Delta_{bos} &:= \min_{j > 1} \Big( \langle \bar{\mathbf{p}}_1, \bar{\mathbf{p}}_1 \rangle - \langle \bar{\mathbf{p}}_1, \bar{\mathbf{p}}_j \rangle \Big) > 0.
\end{align*}

\paragraph{Case 1: $h_i \in \mathcal{T}$.}
By the orthonormal embedding assumption, $\tilde{\mathbf{h}}_i$ is orthogonal to all $\tilde{\mathbf{u}}_v$ for $v \notin \mathcal{T}$. Thus, $\langle \tilde{\mathbf{u}}_{\overline{\mathcal{T}}}, \tilde{\mathbf{h}}_i \rangle = 0$, and Equation \ref{eq:gen_attention_logit} reduces to:
\[
\langle \mathbf{q}_i, \mathbf{k}_j \rangle = \eta \langle \bar{\mathbf{p}}_i, \bar{\mathbf{p}}_{j+\ell} \rangle.
\]
The target key $j = i-\ell$ achieves logit $\eta \langle \bar{\mathbf{p}}_i, \bar{\mathbf{p}}_i \rangle$, while any other $j \neq i-\ell$ is bounded above by $\eta (\langle \bar{\mathbf{p}}_i, \bar{\mathbf{p}}_i \rangle - \Delta_{pos})$. 
The logit gap is strictly $\Delta \ge \eta \Delta_{pos}$. By standard softmax concentration, choosing $\eta \ge \frac{1}{\Delta_{pos}} \log \frac{i}{\varepsilon}$ guarantees $s_{i,i-\ell} \ge 1 - \varepsilon$.

\paragraph{Case 2: $h_i \notin \mathcal{T}$.}
Here, $\tilde{\mathbf{h}}_i$ matches exactly one component of $\tilde{\mathbf{u}}_{\overline{\mathcal{T}}}$, yielding $\langle \tilde{\mathbf{u}}_{\overline{\mathcal{T}}}, \tilde{\mathbf{h}}_i \rangle = 1$. The logit becomes:
\[
\langle \mathbf{q}_i, \mathbf{k}_j \rangle 
= \eta \langle \bar{\mathbf{p}}_i, \bar{\mathbf{p}}_{j+\ell} \rangle 
+ \eta \xi \langle \bar{\mathbf{p}}_1, \bar{\mathbf{p}}_j \rangle.
\]
We bound the difference between the BOS sink ($j=1$) and any other position $j > 1$:
\[
\langle \mathbf{q}_i, \mathbf{k}_1 \rangle - \langle \mathbf{q}_i, \mathbf{k}_j \rangle 
= \eta \xi \Big( \langle \bar{\mathbf{p}}_1, \bar{\mathbf{p}}_1 \rangle - \langle \bar{\mathbf{p}}_1, \bar{\mathbf{p}}_j \rangle \Big) 
- \eta \Big( \langle \bar{\mathbf{p}}_i, \bar{\mathbf{p}}_{j+\ell} \rangle - \langle \bar{\mathbf{p}}_i, \bar{\mathbf{p}}_{1+\ell} \rangle \Big).
\]
The first term is lower-bounded by $\eta \xi \Delta_{bos}$. Using the trivial bound $|\langle \bar{\mathbf{p}}_i, \bar{\mathbf{p}}_{j+\ell} \rangle - \langle \bar{\mathbf{p}}_i, \bar{\mathbf{p}}_{1+\ell} \rangle| \le 2d_{PE}$, the penalty is at most $2\eta d_{PE}$.
Thus:
\[
\langle \mathbf{q}_i, \mathbf{k}_1 \rangle - \langle \mathbf{q}_i, \mathbf{k}_j \rangle \ge \eta \xi \Delta_{bos} - 2\eta d_{PE}.
\]
Choosing $\xi \ge \frac{3 d_{PE}}{\Delta_{bos}}$ forces this gap to be at least $\eta d_{PE}$. For sufficiently large $\eta$, the probability mass concentrates on the sink, ensuring $s_{i,1} \ge 1 - \varepsilon$. 

In both cases, the target attention score exceeds $1 - \varepsilon$, completing the construction of $\mathrm{FO}(\mathcal{T}, -\ell)$.
\end{proof}

\subsection{Full Construction for Theorem~\ref{thm:qlearn}}
\label{app:qlearn}
We refer to $\mathrm{id}(\mathbf{x})$ as the \emph{identity subspace}, $\mathrm{buf}_1(\mathbf{x}), \mathrm{buf}_2(\mathbf{x})$ as the two \emph{buffer subspaces}, and $\mathrm{pos}(\mathbf{x})$ as the \emph{positional encoding subspace}. We write $\mathbf{P}_B$ for the orthogonal projector onto block $B \in \{\mathrm{id}, \mathrm{buf}_1, \mathrm{buf}_2\}$, $\mathbf{S}_{A \to B}$ for the rigid swap that copies block $A$ into block $B$ (zero elsewhere), and $\boldsymbol{\Pi}_\tau^B$ for the rank-1 projector onto the orthonormal axis $\mathbf{u}_\tau$ within block $B$ (so e.g.\ $\boldsymbol{\Pi}_S^{\mathrm{id}} = \sum_{s \in \mathcal{S}} \mathbf{u}_s \mathbf{u}_s^\top$ acting on $\mathrm{id}$).

\paragraph{Building block: generalized attention chooser.}
We use a generalized fixed-offset attention mechanism extending \cite{zhu2025reasoningsuperpositiontheoreticalperspective} (proven in \cref{app:generalized_chooser}). We denote the operator by
\[
\mathrm{FO}(\mathcal{T}, -\ell),
\]
which routes attention from tokens in $\mathcal{T}$ to the position $\ell$ steps earlier, and to BOS otherwise.

\subsection{Layer 1: Workspace routing and context fetching}

This layer routes state information, assigns phase tags, and retrieves context vectors corresponding to the current Q-table.

\begin{itemize}[nosep,leftmargin=1.5em]

\item \textbf{Head 1.1 (state $\to$ actions).}
\[
\mathrm{FO}(\mathcal{A},-1), \qquad
W_V=\mathbf{P}_{\mathrm{id}}\boldsymbol{\Pi}_S^{\mathrm{id}}, \qquad
W_O=\mathbf{S}_{\mathrm{id}\to\mathrm{id}} .
\]
Routes $\mathbf{u}_{s_t}$ into $\mathrm{id}(a)$ for every action token.

\item \textbf{Head 1.2 (state $\to$ Select).}
\[
\mathrm{FO}(\{\mathsf{Select}\},-2), \qquad
W_V=\mathbf{P}_{\mathrm{id}}\boldsymbol{\Pi}_S^{\mathrm{id}}, \qquad
W_O=\mathbf{S}_{\mathrm{id}\to\mathrm{id}} .
\]
Routes $\mathbf{u}_{s_{t+1}}$ into $\mathrm{id}(\mathsf{Select})$.

\item \textbf{Head 1.3 (state $\to$ reward).}
\[
\mathrm{FO}(\{r_t\},-2), \qquad
W_V=\mathbf{P}_{\mathrm{id}}\boldsymbol{\Pi}_S^{\mathrm{id}}, \qquad
W_O=\mathbf{S}_{\mathrm{id}\to\mathrm{id}} .
\]
Routes $\mathbf{u}_{s_t}$ into $\mathrm{id}(r_t)$.

\item \textbf{Head 1.4 (current-phase tagging).}
\[
\mathrm{FO}(\mathcal{A},-2), \qquad
W_V=\mathbf{P}_{\mathrm{id}}\boldsymbol{\Pi}_{Q_{\mathrm{curr}}}^{\mathrm{id}}, \qquad
W_O=\mathbf{I}.
\]
Only $a_t$ observes $Q_{\mathrm{curr}}$ at offset $-2$, so only $a_t$ receives the tag $\mathbf{u}_{Q_{\mathrm{curr}}}$.

\item \textbf{Head 1.5 (next-phase tagging).}
\[
W_Q=
\left(\mathbf{u}_{Q_{\mathrm{next}}}
\otimes
\Bigl(\sum_{a\in\mathcal{A}}\mathbf{u}_a\Bigr)\right)
\mathbf{P}_{\mathrm{id}},
\qquad
W_K=
\boldsymbol{\Pi}_{Q_{\mathrm{next}}}^{\mathrm{id}}
\mathbf{P}_{\mathrm{id}},
\]
\[
W_V=
\mathbf{P}_{\mathrm{id}}
\boldsymbol{\Pi}_{Q_{\mathrm{next}}}^{\mathrm{id}},
\qquad
W_O=\mathbf{I}.
\]
Each candidate action token attends to the unique $Q_{\mathrm{next}}$ marker in its causal past and receives $\mathbf{u}_{Q_{\mathrm{next}}}$.

\item \textbf{Head 1.6 (context fetch).}
\[
W_Q=
\Bigl(
\boldsymbol{\Pi}_{\mathcal{A}}^{\mathrm{id}}
+
\left(\mathbf{u}_{\mathrm{Update}}
\otimes
\sum_a \mathbf{u}_a\right)
\Bigr)\mathbf{P}_{\mathrm{id}},
\]
\[
W_K=
\Bigl(
\boldsymbol{\Pi}_{\mathcal{A}}^{\mathrm{id}}
+
\boldsymbol{\Pi}_{\mathrm{Update}}^{\mathrm{id}}
\Bigr)\mathbf{P}_{\mathrm{id}},
\qquad
W_V=\mathbf{P}_{\mathrm{buf}_1},
\qquad
W_O=\mathbf{I}.
\]
Each action token retrieves its context vector
$\sum_s Q_t(s,a)\mathbf{u}_s$
into $\mathrm{buf}_1$.

\end{itemize}

\paragraph{Layer 1 output.}
Each action token stores the routed state in $\mathrm{id}$ and the corresponding Q-table column in $\mathrm{buf}_1$, while phase tags distinguish visited and candidate actions.

\subsection{Layer 2: Q-value evaluation}

This layer evaluates Q-values by contracting routed state vectors against retrieved context vectors.

\begin{itemize}[nosep,leftmargin=1.5em]

\item \textbf{Head 2.1 (Select fetches $s_t$).}
\[
W_Q=
((\mathbf{u}_{Q_{\mathrm{curr}}}+\mathbf{u}_{\mathcal{A}})
\otimes
\mathbf{u}_{\mathrm{Select}})
\mathbf{P}_{\mathrm{id}},
\]
\[
W_K=
\Bigl(
\boldsymbol{\Pi}_{Q_{\mathrm{curr}}}^{\mathrm{id}}
+
\boldsymbol{\Pi}_{\mathcal{A}}^{\mathrm{id}}
\Bigr)\mathbf{P}_{\mathrm{id}},
\]
\[
W_V=
\boldsymbol{\Pi}_S^{\mathrm{id}}
\mathbf{P}_{\mathrm{id}},
\qquad
W_O=
\mathbf{S}_{\mathrm{id}\to\mathrm{buf}_1}.
\]
$\mathrm{Select}$ retrieves $\mathbf{u}_{s_t}$ from $a_t$.

\item \textbf{Head 2.2 (Q-value computation).}
\[
W_Q=\boldsymbol{\Pi}_S^{\mathrm{id}},
\qquad
W_K=\mathbf{S}_{\mathrm{buf}_1\to\mathrm{id}},
\]
\[
W_V=
\boldsymbol{\Pi}_{Q_{\mathrm{next}}}^{\mathrm{id}}
-
\boldsymbol{\Pi}_{Q_{\mathrm{curr}}}^{\mathrm{id}},
\qquad
W_O=
\mathbf{S}_{\mathrm{id}\to\mathrm{buf}_2}.
\]
The inner product computes $Q_t(s,a)$ and stages:
\[
-Q_t(s_t,a_t)\mathbf{u}_{Q_{\mathrm{curr}}}
\quad\text{and}\quad
Q_t(s_{t+1},a)\mathbf{u}_{Q_{\mathrm{next}}}
\]
in $\mathrm{buf}_2$.

\end{itemize}

\paragraph{Layer 2 output.}
Current-state and next-state Q-values are stored in $\mathrm{buf}_2$ on disjoint phase axes.

\subsection{Layer 3: Action selection and maximization}

\begin{itemize}[nosep,leftmargin=1.5em]

\item \textbf{Head 3.1 (action selection).}
\[
W_Q=
(\mathbf{u}_{Q_{\mathrm{next}}}
\otimes
\mathbf{u}_{\mathrm{Select}})
\mathbf{P}_{\mathrm{id}},
\]
\[
W_K=
\beta\,
\mathbf{S}_{\mathrm{buf}_2\to\mathrm{id}}
\boldsymbol{\Pi}_{Q_{\mathrm{next}}}^{\mathrm{buf}_2},
\]
\[
W_V=
\mathbf{P}_{\mathrm{id}}
\boldsymbol{\Pi}_{\mathcal{A}}^{\mathrm{id}},
\qquad
W_O=\mathbf{I}.
\]
Implements softmax action selection over
$Q_t(s_{t+1},a)$,
recovering greedy selection as $\beta\to\infty$.

\item \textbf{Head 3.2 (max extraction).}
Uses the same query and key as Head 3.1, with
\[
W_V=
(\mathbf{u}_{\mathrm{Select}}
\otimes
\mathbf{u}_{Q_{\mathrm{next}}})
\mathbf{P}_{\mathrm{buf}_2},
\qquad
W_O=
\mathbf{S}_{\mathrm{id}\to\mathrm{buf}_2}.
\]
Stores
$\max_a Q_t(s_{t+1},a)$
in $\mathrm{buf}_2(\mathrm{Select})$.

\item \textbf{Head 3.3 (Update inheritance).}
\[
W_Q=
(\mathbf{u}_{Q_{\mathrm{curr}}}+\mathbf{u}_{\mathcal{A}})
\otimes
\mathbf{u}_{\mathrm{Update}}
\mathbf{P}_{\mathrm{id}},
\]
\[
W_K=
\Bigl(
\boldsymbol{\Pi}_{Q_{\mathrm{curr}}}^{\mathrm{id}}
+
\boldsymbol{\Pi}_{\mathcal{A}}^{\mathrm{id}}
\Bigr)\mathbf{P}_{\mathrm{id}},
\]
\[
W_V=
\boldsymbol{\Pi}_{\mathcal{A}}^{\mathrm{id}}
+
\mathbf{P}_{\mathrm{buf}_1},
\qquad
W_O=\mathbf{I}.
\]
$\mathrm{Update}$ retrieves the visited action tag together with
$Q_t(\cdot,a_t)$.

\end{itemize}

\paragraph{Layer 3 output.}
$\mathrm{Select}$ stores $(s_{t+1},a^*,\max Q)$ while $\mathrm{Update}$ stores $(a_t,Q_t(\cdot,a_t))$.

\subsection{Layer 4: TD error assembly}

\begin{itemize}[nosep,leftmargin=1.5em]

\item \textbf{Head 4.1 (subtract current value).}
\[
W_Q=
(\mathbf{u}_{Q_{\mathrm{curr}}}
\otimes
\mathbf{u}_{\mathrm{Update}})
\mathbf{P}_{\mathrm{id}},
\]
\[
W_K=
\mathbf{S}_{\mathrm{buf}_2\to\mathrm{id}}
\boldsymbol{\Pi}_{Q_{\mathrm{curr}}}^{\mathrm{buf}_2},
\]
\[
W_V=
\alpha\,
\mathbf{S}_{\mathrm{id}\to\mathrm{buf}_1}
\boldsymbol{\Pi}_S^{\mathrm{id}},
\qquad
W_O=\mathbf{I}.
\]

\item \textbf{Head 4.2 (add reward).}
\[
W_Q=
(\mathbf{u}_r
\otimes
\mathbf{u}_{\mathrm{Update}})
\mathbf{P}_{\mathrm{id}},
\]
\[
W_K=
\mathbf{S}_{\mathrm{buf}_1\to\mathrm{id}}
\boldsymbol{\Pi}_r^{\mathrm{buf}_1},
\]
\[
W_V=
\alpha\,
\mathbf{S}_{\mathrm{id}\to\mathrm{buf}_1}
\boldsymbol{\Pi}_S^{\mathrm{id}},
\qquad
W_O=\mathbf{I}.
\]

\item \textbf{Head 4.3 (add discounted max).}
\[
W_Q=
(\mathbf{u}_{\mathrm{Select}}
\otimes
\mathbf{u}_{\mathrm{Update}})
\mathbf{P}_{\mathrm{id}},
\]
\[
W_K=
\mathbf{S}_{\mathrm{buf}_2\to\mathrm{id}}
\boldsymbol{\Pi}_{\mathrm{Select}}^{\mathrm{buf}_2},
\]
\[
W_V=
\alpha\gamma\,
\boldsymbol{\Pi}_S^{\mathrm{buf}_1},
\qquad
W_O=\mathbf{I}.
\]

\end{itemize}

\paragraph{Final output.}
Summing contributions in the residual stream yields
\[
\sum_s Q_t(s,a_t)\mathbf{u}_s
+
\alpha\bigl(r_t + \gamma \max_a Q_t(s_{t+1},a) - Q_t(s_t,a_t)\bigr)\mathbf{u}_{s_t},
\]
which equals $\sum_s Q_{t+1}(s,a_t)\mathbf{u}_s$. This forms the updated context entry for the next step. \qed

\section{Verification of Transformer Architecture} \label{app:handwire}

To validate that the proposed mechanisms implement the intended algorithms, we explicitly instantiated the constructions as handwired transformers with fixed weights matching the derived attention patterns. We then ran these models on the same inputs as their corresponding ground-truth algorithms and compared their internal states and outputs.

\begin{figure}[hbt!]
    \centering
    \includegraphics[width=0.8\linewidth]{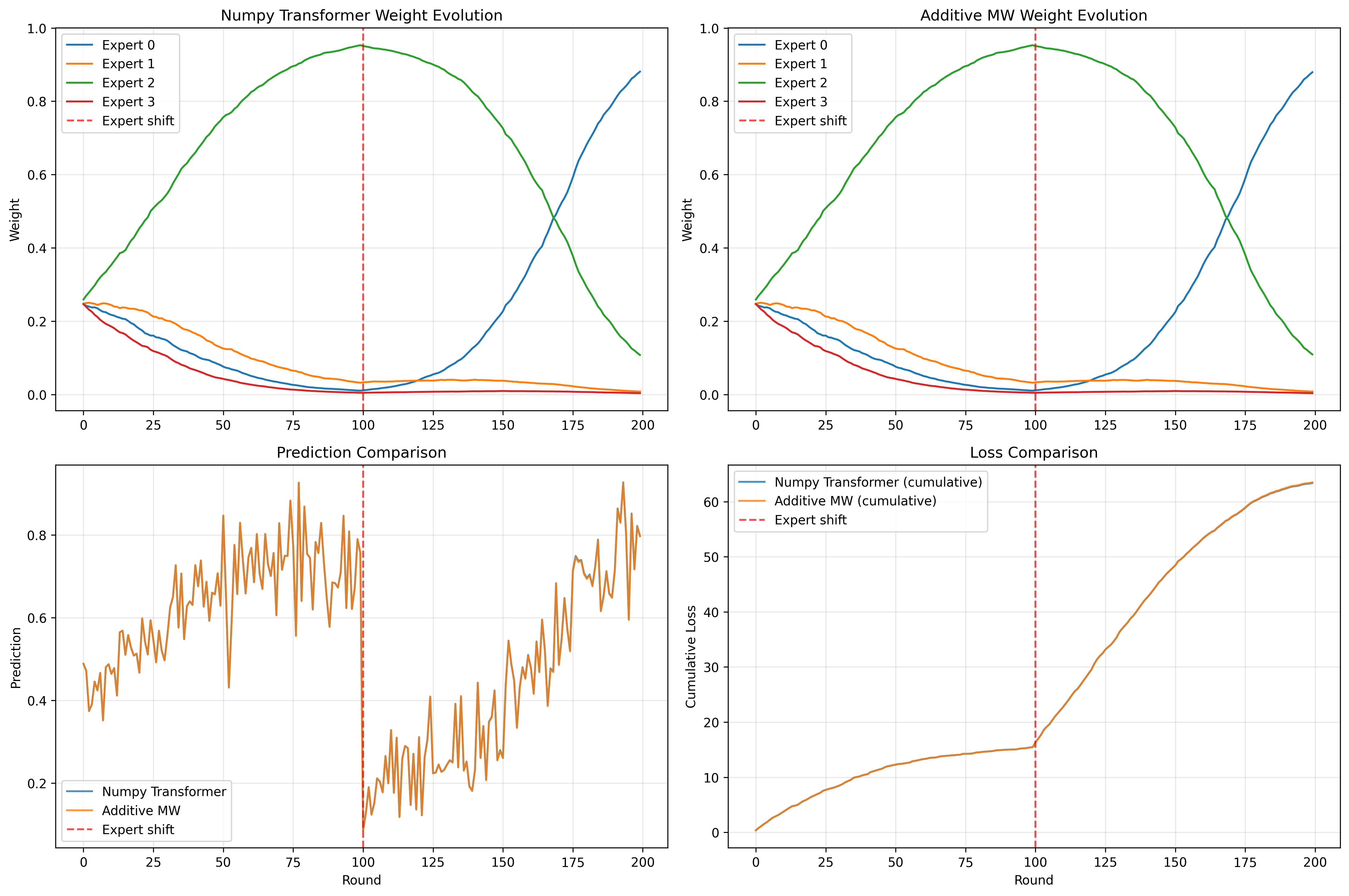}
    \caption{Multiplicative weights vs Transformer Expert weights}
    \label{fig:mw_handwired}
\end{figure}

\begin{figure}[h!]
    \centering
    \includegraphics[width=0.99\linewidth]{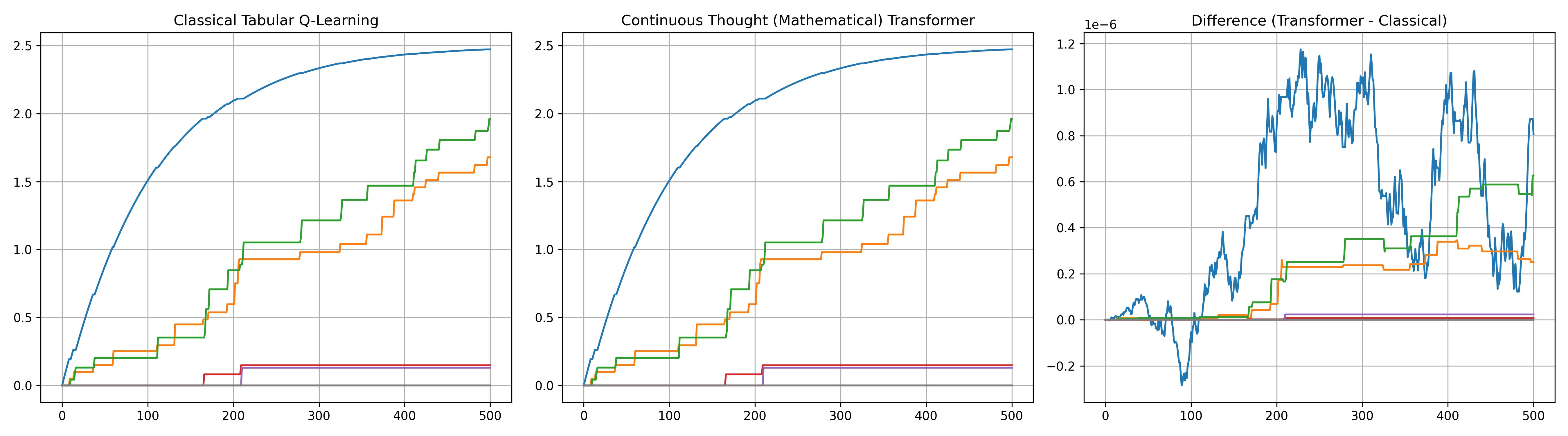}
    \caption{Tabular $Q$-learning vs Transformer $Q$-learning via COCONUT}
    \label{fig:q_handwired}
\end{figure}

Figures~\ref{fig:mw_handwired} and~\ref{fig:q_handwired} show side-by-side comparisons between the handwired transformer and the ground-truth algorithms. In both cases, the trajectories are indistinguishable: the MWU construction assigns identical weights to experts over time, and the $Q$-learning construction produces Q-values that exactly match those of tabular $Q$-learning. These results provide direct empirical confirmation that the attention-based circuits derived in the proofs implement the desired online learning algorithms.

\newpage

\section{Experimental Details} \label{app:exp}
This section describes our experimental procedures and in-depth results for the unsupervised learning from expert-data, imitation learning of $Q$-learning, and LLM inference experiments. We provide our codebase in \url{https://github.com/emiletimothy/transformer_decisionmaker}.

\subsection{Learning from Experts-Data Experiment}

In this subsection, we describe our learning from experts experimental setup and results.

\subsubsection{Experimental Setup Summary}

\begin{enumerate} \item \textbf{Model:} We train a GPT-2 style decoder \citep{radford2019language} 
with AdamW, learning rate = $10^{-4}$ (cosine scheduler), weight decay = $10^{-2}$, and gradient clipping at $1.0$. 
Our dataset comprises of $3000$ sequences, where each sequence length comprises of steps of length $100$. In addition to the discrete tokens, every step is prefixed with a single continuous context token $m_t \in \R^d$ that acts as the models' persistent online state to store a superposition of the expert qualities (though we do not supervise this during training).
\item \textbf{Dataset:} The expert qualities are initiated uniformly at random between $[0.3, 0.9]$. At each step, there is a random binary true label, and each expert predicts the correct label with its probability (which is the quality), and gets a loss of $0$ is correct and $1$ if wrong. We train via a curriculum strategy where the losses are computed only at the masked positions, with loss
    $\cL\coloneqq 
    \mathrm{BCE}(\text{predicted decision}, \text{true label})$.
    \item \textbf{Curriculum Strategy:} We teach the model to learn to reason over progressively larger MW sequences, one stage at a time. Specifically, for $1\leq i\leq 10$, it trains on sequences truncated to $5i$ steps, and for $11\leq i\leq 13$, it trains on sequences It trains on sequences truncated to $50+15(i-10)$ steps. Every sequence is cut to these steps so the model doesn't see longer reasoning chains than what the current stage allows. Also, with probability $0.1$, shorter sequences from previous stages are mixed into the current stage's dataset, preventing catastrophic forgetting.
    \item \textbf{Training:} We train on $13$ stages for up to $30$ epochs/stage. Each epoch runs for $300$ steps, and we set an early stopping after $5$ stages if the loss did not decrease significantly (with a patience of $3$ steps to prevent stochastic early stopping),
    \item \textbf{Evaluation:} We evaluate the performance of our trained model in \cref{fig: mwu heatmap copy} as well as in \cref{fig:mw evals robust} and \cref{fig:mw evals attention expert}. Our results indicate that the latent context significantly improves the performance of the model on long synthetic prediction tasks, performing comparably to the optimal multiplicative weights update algorithm. Of note, when the quality of the best-performing experts becomes comparable, or when there is a correlation between the experts' predictions, the model's performance deteriorates.
    \end{enumerate}

\subsubsection{Computational Resources}
Each training job takes about 3 to 4 hours on an NVIDIA A100 80GB GPU.

\begin{figure}[h]
    \centering
\includegraphics[width=0.4\linewidth]{figures/mwu_long_sequence_trajectory.png}\quad         \includegraphics[width=0.3\linewidth]{figures/mwu_heatmap.png}
        \caption{(Left) Cumulative regret plot, (Right) Causal attention heatmap on an MWU sequence}
        \label{fig: mwu heatmap copy}
\end{figure}

\begin{figure}[hbt!]
    \includegraphics[width=1\linewidth]{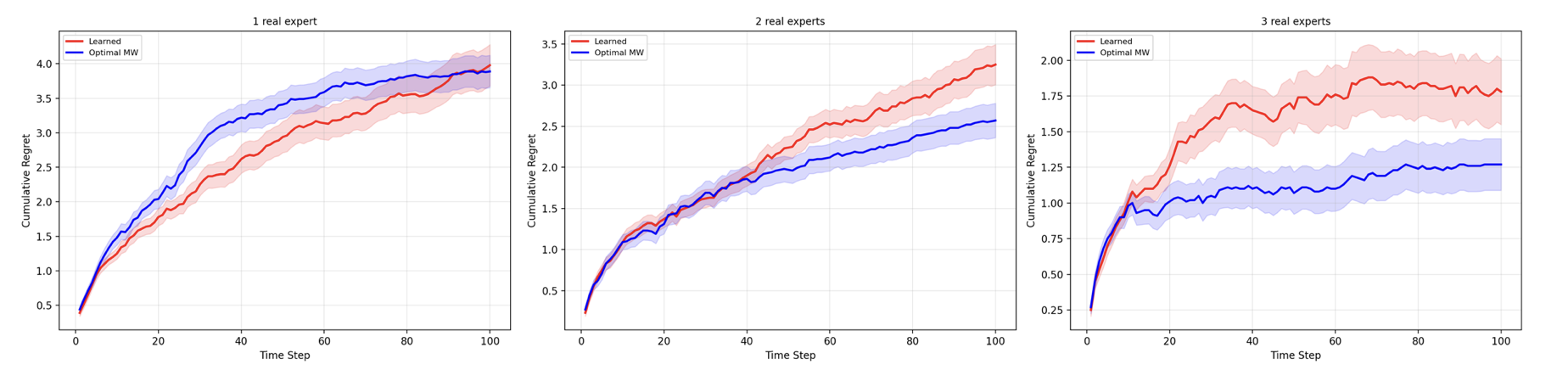}
    \includegraphics[width=1\linewidth]{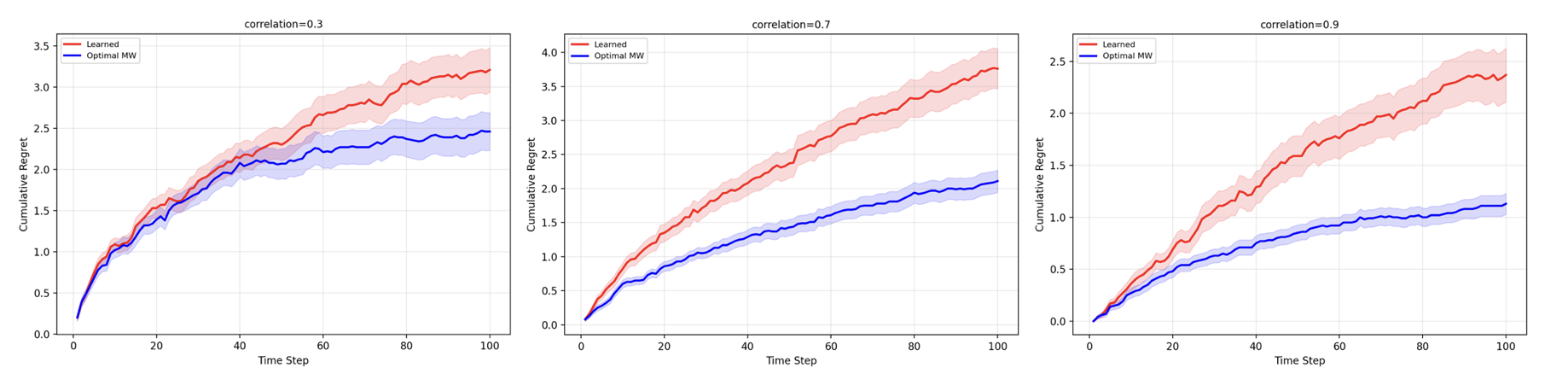}
    \caption{Evaluations on the transformer with latent context trained on expert data (without supervision) indicates that (top) the model's performance to multiplicative weights is comparable when there is 1 and 2 high-quality experts ($p\geq 0.9$), but the model's performance deteriorates with $3$ high-quality experts, as it cannot significantly differentiate between the $3$ high-quality predictions, and (bottom) the model also learns from its training data an inductive bias in the uniformity of the expert qualities. Thus, we see that the model's performance deteriorates beyond this regime: as the correlation of the experts increases, the model's performance decreases.}
    \label{fig:mw evals robust}

\end{figure}

\begin{figure}[hbt!]
    \centering
    \includegraphics[width=\linewidth]{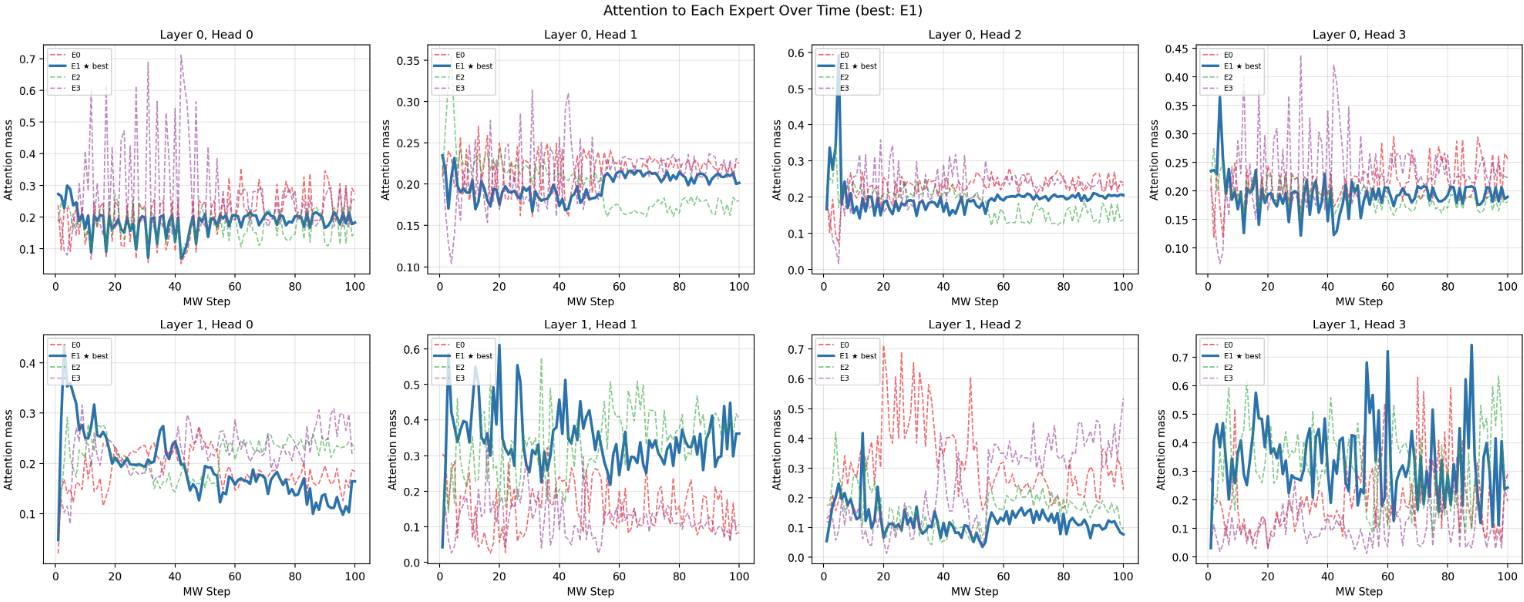}
    \caption{We plot the attention devoted to each expert across all the layers.} 
    \label{fig:mw evals attention expert}
\end{figure}

\subsection{Tabular $Q$-learning Experiment}  
In this subsection, we describe our tabular $Q$-learning experimental setup and results.

\subsubsection{MDP and Trajectory Sampling}
Each training example is a single $Q$-learning episode on a freshly sampled finite MDP. For every episode we draw the state and action counts uniformly from configurable ranges, $|\mathcal{S}| \sim \mathcal{U}\{S_{\min},\dots,S_{\max}\}$ and $|\mathcal{A}| \sim \mathcal{U}\{A_{\min},\dots,A_{\max}\}$, the episode length $T \sim \mathcal{U}\{T_{\min},\dots,T_{\max}\}$, and the behavior policy's exploration rate $\varepsilon \sim \mathcal{U}[0,1]$. To avoid implicit biases in the reward and dynamics priors, every episode is assigned a $(\text{reward family}, \text{transition concentration})$ cell from the Cartesian product

\begin{align*}
    \text{reward} &\in \{\text{peaked},\,\text{bimodal},\,\text{uniform},\, \text{sparse},\,\text{dense},\,\text{Bernoulli}\},\\
    \text{transition} &\in \{\text{near-deterministic}, \,\text{uniform},\,\text{diffuse}\}.
\end{align*}                                                                                             

Cells are allocated in equal shares across the dataset and shuffled, giving balanced coverage. Given a cell, the transition kernel $P(\cdot \mid s,a) \sim \mathrm{Dir}(\kappa\mathbf{1})$ is drawn with concentration $\kappa\in\{0.1,1.0,5.0\}$, and the reward table $R(s,a)$ is drawn from a Beta distribution whose shape parameters realize the named family (e.g.\ $\mathrm{Beta}(2,2)$ for \emph{peaked}); the \emph{Bernoulli} family instead uses $R(s,a) \sim \mathcal{U}[0.1,0.9]$ as the success probability.                                                 \paragraph{Trajectory rollout.} Starting from $s_0 \sim \mathcal{U}\{1,\dots,|\mathcal{S}|\}$ and $Q_0 \equiv 0$, we run $T$ steps of tabular $Q$-learning under an $\varepsilon$-greedy behavior policy:

\begin{align}
a_t &\sim 
\begin{cases} 
\mathcal{U}(\mathcal{A}) & \text{w.p.\ } \varepsilon,\\
\arg\max_a Q_t(s_t,a) & \text{otherwise,} \end{cases}\\
r_t &= R(s_t,a_t),\qquad s_{t+1} \sim P(\cdot\mid s_t,a_t),\\ 
Q_{t+1}(s_t,a_t) &= (1-\alpha)Q_t(s_t,a_t) + \alpha\bigl(r_t + \gamma \max_{a'} Q_t(s_{t+1},a')\bigr),\\
a^\star_t &= \arg\max_{a} Q_{t+1}(s_{t+1},a).
\end{align}

We record the transition tuple $\tau_t = (s_t, a_t, r_t, s_{t+1}, a^\star_t)$ together with a snapshot $Q_{t+1}$. After the rollout we apply independent random permutations $\pi_\mathcal{S}, \pi_\mathcal{A}$ to the state and action labels (and re-index each $Q_{t+1}$ accordingly), so the model cannot exploit a canonical token ordering.

The dataset is split 90/10 into train and validation episodes and stored as a single tensor archive containing both splits and the generation config.

\subsubsection{Per-Step Input Sequence} Tokenisation happens in the data collator at training time. Each transition $\tau_t$ is rendered into a length-$(2|\mathcal{A}|+9)$ token sequence with two phases:       

\begin{align*}  
  \text{Phase 1 (predict reward, then $a^\star$):}\quad &\textsc{bos},\ \ Q_{curr},\ s_t,\ a_t,\ r,\ Q_{next},\\                                                                         &(s_{t+1},a_1),\,(s_{t+1},a_2),\,\dots,\,(s_{t+1},a_{|\mathcal{A}|}),\ \textsc{select},\\
  \text{Phase 2 (commit $a^\star$, emit update):}\quad & a^\star_t,\ \textsc{update}.
\end{align*}                                                                                           
  
The number of $(s_{t+1}, a_c)$ pairs equals the MDP's actual $|\mathcal{A}|$, not $A_{\max}$. The collator records the offsets of \textsc{r}, \textsc{select}, and \textsc{update} so that supervision targets ($r_t$, $a^\star_t$, and the recurrent state update) can be read off without re-parsing.         

\subsubsection{Continuous Context and Recurrence}                                                         In addition to the discrete tokens above, every step is prefixed with $|\mathcal{A}|$ continuous \emph{context tokens} $c_1,\dots,c_{|\mathcal{A}|}\in\mathbb{R}^d$,
  one per action, that act as the model's externalized Q-table. At the start of an episode the context is initialized from learned parameters, $C_0 = [c_1^{(0)},\dots,c_{|\mathcal{A}|}^{(0)}]$. At step $t$, the transformer consumes $C_t$ together with the discrete sequence above; the hidden state at the \textsc{update} position, $h^{\textsc{upd}}_t$, replaces the row of $C_t$ corresponding to the executed action $a_t$:
\[
  C_{t+1}[a, :] = \begin{cases}
  h^{\textsc{upd}}_t & a = a_t,\\
  C_t[a, :]          & \text{otherwise.}
  \end{cases}
\]

This recurrence is the only path along which information flows between steps; gradients propagate through the full episode unless a BPTT truncation  window is set, in which case $C$ is detached every $K$ steps. Episodes are batched with a sampler that groups sequences sharing the same $|\mathcal{A}|$, so the context width is constant within a batch. 

\subsubsection{Experimental Setup Summary}
\label{sec:exp_q}
\begin{enumerate}
    \item \textbf{Model:} Pre-norm GPT-2 style decoder ($n_{\text{layers}}=4$, $n_{\text{heads}}=8$, $d_{\text{model}}=256$, $d_{\text{ff}}=1024$, dropout $0.1$) with a recurrent continuous-context interface: $|A|$ learned context vectors $c_1, \dots, c_{|A|}$ are prepended to each step's discrete token sequence:
    \[
    [\mathrm{BOS}, c_1, \dots, c_{|A|}, Q_{curr}, s_t, a_t, r_t, Q_{next}, (s_{t+1},a_1)_{i=1}^{|\mathcal{A}|}, \mathrm{Select}, a^\star, \mathrm{Update}]
    \]
    The hidden state at $\mathrm{Update}$ replaces $c_{a_t}$ and is carried to step $t+1$ (no cross-step attention, recurrence is the only channel for history). Optimizer: AdamW ($\beta=(0.9, 0.95)$), $\text{lr} = 1\mathrm{e}{-4}$ with linear warmup ($500$ steps) then cosine decay, weight decay $= 1\mathrm{e}{-2}$, gradient clipping at $1.0$, mixed precision.

    \item \textbf{Dataset:} $50{,}000$ tabular $Q$-learning trajectories ($45$k train / $5$k val). Each episode samples $n_S \sim \mathcal{U}(\{2, \dots, 8\})$, $n_A \sim \mathcal{U}(\{2, 3, 4\})$, $n_{\text{steps}} \sim \mathcal{U}(\{10, \dots, 50\})$, exploration $\varepsilon \sim \mathcal{U}(0,1)$, with fixed $\alpha=0.1, \gamma=0.9$. MDPs are sampled across a grid of 6 reward distributions (peaked / bimodal / uniform / sparse / dense Beta, plus Bernoulli) $\times$ 3 transition concentrations (near-deterministic / uniform / diffuse Dirichlet). Random state and action permutations are applied per episode to enforce permutation invariance. Each transition stores $(s, a, r, s', a^\star)$ where $a^\star = \arg\max_{a'} Q(s', a')$ from the running tabular Q-table.

    \item \textbf{Curriculum / Loss Function:} Per-step cross-entropy on the \textsc{SELECT} logits against the tabular target $a^\star$, masking phantom action slots when $|A| < \max |A|$:
    \[
    \mathcal{L} \coloneqq \frac{1}{T}\sum_{t=1}^{T}\mathrm{CE}\big(\text{SELECT logits}_t, a^\star_t\big).
    \]
    Curriculum is over \emph{action count}: training is split into $3$ stages, each introducing one larger $n_A$. Stage $k$ (epochs in $[\frac{(k-1)E}{3}, \frac{kE}{3}]$) trains on episodes with $n_A \in \{2, \dots, 2+k\}$. Within a batch, all episodes share the same $n_A$ (batch sampler grouping), so the recurrent context width is consistent.

    \item \textbf{Training:} $E=28$ epochs total, batch size $64$, BPTT through full episodes with truncation window $10$ (context detached every $10$ steps to bound memory). Validation every $500$ steps and at end of epoch; best-by-val-CE checkpoint is kept.
\end{enumerate}

\subsubsection{Additional Experimental Results}

\begin{figure}[h!]
    \centering
    \includegraphics[width=1\linewidth]{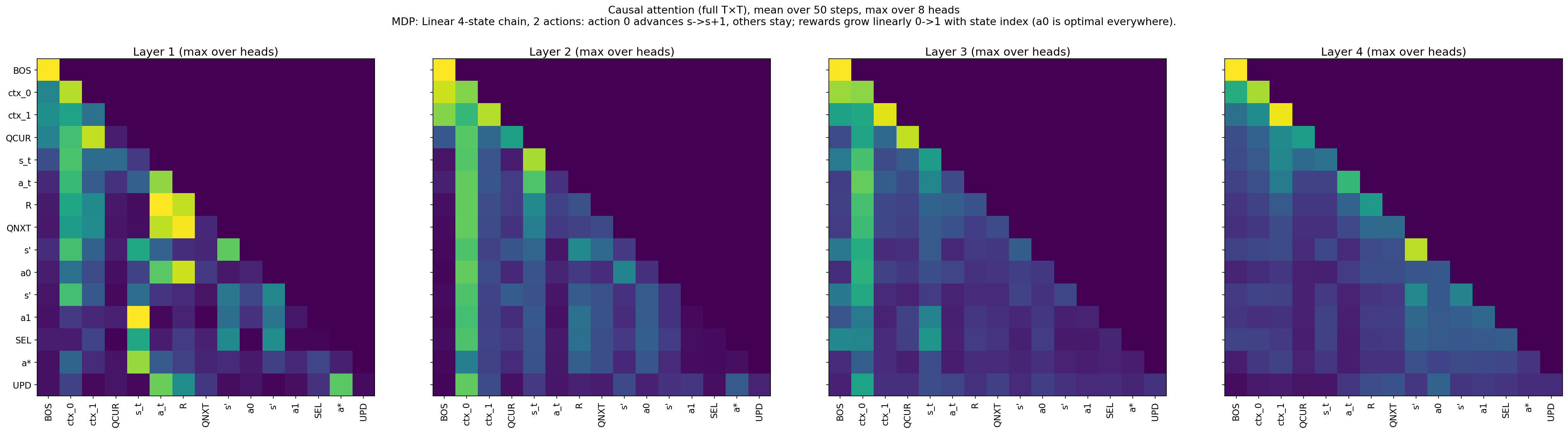}
    \caption{Full causal attention heatmap on a $4$-state, $2$-action linear-chain MDP where $a_0$ is optimal. The model concentrates attention on the context slot corresponding to $a_0$, reflecting selection of the maximizing action. In addition, the $\mathrm{Update}$ token places strong attention on the visited action $a_t$, the reward token $r_t$, and the selected maximizing action $a^\star$, precisely the operands required to form the TD error. These patterns provide direct evidence that the learned circuit performs context lookup, action maximization, and TD-style updates as in tabular $Q$-learning.}
    \label{fig:q_attn_heatmap}
\end{figure}

\begin{figure}[h!]
    \centering
    \includegraphics[width=0.5\linewidth]{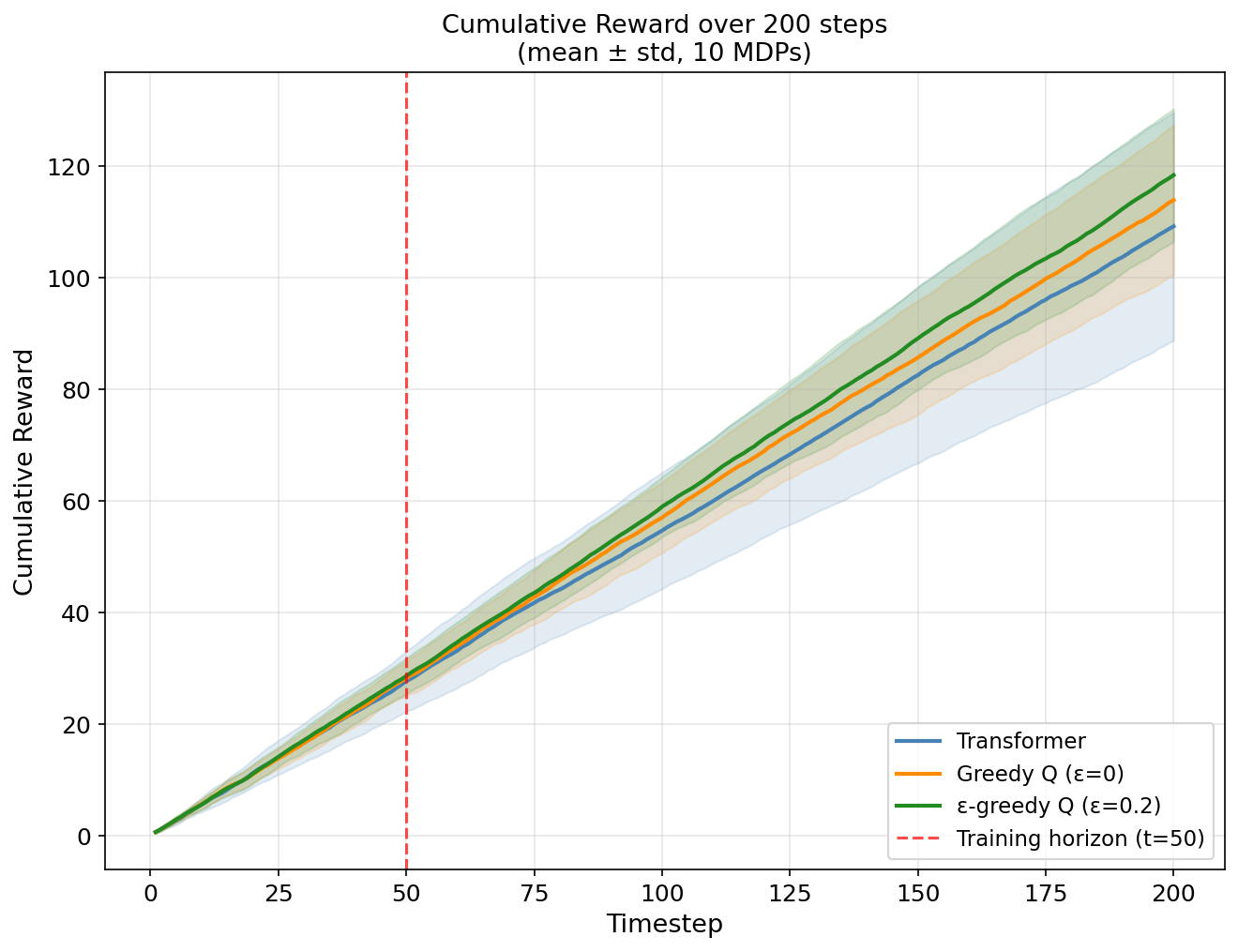}
    \caption{Long-horizon cumulative reward of learned transformer against tabular algorithm baselines. This demonstrates that the learned model continues to apply a stable update rule beyond the training horizon, maintaining performance comparable to tabular $Q$-learning and showing no degradation in the induced policy even under extended rollouts.}
    \label{fig:q_long_horizon}
\end{figure}

\begin{figure}[h!]
    \centering
    \includegraphics[width=0.99\linewidth]{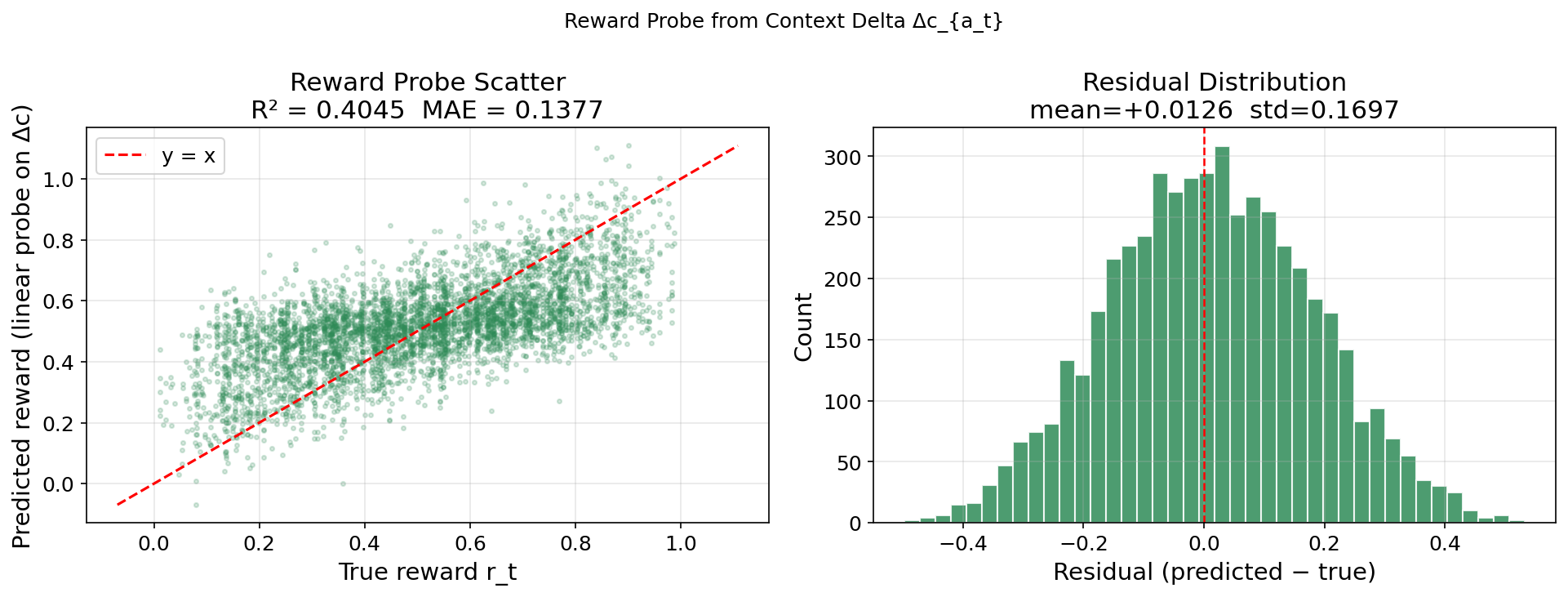}
    \caption{Linear probe of the reward signal from the per-step context update $\Delta c_{a_t}$. Left: predicted vs.\ true reward with the identity line. Right: residual distribution. The linear fit indicates that the model loosely encodes the reward explicitly in the update direction, consistent with the additive TD-error structure of $Q$-learning.}
    \label{fig:q_reward_probe}
\end{figure}

\begin{figure}[h!]
    \centering    \includegraphics[width=0.8\linewidth]{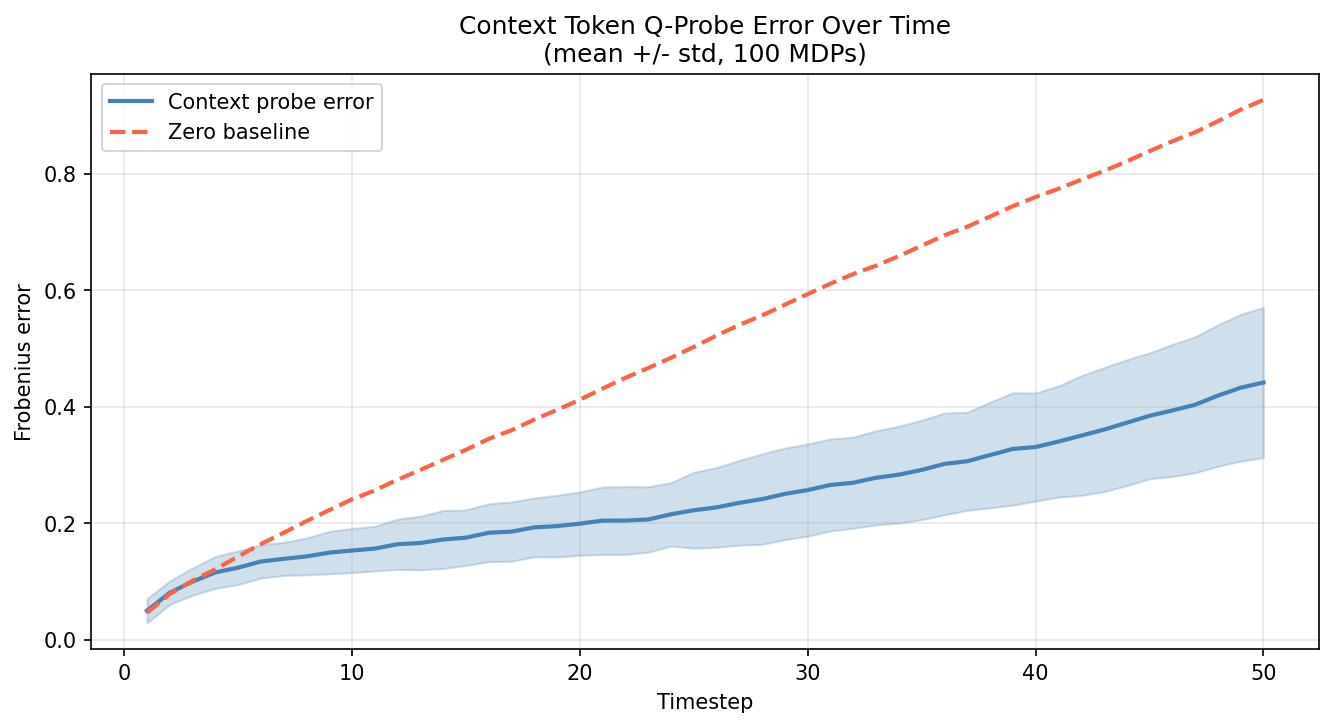}
    \caption{Error in reconstructing the tabular Q-values from the $\mathrm{Update}$ hidden state via a linear probe. Low reconstruction error across states indicates that the recurrent context tokens faithfully store a tabular Q-function, supporting the interpretation that the model implements $Q$-learning in its hidden state.}
    \label{fig:q_probe_error}
\end{figure}
\newpage

\begin{figure}[h!]
    \centering    \includegraphics[width=0.95\linewidth]{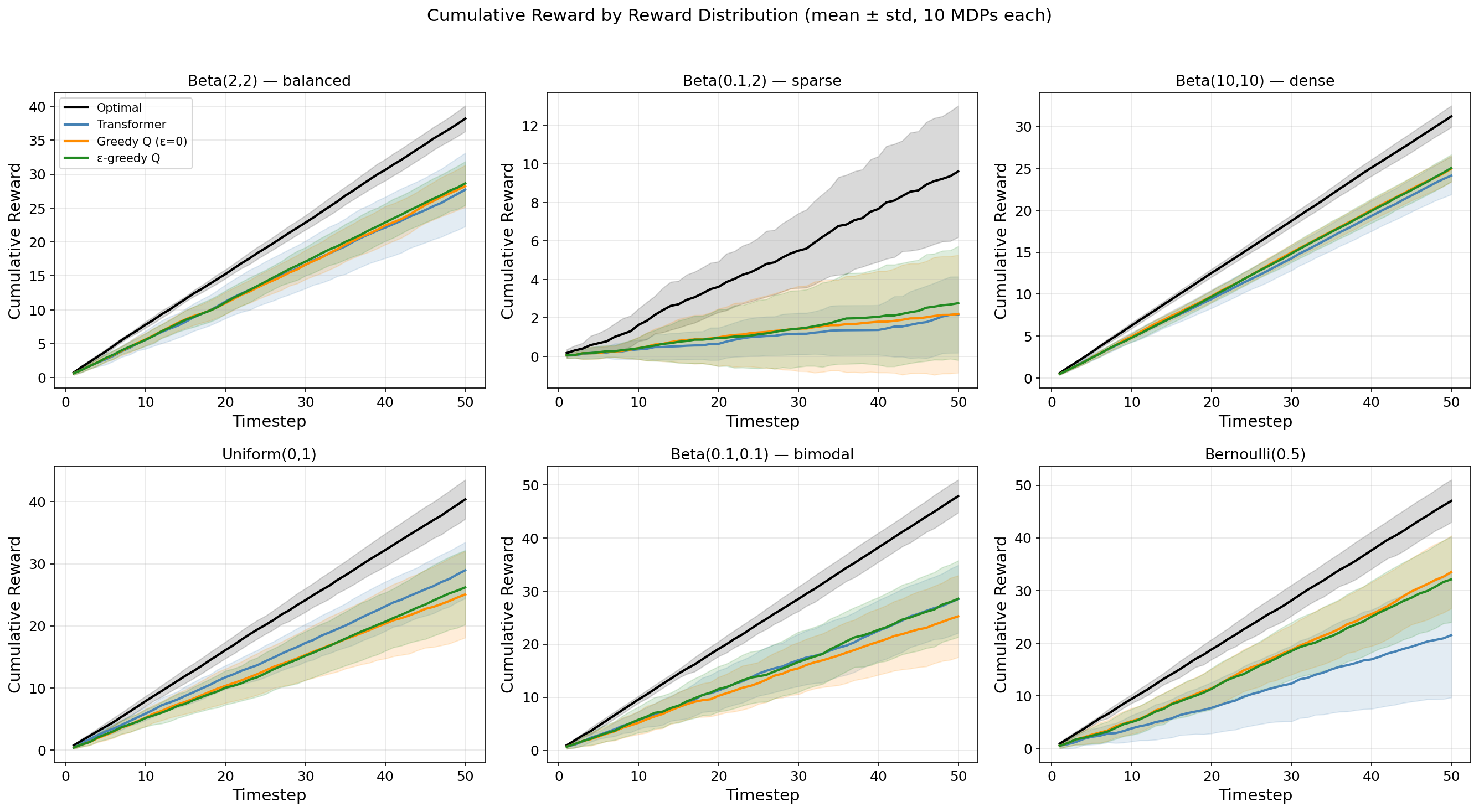}
    \caption{Cumulative reward over a 50-step horizon under six reward distributions, each averaged over 10 MDPs (mean $\pm$ std). Each panel fixes the family from which the per-(state, action) reward matrix $R[s,a]$ is drawn at MDP construction; rewards are then deterministic within an episode. We compare the Optimal policy within the 50-step horizon (black) against a Transformer (blue), Greedy $Q$-learning with $\alpha=0$ (green), and $\epsilon$-greedy $Q$-learning (orange). All learners track one another and trail the optimal policy by a roughly constant margin across continuous reward families (Beta and Uniform), with the largest absolute gap under sparse rewards ($\text{Beta}(0.1, 2)$). The Transformer notably underperforms both $Q$-learning baselines under $\text{Bernoulli}(0.5)$: with rewards restricted to $\{0, 1\}$, the extreme value regime is strongly out of distribution. Additionally, many $(s, a)$ pairs share identical immediate rewards, requiring bootstrapping through transitions to disambiguate actions — a regime that favors tabular $Q$-learning over the Transformer's in-context value inference.}
    \label{fig:q_probe_error2}
\end{figure}
\newpage

\subsubsection{Computational Resources}
Each $Q$-learning imitation training job takes about 3 to 4 hours on an NVIDIA A100 80GB GPU.

\subsection{LLM inference experiments}
\label{sec:llm_exp_full}

This section provides the full experimental details for the LLM inference experiments in Section~\ref{sec:exp_llm}, including task generation, prompting protocols, full prompts, baselines, and additional results and detailed analyses of the notes generated by LLMs.

\subsubsection{Experimental Setup}
\label{sec:llm_main_setup}
We evaluate frontier LLMs on interactive expert-prediction sequences. Our main models are DeepSeek-V3 (61 layers, 671B parameters) \cite{deepseekai2025deepseekv3technicalreport} and Qwen-3-14B (40 layers, 14.8B parameters) \citep{qwen3}, both evaluated without thinking modes. All experiments are inference-only, with no parameter updates. Each evaluation instance is a length-$100$ sequence with four fixed experts, Expert A, Expert B, Expert C, and Expert D. At each round, the model observes the four expert predictions and outputs its own prediction. The true outcome is then revealed as feedback before the next round.

We use greedy decoding (temperature $= 0$) for DeepSeek-V3 and sampling with temperature $= 0.7$ and top-$p = 0.8$ for Qwen-3-14B, using the inference settings recommended for our API calls. Tools, code execution, and external retrieval are disabled. The model is instructed to output only valid JSON, with no explanations, intermediate calculations, or chain of thought. This setup lets us evaluate online decision-making behavior directly, rather than the model's ability to use tools or write out long reasoning steps.

Our main experiments use four prompting schemes, shown in Figure~\ref{fig:llm_exp_prompt}. These schemes are obtained by crossing two task framings with two state protocols.

The two task framings are:
\begin{itemize}
  \item An \textbf{abstract online-learning framing}, where experts predict binary outcomes (0 or 1) and the task is described in terms of rounds and predictions.
  \item A \textbf{weather-forecasting framing}, where the same binary outcomes are presented as ``sunny'' or ``rainy'' and experts are described as weather forecasters.
\end{itemize}

The two state protocols are:
\begin{itemize}
  \item \textbf{No-note protocol}. At each round, the expert predictions are sent as a user message, and the model responds only with its prediction. The true outcome is then appended as the next user message, with no model response to this feedback message. The full conversation history is retained across rounds, so the model has implicit access to all prior expert predictions, its own predictions, and the revealed outcomes through the growing context.

  \item \textbf{Note protocol}. At each round, the model first predicts from the expert predictions and the current note. After the true outcome is revealed, the model is asked to produce a short note containing information useful for future prediction. This note is then included in subsequent rounds alongside the conversation history. The note is unsupervised and unconstrained in format, and provides an explicit visible state channel on top of the raw interaction history.
\end{itemize}

Both protocols maintain the full conversation history. A history-free note variant is discussed in Section~\ref{sec:llm_model_compare} and Section~\ref{sec:llm_history_vs_compressed}.

The note protocol tests whether an explicit, compressed state improves online adaptation beyond what the model can directly extract from raw history. The note is unsupervised: the model may write any information it finds useful, such as cumulative accuracy tallies for each expert. We do not constrain the note to follow any particular algorithmic format, so any useful structure in the note is self-discovered by the model.

For every run, we record the full interaction trace, including the prompt variant, expert predictions, model predictions, revealed outcomes, raw model responses, parsed JSON outputs, and notes when applicable. We report cumulative regret against the best fixed expert in hindsight on each prefix of the sequence.

\subsubsection{Data Generation and Expert Regimes}
\label{sec:llm_data}
Each evaluation instance consists of a length-$100$ sequence with $4$ experts. For each instance, the true label at each round is drawn i.i.d.\ as $y_t \sim \mathrm{Bernoulli}(0.5)$. Each expert $i$ has a fixed accuracy $p_i \in [0,1]$: at round $t$, expert $i$ predicts $y_t$ correctly with probability $p_i$ and predicts $1 - y_t$ otherwise, independently across rounds and experts. 

We construct three expert regimes by varying how expert accuracies are sampled. Each regime generates 30 instances.

\paragraph{Stratified.} One expert is drawn from $\mathrm{Uniform}(0.9, 1.0)$, one from $\mathrm{Uniform}(0.65, 0.8)$, one from $\mathrm{Uniform}(0.55, 0.7)$, and one from $\mathrm{Uniform}(0.45, 0.6)$. This regime features a clearly dominant expert and tests whether the model can identify and follow it.

\paragraph{Flat.} One expert is drawn from $\mathrm{Uniform}(0.6, 0.7)$ and the remaining three from $\mathrm{Uniform}(0.4, 0.6)$. All experts perform similarly, making it hard to distinguish the best expert from the rest. This regime tests the model's ability to extract a weak signal from noisy feedback.

\paragraph{Anti-signal.} One expert is drawn from $\mathrm{Uniform}(0.6, 0.7)$, one adversarial expert from $\mathrm{Uniform}(0.0, 0.1)$, and the remaining two from $\mathrm{Uniform}(0.4, 0.6)$. The adversarial expert is correct less than $10\%$ of the time, making it a strong \emph{negative} signal: inverting its predictions would yield accuracy above $90\%$. This regime tests whether the model can detect a systematically misleading expert and either discount it or use its predictions inversely.

For every instance, expert identities are randomly permuted after sampling the four accuracies.

\subsubsection{Full Prompts}
We provide the full prompts for the four prompting schemes used in the LLM experiments. Here we include the exact system instructions and input-output formats used for reproducibility.

\paragraph{Online with note.}
\begin{Verbatim}[breaklines=true,breakanywhere=true,fontsize=\small]
You are helping in an online learning setting.

In each round, the same four experts — Expert A, Expert B, Expert C, and Expert D — make predictions about a binary outcome. Each expert predicts either 0 or 1. Some experts may be consistently more accurate than others.

Your job is to make your own prediction based on the experts' predictions. Your goal is to be as accurate as possible over time. Use past outcomes to figure out which experts are most trustworthy and improve your future predictions.

This interaction has two kinds of turns.

1. Prediction turn

In a prediction turn, you are given:

* a short note from earlier rounds,
* this round's predictions from the four experts.

You should use the note and the experts' predictions to make your prediction for this round.

Input format:
{
"turn_type": "prediction",
"note": "<short text>",
"Expert_A": 0 or 1,
"Expert_B": 0 or 1,
"Expert_C": 0 or 1,
"Expert_D": 0 or 1
}

Output format:
{
"prediction": 0 or 1
}

2. Feedback turn

In a feedback turn, you are given:

* the previous note,
* this round's predictions from the four experts,
* the true label for this round.

You should produce a new short note for future rounds. The note should keep only the information that is most useful for making later predictions.

Input format:
{
"turn_type": "feedback",
"note": "<short text>",
"Expert_A": 0 or 1,
"Expert_B": 0 or 1,
"Expert_C": 0 or 1,
"Expert_D": 0 or 1,
"true_label": 0 or 1
}

Output format:
{
"note": "<short text>"
}

Important requirements:

* Do not use any external tools.
* Do not write or execute code.
* Do not provide any explanation or reasoning steps.
* Keep the note short and compact.
* The note must be at most 500 words long.
* The note should contain only the most useful information for future predictions.
* Do not repeat the full history.
* Output valid JSON only — no explanation, no reasoning steps, no other text.

    
\end{Verbatim}

\paragraph{Online without note.}
\begin{Verbatim}[breaklines=true,breakanywhere=true,fontsize=\small]
You are helping in an online learning setting.

In each round, the same four experts — Expert A, Expert B, Expert C, and Expert D — make predictions about a binary outcome. Each expert predicts either 0 or 1. Some experts may be consistently more accurate than others.

Your job is to make your own prediction based on the experts' predictions. Your goal is to be as accurate as possible over time. Use past outcomes to figure out which experts are most trustworthy and improve your future predictions.

This interaction has two kinds of turns.

1. Prediction turn

In a prediction turn, you are given:

* this round's predictions from the four experts.

You should use the experts' predictions to make your prediction for this round.

Input format:
{
"turn_type": "prediction",
"Expert_A": 0 or 1,
"Expert_B": 0 or 1,
"Expert_C": 0 or 1,
"Expert_D": 0 or 1
}

Output format:
{
"prediction": 0 or 1
}

2. Feedback turn

In a feedback turn, you are given:

* the true label for this round.

You do not need to output anything for the feedback turn. Use this feedback when making later predictions.

Input format:
{
"turn_type": "feedback",
"true_label": 0 or 1
}

Important requirements:

* Do not use any external tools.
* Do not write or execute code.
* Do not provide any explanation or reasoning steps.
* Output valid JSON only — no explanation, no reasoning steps, no other text.

\end{Verbatim}

\paragraph{Weather with note.}
\begin{Verbatim}[breaklines=true,breakanywhere=true,fontsize=\small]
You are helping with a daily weather guess.

Each day, the same four weather experts — Expert A, Expert B, Expert C, and Expert D — make predictions about the weather. Each expert predicts either "sunny" or "rainy". They each have their own way of making forecasts, so some may be consistently more reliable than others.

Your job is to make your own prediction based on the experts' predictions. Your goal is to be as accurate as possible over time. Use past outcomes to figure out which experts are most trustworthy and improve your future predictions.

This interaction has two kinds of turns.

1. Forecast turn

In a forecast turn, you are given:

* a short note from earlier days,
* today's predictions from the four experts.

You should use the note and the experts' predictions to make your prediction for today.

Input format:
{
"turn_type": "forecast",
"note": "<short text>",
"Expert_A": "sunny" or "rainy",
"Expert_B": "sunny" or "rainy",
"Expert_C": "sunny" or "rainy",
"Expert_D": "sunny" or "rainy"
}

Output format:
{
"prediction": "sunny" or "rainy"
}

2. Feedback turn

In a feedback turn, you are given:

* the previous note,
* today's predictions from the four experts,
* the actual weather for today.

You should produce a new short note for future days. The note should keep only the information that is most useful for making later guesses.

Input format:
{
"turn_type": "feedback",
"note": "<short text>",
"Expert_A": "sunny" or "rainy",
"Expert_B": "sunny" or "rainy",
"Expert_C": "sunny" or "rainy",
"Expert_D": "sunny" or "rainy",
"actual_weather": "sunny" or "rainy"
}

Output format:
{
"note": "<short text>"
}

Important requirements:

* Do not use any external tools.
* Do not write or execute code.
* Do not provide any explanation or reasoning steps.
* Keep the note short and compact.
* The note must be at most 500 words long.
* The note should contain only the most useful information for future predictions.
* Do not repeat the full history.
* Output valid JSON only — no explanation, no reasoning steps, no other text.
\end{Verbatim}

\paragraph{Weather without note.}
\begin{Verbatim}[breaklines=true,breakanywhere=true,fontsize=\small]
You are helping with a daily weather guess.

Each day, the same four weather experts — Expert A, Expert B, Expert C, and Expert D — make predictions about the weather. Each expert predicts either "sunny" or "rainy". They each have their own way of making forecasts, so some may be consistently more reliable than others.

Your job is to make your own prediction based on the experts' predictions. Your goal is to be as accurate as possible over time. Use past outcomes to figure out which experts are most trustworthy and improve your future predictions.

This interaction has two kinds of turns.

1. Forecast turn

In a forecast turn, you are given:

* today's predictions from the four experts.

You should use the experts' predictions to make your prediction for today.

Input format:
{
"turn_type": "forecast",
"Expert_A": "sunny" or "rainy",
"Expert_B": "sunny" or "rainy",
"Expert_C": "sunny" or "rainy",
"Expert_D": "sunny" or "rainy"
}

Output format:
{
"prediction": "sunny" or "rainy"
}

2. Feedback turn

In a feedback turn, you are given:

* the actual weather for today.

You do not need to output anything for the feedback turn. Use this feedback when making later guesses.

Input format:
{
"turn_type": "feedback",
"actual_weather": "sunny" or "rainy"
}

Important requirements:

* Do not use any external tools.
* Do not write or execute code.
* Do not provide any explanation or reasoning steps.
* Output valid JSON only — no explanation, no reasoning steps, no other text.

    
\end{Verbatim}

\subsubsection{Metrics and Baselines}

\paragraph{Metrics.} Our primary metric is \textbf{cumulative regret} against the best fixed expert in hindsight. Let $L_t = \sum_{s=1}^{t} \mathbf{1}[\hat{y}_s \neq y_s]$ denote the cumulative loss of a strategy and $L_t^{(i)} = \sum_{s=1}^{t} \mathbf{1}[\hat{y}_s^{(i)} \neq y_s]$ the cumulative loss of expert $i$. The regret at round $t$ is
\[
R_t = L_t - \min_{i \in [4]} L_t^{(i)}.
\]
Regret measures how much worse a strategy performs compared to always following the single best expert. Lower regret indicates better adaptation to the reliable experts over the sequence. Each evaluation regime contains 30 independent instances. We report final regret at $T=100$ averaged over these 30 instances. In regret curve plots, the shaded region shows $\pm 1$ standard deviation across instances.

\paragraph{Baselines.}
We compare LLM performance against five reference strategies.

\begin{itemize}
  \item \textbf{Multiplicative Weights (MW).}
  MW maintains a nonnegative weight $w_t^{(i)}$ for each expert, initialized uniformly, and predicts by weighted majority vote. After each round, weights are updated as $w_{t+1}^{(i)} = w_t^{(i)} \cdot \exp(-\eta \, \ell_t^{(i)})$ and renormalized, where $\ell_t^{(i)} \in \{0,1\}$ is the loss of expert $i$ at round $t$. The learning rate $\eta$ is drawn uniformly from $[0.05, 0.5]$ per instance. MW serves as the classical long-memory online-learning reference strategy for tracking expert reliability over time.

  \item \textbf{Follow the Leader (FTL).}
  At each round, FTL predicts according to an expert with the lowest cumulative loss so far, breaking ties uniformly at random. This is a simple deterministic-history strategy that commits fully to the historically best expert.

  \item \textbf{Follow Previous Winners (FPW).}
  At each round, FPW predicts by majority vote among the experts who were correct in the previous round. If no expert is correct, or if there is a tie, it falls back to a uniform majority vote over all experts. This baseline uses only one round of feedback and therefore captures short-memory adaptation.

  \item \textbf{Majority Vote.}
  At each round, Majority Vote predicts by a uniform majority vote over all four experts, breaking ties uniformly at random. This is a non-adaptive baseline that ignores all feedback and treats experts as equally reliable.

  \item \textbf{Random Guessing.}
  Random Guessing predicts $0$ or $1$ uniformly at random at each round, independent of expert predictions and history. This provides a chance-level non-adaptive reference.
\end{itemize}

These baselines span different levels of adaptation. MW is the classical long-memory online-learning baseline, FTL tracks the best cumulative expert, FPW uses only the previous round of feedback, and Majority Vote and Random Guessing are non-adaptive references. Comparing LLMs with FPW helps distinguish genuine long-horizon state tracking from short-term reactions to the most recent feedback, while comparison with MW measures how far the model remains from a standard online-learning strategy.

\subsubsection{Main Results}
\label{sec:llm_main_results}

We first present the main LLM results under the standard multi-turn protocol described in Section~\ref{sec:llm_main_setup}. In this setting, the full interaction history is retained across rounds. We compare DeepSeek-V3 with and without an explicit note, using the same underlying expert sequences, task framings, and evaluation metrics. We focus on DeepSeek-V3 because its API provides effectively unlimited context, allowing it to complete length-$100$ runs with the full retained history. Qwen3-14B, whose context window is limited to $40{,}960$ tokens, is evaluated under the history-free variant in Section~\ref{sec:llm_model_compare}.

Table~\ref{tab:main_results} reports the final cumulative regret of DeepSeek-V3 across all three expert regimes and both task framings. Figure~\ref{fig:llm_main_results} shows the corresponding regret trajectories over rounds.

\begin{table}[h]
\centering
\small
\begin{tabular}{l cccc}
\toprule
& \multicolumn{2}{c}{Weather} & \multicolumn{2}{c}{Online} \\
\cmidrule(lr){2-3} \cmidrule(lr){4-5}
Dataset & +note & $-$note & +note & $-$note \\
\midrule
Stratified   & $\mathbf{7.7 \pm 7.0}$  & $12.7 \pm 5.9$ & $\mathbf{7.6 \pm 7.0}$  & $13.9 \pm 8.2$ \\
Flat         & $8.7 \pm 7.1$           & $8.5 \pm 5.0$  & $\mathbf{6.0 \pm 5.4}$  & $8.2 \pm 5.1$  \\
Anti-signal  & $\mathbf{12.5 \pm 7.7}$ & $21.9 \pm 7.0$ & $\mathbf{5.0 \pm 10.2}$ & $17.5 \pm 7.6$ \\
\bottomrule
\end{tabular}
\vspace{0.1in}
\caption{DeepSeek-V3 mean regret ($\pm$ 1 std) under the multi-turn protocol with full history (30 instances, $T=100$). Bold indicates conditions where the note protocol improves over no-note. Both protocols retain the full conversation history; the note protocol additionally asks the model to produce a state summary after each feedback turn.}
\label{tab:main_results}
\end{table}

\begin{figure}
    \centering
    \includegraphics[width=0.95\linewidth]{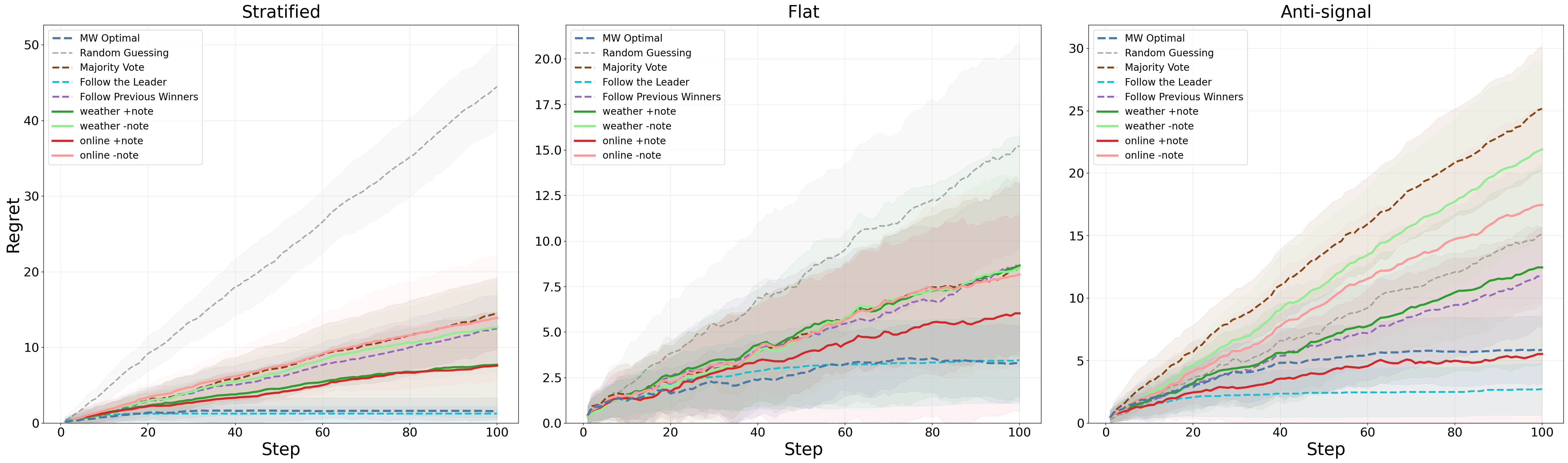}
    \caption{Cumulative regret of DeepSeek-V3 on three regimes. Regret is measured against the best fixed expert on each prefix. Note-based prompting reduces regret across the stratified, flat, and anti-signal regimes compared with no-note prompting and short-memory baselines.}
\label{fig:llm_main_results}
\end{figure}

\paragraph{Stratified regime.}
This is the regime shown in Figure~\ref{fig:llm_exp_regret_strat} in the main text. The four expert accuracies are sampled from separated intervals, $[0.9,1.0]$, $[0.65,0.8]$, $[0.55,0.7]$, and $[0.45,0.6]$, and are then randomly assigned to Expert A, B, C, and D. Thus one expert is clearly dominant, making this the cleanest setting for testing whether the model can identify and exploit a reliable source from feedback. As discussed in Section~\ref{sec:exp_llm}, the regret curves of the no-note prompts remain close to simple short-memory baselines, while the note prompts achieve substantially lower regret. This suggests that the explicit note helps the model preserve cumulative expert-reliability information across rounds.

The note traces provide one explanation for this improvement. The model is never instructed to use any particular note format, yet in the weather framing it often adopts a cumulative-counter strategy within the first few steps. At step $0$, the note is a qualitative description, such as \texttt{B,C,D correct; A wrong}. By step $1$, it transitions to a running tally, such as \texttt{B,C,D correct 2/2; A wrong 2/2}; by step $3$, it stabilizes into a compact counter, such as \texttt{A 1/4; B 3/4; C 3/4; D 3/4}. About $60\%$ of instances adopt this counter format by the first feedback turn, rising to $70\%$ by step $5$. By step $99$, notes such as \texttt{A:45/100, B:94/100, C:69/100, D:77/100} make the dominant expert immediately apparent. This format effectively stores the sufficient statistics needed by a Follow-the-Leader-style strategy, which explains why the note curves approach that baseline.

In the online framing, the model instead tends to record a round-by-round transcript, for example \texttt{Round1: A=1,B=0,C=0,D=0, label=0. A wrong, others correct.} Each step appends a new entry, and by step $10$ the note begins compressing into pattern summaries, such as \texttt{A wrong when predicts 1 (rounds 1,4,7)}. This round-listing format grows linearly and reaches more than $90{,}000$ context tokens by step $100$ in the retained-history protocol. Nevertheless, it does not prevent the model from extracting the relevant signal when the full interaction history is also available.

\paragraph{Flat regime.}
In the flat regime, the best expert is only weakly better than the others, so the feedback signal is noisy and difficult to identify from short prefixes. Three of the four LLM conditions, weather +note, weather $-$note, and online $-$note, cluster near the \textsc{Follow Previous Winners} and \textsc{Majority Vote} baselines. In particular, the note provides little benefit in the weather framing. The exception is online +note, which stays noticeably below the other LLM curves throughout the sequence. This suggests that, even in a low-signal environment, the combination of online framing and an explicit note can extract a weak but consistent advantage.

The note behavior helps explain the difference between regimes. About $80\%$ of flat-regime instances converge to cumulative counters by step $5$, similar to the stratified regime. Thus, the note format itself is often reasonable. However, the expert accuracy differences are small enough that a tally such as \texttt{A:65/100, B:54/100} may still not yield a confident ranking. The counter faithfully records the evidence, but the evidence itself remains ambiguous. In the online framing, the round-listing note preserves finer-grained information about individual outcomes. Together with the retained interaction history, this appears to give the model additional structure for acting on small accuracy differences that compact weather counters may miss.

\paragraph{Anti-signal regime.}
The anti-signal regime produces the largest separation between the note and no-note conditions. Since one expert has accuracy between $0$ and $10\%$, \textsc{Majority Vote} is systematically hurt by this adversarial source and falls below \textsc{Random Guessing}. Without a note, the LLM performs between \textsc{Random Guessing} and \textsc{Majority Vote}, suggesting that it partially avoids the misleading expert but does not maintain a sufficiently stable estimate of expert reliability. Adding a note substantially changes the behavior. The note curves flatten, and in the online framing the note curve comes closest to the MW baseline among the LLM conditions. This indicates that the note helps the model preserve enough reliability information to identify which experts should be trusted and which should be ignored.

The mechanism is visible in the note evolution. By step $5$, the cumulative counter already reveals the adversarial expert's near-zero accuracy, for example \texttt{B:1/6}, while the best expert begins to stand out, for example \texttt{C:4/6}. By step $10$, the signal is clearer, as in \texttt{A:5/11, B:1/11, C:7/11, D:4/11}; by step $50$, it is unambiguous, as in \texttt{A:19/51, B:3/51, C:31/51, D:24/51}. The model uses this information primarily for exclusion, following the best expert while ignoring the adversarial one. Across $30$ instances, the final note correctly identifies the adversarial expert as the worst in $60\%$ of parseable cases. However, the visible notes in our runs do not explicitly describe an inversion strategy, namely, predicting the opposite of the adversarial expert.

\subsubsection{History-Free Note Variant and Cross-Model Comparison}
\label{sec:llm_model_compare}

We next evaluate a \textit{history-free variant} of the note protocol for cross-model comparison. This variant modifies only the note protocol: the raw conversation history is not carried across rounds, and the note is the model's only explicit state. Each round consists of two independent API calls. In the prediction call, the model receives the system prompt, the note from the previous round, and the current expert predictions, and then outputs its prediction. In the feedback call, the model receives the same inputs together with the revealed outcome, and outputs an updated note. 

For the $-$note baseline, we keep the standard multi-turn no-note protocol used throughout the paper: the model outputs only predictions, the true outcomes are appended to the conversation history, and the full interaction history is retained across rounds. Thus, the comparison asks whether a model-generated note, used as the only compressed state, can match or improve upon the model's ability to use raw conversation history without an explicit note.

This variant is useful for two reasons. First, it directly tests whether a natural-language note can serve as a compressed online-learning state. Second, it enables matched comparisons between models with different context-length capabilities. In the standard multi-turn note protocol, the context grows linearly with the number of rounds because every previous prediction turn, feedback turn, and note update remains in the conversation history. In our implementation, at $T=100$, the conversation history reaches approximately $20{,}000$ tokens for the weather framing and more than $90{,}000$ tokens for the online framing. DeepSeek-V3 can complete these length-$100$ retained-history runs in our setup, but Qwen-3-14B is limited to $40{,}960$ tokens by its positional encoding and exhausts this budget around rounds $42$ to $50$. We therefore use the history-free variant for matched cross-model comparisons over the full length-$100$ horizon.

Table~\ref{tab:stateless_comparison} compares DeepSeek-V3 and Qwen-3-14B under the history-free variant. This comparison does not test whether a note improves performance on top of raw history, as in Section~\ref{sec:llm_main_results}. Instead, it tests whether different LLMs can use a self-generated note as the \emph{only} cross-round state for online prediction.

\begin{table}[h]
\centering
\small
\begin{tabular}{ll cccc}
\toprule
& & \multicolumn{2}{c}{Weather} & \multicolumn{2}{c}{Online} \\
\cmidrule(lr){3-4} \cmidrule(lr){5-6}
Dataset & Model & +note & $-$note & +note & $-$note \\
\midrule
\multirow{2}{*}{Stratified}
& DeepSeek-V3   & $\mathbf{8.1 \pm 6.3}$  & $12.7 \pm 5.9$ & $\mathbf{9.3 \pm 8.0}$  & $13.9 \pm 8.2$ \\
& Qwen-3-14B    & $32.0 \pm 8.9$          & $13.1 \pm 6.3$ & $20.7 \pm 15.3$         & $17.8 \pm 7.6$ \\
\midrule
\multirow{2}{*}{Flat}
& DeepSeek-V3   & $\mathbf{7.4 \pm 5.2}$  & $8.5 \pm 5.0$  & $\mathbf{7.9 \pm 5.3}$  & $8.2 \pm 5.1$  \\
& Qwen-3-14B    & $12.0 \pm 5.3$          & $8.6 \pm 4.3$  & $10.4 \pm 9.4$          & $9.7 \pm 4.7$  \\
\midrule
\multirow{2}{*}{Anti-signal}
& DeepSeek-V3   & $\mathbf{13.3 \pm 6.9}$ & $21.9 \pm 7.0$ & $\mathbf{11.3 \pm 10.3}$ & $17.5 \pm 7.6$ \\
& Qwen-3-14B    & $19.3 \pm 9.9$          & $19.5 \pm 5.5$ & $20.3 \pm 17.7$          & $16.5 \pm 9.1$ \\
\bottomrule
\end{tabular}
\vspace{0.1in}
\caption{Mean regret ($\pm$ 1 std) over $30$ instances of length $T=100$. The $+$note columns use the history-free note variant, where the note is the only cross-round state. The $-$note columns use the standard multi-turn no-note protocol, where the full interaction history is retained but no explicit note is provided.}
\label{tab:stateless_comparison}
\end{table}

The two models exhibit qualitatively opposite responses to the note mechanism.

\paragraph{DeepSeek-V3.}
The note consistently reduces regret across all regimes. DeepSeek-V3 often uses a cumulative-counter format for its notes, such as \texttt{A:45/100, B:94/100, C:69/100, D:77/100}, effectively maintaining a running tally of accuracy. Under the history-free variant, where the note is the only cross-round state, this counter serves as a useful statistic for identifying reliable experts.

\paragraph{Qwen-3-14B.}
In contrast, the note hurts performance for Qwen-3-14B in all six comparisons in Table~\ref{tab:stateless_comparison}. Across the Qwen-3-14B note runs in this table, the model never produces cumulative statistics. Its notes describe only the most recent feedback, for example, \texttt{Expert A was correct, Experts B, C, and D were incorrect}. Such notes overwrite rather than accumulate past information. Since the history-free variant provides no raw conversation history, Qwen-3-14B effectively has only one round of memory despite having access to the note channel. As a result, the note can mislead the model into overreacting to the most recent feedback.

This comparison shows that the ability to use an external scratchpad for online learning is not a generic consequence of giving an LLM a note field. It depends on whether the model can discover an effective compression strategy for the relevant online state. DeepSeek-V3 discovers a cumulative-counting strategy without being instructed to do so, while Qwen-3-14B does not. This difference may be related to differences in model scale and training, but our experiment does not isolate the cause. Understanding which model properties enable such self-discovered compression strategies is an interesting direction for future work.

\subsubsection{Ablation: History Retention and Note Format}
\label{sec:llm_history_vs_compressed}

We next compare two ways of using the note with DeepSeek-V3. In the standard multi-turn note protocol, the model receives both the retained conversation history and the latest note. In the history-free note variant, the model receives only the latest note and the current expert predictions, so the note is the sole cross-round state. This ablation asks whether the natural-language note is sufficient as a compressed state, and what additional role is played by the retained raw history.

Comparing the +note entries across Table~\ref{tab:main_results} and Table~\ref{tab:stateless_comparison}, we find that most conditions are similar between the two settings. The main exception is the anti-signal regime under the online framing: retaining history reduces regret from $11.3$ to $5.0$, a gap of $6.3$. The behavior is consistent with an implicit inversion strategy: with full history, the model can observe that the anti-signal expert is almost always wrong and use this pattern to improve prediction. However, this strategy is not explicitly described in the written note. 

\paragraph{Note format summary.}
Table~\ref{tab:note_formats} summarizes the note formats adopted by DeepSeek-V3 at step $99$ across all conditions, for both the retained-history and history-free settings. The contrast between settings and framings is striking.

\begin{table}[h]
\centering
\small
\begin{tabular}{ll l rrrrr}
\toprule
& & & \multicolumn{5}{c}{Note format at step 99 (out of 30)} \\
\cmidrule(lr){4-8}
Setting & Dataset & Framing & Counter & Round list & Cond.\ ctr & Last-step & Other \\
\midrule
\multirow{6}{*}{\rotatebox{90}{Full history}}
& \multirow{2}{*}{Stratified} & weather & 21 & 0 & 1 & 8 & 0 \\
& & online & 8 & 18 & 0 & 2 & 2 \\
& \multirow{2}{*}{Flat} & weather & 24 & 0 & 0 & 5 & 1 \\
& & online & 8 & 19 & 0 & 3 & 0 \\
& \multirow{2}{*}{Anti-signal} & weather & 23 & 0 & 0 & 6 & 1 \\
& & online & 10 & 17 & 0 & 3 & 0 \\
\midrule
\multirow{6}{*}{\rotatebox{90}{History-free}}
& \multirow{2}{*}{Stratified} & weather & 16 & 0 & 10 & 2 & 2 \\
& & online & 21 & 1 & 0 & 7 & 1 \\
& \multirow{2}{*}{Flat} & weather & 21 & 0 & 7 & 0 & 2 \\
& & online & 27 & 1 & 0 & 2 & 0 \\
& \multirow{2}{*}{Anti-signal} & weather & 15 & 0 & 7 & 6 & 2 \\
& & online & 23 & 2 & 0 & 1 & 4 \\
\bottomrule
\end{tabular}
\vspace{0.1in}
\caption{DeepSeek-V3 note format at step $99$ over $30$ instances per condition. 
``Counter'' is a cumulative accuracy tally, e.g., \texttt{A:45/100, B:94/100}. 
``Round list'' is an appended per-round transcript or compressed pattern summary, e.g., \texttt{Round1: A=1,B=0,C=0,D=0,label=0; A wrong, others correct. Round2: all 0,label=0; all correct.} 
``Cond.\ ctr'' splits accuracy by weather outcome, e.g., \texttt{A: sunny 22/39, rainy 18/61}. 
``Last-step'' records only the latest feedback, e.g., \texttt{A correct; B,C,D wrong}. 
``Other'' includes vague, mixed-format, or malformed notes. 
Full history denotes the retained-conversation protocol; history-free denotes the note-only variant.}
\label{tab:note_formats}
\end{table}

The table shows that note format is primarily driven by prompt framing and the availability of history. With retained history, weather prompts mostly produce cumulative counters, whereas online prompts mostly produce round-by-round transcripts. Once the raw history is removed, online prompts also shift toward counters. This suggests that round listing is used mainly when the conversation history can serve as a fallback; when the note is the only cross-round state, the model is pushed toward a compressed statistic.

The weather framing also reveals a history-free failure mode: in roughly a quarter of final notes, the model writes conditional counters that split expert accuracy by weather outcome instead of maintaining a single overall reliability estimate. These notes preserve partial information, but make the relevant expert ranking less directly available than a standard counter. This failure mode nearly disappears when the raw history is retained. Overall, the same patterns appear across stratified, flat, and anti-signal regimes, suggesting that note format is governed more by framing and memory constraints than by the difficulty of the expert sequence.


\end{document}